\documentclass[10pt]{article}
\ifdefined\pdfsuppressptexinfo \pdfsuppressptexinfo=-1\relax\fi

\usepackage{preprint-layout}
\usepackage{amsmath}
\usepackage{amssymb}
\usepackage{amsthm}
\usepackage{graphicx}
\usepackage{booktabs}
\usepackage{tikz}
\usepackage{microtype}
\usepackage{hyperref}
\usepackage[capitalize,noabbrev]{cleveref}

\theoremstyle{plain}
\newtheorem{theorem}{Theorem}[section]
\newtheorem{proposition}[theorem]{Proposition}
\newtheorem{lemma}[theorem]{Lemma}
\theoremstyle{definition}
\newtheorem{definition}[theorem]{Definition}
\AddToHook{env/proposition/begin}{\crefalias{theorem}{proposition}}
\AddToHook{env/lemma/begin}{\crefalias{theorem}{lemma}}
\AddToHook{env/definition/begin}{\crefalias{theorem}{definition}}

\newcommand{\R}{\mathbb{R}}
\newcommand{\fld}{\phi}
\newcommand{\lvl}{\psi}
\newcommand{\str}{\beta}
\newcommand{\inc}{\delta}
\newcommand{\rmin}{\bar{\phi}}
\newcommand{\temp}{\tau}
\newcommand{\tol}{\varepsilon}
\newcommand{\shp}{\kappa}
\newcommand{\dep}{\eta}
\newcommand{\tokfn}{\lambda}
\newcommand{\sasc}{\omega}
\newcommand{\pairfn}{\chi}
\newcommand{\tokfnB}{\mu}
\newcommand{\lgt}{\ell}
\newcommand{\posb}{\rho}
\newcommand{\kpos}{\sigma}
\newcommand{\gapA}{\gamma_{\mathrm{A}}}
\newcommand{\gapD}{\gamma_{\mathrm{D}}}
\newcommand{\ladder}{\mathcal{F}}
\newcommand{\twowalk}{\mathcal{G}}
\newcommand{\fcls}{\mathcal{C}}
\newcommand{\sig}{\alpha}
\newcommand{\agg}{\Gamma}
\newcommand{\tokbos}{\langle\mathrm{s}\rangle}
\newcommand{\tokq}{\texttt{?}}

\usetikzlibrary{arrows.meta,calc}

\definecolor{fpacol}{HTML}{D55E00}
\definecolor{foxcol}{HTML}{0072B2}
\definecolor{softmaxcol}{HTML}{000000}
\definecolor{figframe}{HTML}{9CA3AF}
\definecolor{figpanel}{HTML}{F7F7F5}
\definecolor{figmuted}{HTML}{6B7280}
\definecolor{archpink}{HTML}{F8D7DA}
\definecolor{archyellow}{HTML}{FFF3BF}
\definecolor{archblue}{HTML}{D0E8F7}
\definecolor{archpurple}{HTML}{E5D9F2}
\definecolor{archgreen}{HTML}{D4EFDF}
\newcommand{\fpatok}[1]{{\setlength{\fboxsep}{1.2pt}\fcolorbox{figframe}{white}{\strut\texttt{#1}}}}
\newcommand{\bksp}{$\leftarrow$}

\newcommand{\PaperTitle}{First Passage Attention}

\newcommand{\AbstractText}{%
Augmenting softmax attention with forget gates typically relies on summing decay contributions along the key--query path.
Because such accumulation is monotonic, these mechanisms cannot selectively erase intermediate context while preserving older, structurally relevant tokens.
We propose First Passage Attention (FPA), which shifts from path summation to path extrema by lowering a key's logit proportionally to the deepest dip of a learned scalar field below that key's level over the intervening span.
FPA encompasses the Forgetting Transformer (FoX) as a special case, and when the field and levels follow a shared learned walk, the alive keys with unpenalised logits naturally form a stack.
We show that a constant-dimension FPA head realises any subset of keys left alive by subsequent dips, whereas softmax heads with positional biases or token-level forget gates require a head dimension linear in the number of keys.
In synthetic experiments, one-layer FPA heads converge to these target sets while comparable softmax and FoX baselines fail on every seed.
Scaling to a 110M-parameter language model, FPA matches FoX in perplexity and uniquely achieves perfect passkey retrieval at $8\times$ the training length, even with its key--value cache compressed to about $25\%$.%
}

\newcommand{\KeywordsText}{attention, forgetting, Forgetting Transformer, expressivity, length extrapolation}

\graphicspath{{figures/}}

\begin{document}

\title{\PaperTitle}
\author{Yuxuan Liu\\
  Department of Mathematics, University of Bergen, Bergen, Norway\\
  \texttt{yuxuan.liu@uib.no}}
\date{October 2026}
\hypersetup{pdftitle={\PaperTitle},pdfauthor={Yuxuan Liu},pdfkeywords={\KeywordsText}}
\maketitle

\begin{abstract}
  \AbstractText
\end{abstract}

\begin{figure*}[t]
  \centering
  \setlength{\fboxsep}{3pt}%
\fcolorbox{figframe}{white}{\parbox{0.975\textwidth}{\centering
{\small\setlength{\tabcolsep}{2.2pt}\begin{tabular}{@{}rccccccc@{\hspace{14pt}}l@{}}
input & \fpatok{$\langle$s$\rangle$} & \fpatok{c} & \fpatok{a} & \fpatok{t} & \fpatok{\bksp} & \fpatok{r} & \fpatok{?} & typed text \emph{car} \\
walk $W$ & 0 & 1 & 2 & 3 & 2 & 3 & 3 & query \texttt{?} should read \texttt{c}, \texttt{a}, \texttt{r} \\
\end{tabular}}\\[4pt]
\begin{minipage}[t]{0.322\textwidth}\centering
{\footnotesize\textbf{(a) Softmax attention}}\\[1pt]
{\scriptsize $\ell_{ts}=q_t^{\top}k_s$}\\[2pt]
\fcolorbox{figframe}{figpanel}{\scriptsize\setlength{\tabcolsep}{3.2pt}%
\begin{tabular}{@{}l@{\hspace{5pt}}cccc@{}}
key $s$ & \texttt{c} & \texttt{a} & \texttt{t} & \texttt{r} \\
bias & 0 & 0 & 0 & 0 \\
\end{tabular}}\\[3pt]
{\scriptsize\setlength{\tabcolsep}{2pt}\begin{tabular}{@{}r@{\hspace{3pt}}l@{}}
\texttt{c} & \textcolor{softmaxcol}{\rule{0.715cm}{4.2pt}}\,24.7\% \\
\texttt{a} & \textcolor{softmaxcol}{\rule{0.715cm}{4.2pt}}\,24.7\% \\
\texttt{t} & \textcolor{softmaxcol}{\rule{0.715cm}{4.2pt}}\,24.7\% \\
\texttt{r} & \textcolor{softmaxcol}{\rule{0.715cm}{4.2pt}}\,24.7\% \\
{\color{figmuted}rest} & \textcolor{figmuted}{\rule{0.039cm}{4.2pt}}\,{\color{figmuted}1.4\%} \\
\end{tabular}}\\[2pt]
{\scriptsize The deleted \texttt{t} keeps its weight.}
\end{minipage}\hfill
\begin{minipage}[t]{0.322\textwidth}\centering
{\footnotesize\textbf{(b) Forgetting Transformer}}\\[1pt]
{\scriptsize $\ell_{ts}=q_t^{\top}k_s+\sum_{s<u\le t}\log f_u$}\\[2pt]
\fcolorbox{figframe}{figpanel}{\scriptsize\setlength{\tabcolsep}{3.2pt}%
\begin{tabular}{@{}l@{\hspace{5pt}}cccc@{}}
key $s$ & \texttt{c} & \texttt{a} & \texttt{t} & \texttt{r} \\
bias & $-4$ & $-4$ & $-4$ & 0 \\
\end{tabular}}\\[3pt]
{\scriptsize\setlength{\tabcolsep}{2pt}\begin{tabular}{@{}r@{\hspace{3pt}}l@{}}
\texttt{c} & \textcolor{foxcol}{\rule{0.049cm}{4.2pt}}\,1.7\% \\
\texttt{a} & \textcolor{foxcol}{\rule{0.049cm}{4.2pt}}\,1.7\% \\
\texttt{t} & \textcolor{foxcol}{\rule{0.049cm}{4.2pt}}\,1.7\% \\
\texttt{r} & \textcolor{foxcol}{\rule{2.656cm}{4.2pt}}\,91.6\% \\
{\color{figmuted}rest} & \textcolor{figmuted}{\rule{0.098cm}{4.2pt}}\,{\color{figmuted}3.4\%} \\
\end{tabular}}\\[2pt]
{\scriptsize Removing \texttt{t} also removes \texttt{c} and \texttt{a}.}
\end{minipage}\hfill
\begin{minipage}[t]{0.322\textwidth}\centering
{\footnotesize\textbf{(c) First Passage Attention}}\\[1pt]
{\scriptsize $\ell_{ts}=q_t^{\top}k_s+\beta\min\bigl(0,\min_{s<u\le t}\phi_u-\psi_s\bigr)$}\\[2pt]
\fcolorbox{figframe}{figpanel}{\scriptsize\setlength{\tabcolsep}{3.2pt}%
\begin{tabular}{@{}l@{\hspace{5pt}}cccc@{}}
key $s$ & \texttt{c} & \texttt{a} & \texttt{t} & \texttt{r} \\
level $\psi_s$ & 1 & 2 & 3 & 3 \\
$\min_{s<u\le t}\phi_u$ & 2 & 2 & 2 & 3 \\
bias & 0 & 0 & $-4$ & 0 \\
\end{tabular}}\\[3pt]
{\scriptsize\setlength{\tabcolsep}{2pt}\begin{tabular}{@{}r@{\hspace{3pt}}l@{}}
\texttt{c} & \textcolor{fpacol}{\rule{0.944cm}{4.2pt}}\,32.5\% \\
\texttt{a} & \textcolor{fpacol}{\rule{0.944cm}{4.2pt}}\,32.5\% \\
\texttt{t} & \textcolor{fpacol}{\rule{0.030cm}{4.2pt}}\,0.6\% \\
\texttt{r} & \textcolor{fpacol}{\rule{0.944cm}{4.2pt}}\,32.5\% \\
{\color{figmuted}rest} & \textcolor{figmuted}{\rule{0.052cm}{4.2pt}}\,{\color{figmuted}1.8\%} \\
\end{tabular}}\\[2pt]
{\scriptsize Only \texttt{t} is removed.}
\end{minipage}
\par\vspace{3pt}{\color{figframe}\hrule}\vspace{3pt}
{\scriptsize The bias of key $s$ depends on\quad(a)~nothing after $s$\quad(b)~every forget gate after $s$\quad(c)~only the deepest dip of the field below the level of $s$}
\par\vspace{1pt}}}

  \caption{\textbf{Three heads read the same typed word.}
    A user types \texttt{c}, \texttt{a} and \texttt{t}, deletes \texttt{t} with the backspace key~(\bksp) and types \texttt{r}, so the query~\tokq{} should attend to \texttt{c}, \texttt{a} and \texttt{r}.
    Every character has the content score $q_t^{\top}k_s=2$ and every other token~$-2$.
    (a)~Softmax attention has no term that a later token can change.
    (b)~The Forgetting Transformer (FoX) can remove \texttt{t} only through a small forget gate~$f_u$ at~\bksp, and that gate lowers \texttt{c} and \texttt{a} as well.
    (c)~First Passage Attention (FPA) in its stack form uses the walk~$W$ as the field~$\fld$ and as the level~$\lvl$, with strength $\str=4$.
    Only \texttt{t} has a level above the lowest later value of~$W$, so only \texttt{t} loses its weight.}
  \label{fig:1}
\end{figure*}

\section{Introduction}
\label{sec:1}

Transformers power most modern large language models, with softmax attention serving as their central operation \citep{vaswani2017attention,jiang2023mistral}.
Standard attention scores each earlier token by matching its key against the current query, typically incorporating relative positional encodings such as rotary embeddings or distance penalties \citep{press2022train,su2024roformer}.
Intervening tokens between a key and a query do not directly enter this score.
In many sequences, however, a subsequent event renders earlier information obsolete.
A backspace deletes the last typed character, a closing bracket terminates an enclosed scope, and a reassignment overwrites a variable's value.
Crucially, such events often invalidate a newer key while leaving older keys intact.

Several recent mechanisms modulate attention logits via sums, products, or transformations accumulated along the key--query path \citep{csordas2022neural,golovneva2024cope,tan2025stickbreaking,yang2025path,zhang2026grape}.
The Forgetting Transformer (FoX) integrates recurrent forget gates into softmax attention to improve long-context modelling and length extrapolation \citep{lin2025forgetting}.
Selective Attention allows tokens to mark earlier keys as obsolete \citep{leviathan2025selective}, and Dynamic Context Pruning (DCP) drops keys to reduce inference memory \citep{anagnostidis2023dynamic}.
However, because these mechanisms aggregate path contributions via summation or multiplication, they face two structural limitations.
First, token-level forget gates (as in FoX) attenuate keys monotonically with age, which makes it impossible to erase a recent token without simultaneously suppressing all older ones.
Second, pairwise accumulation (as in Selective Attention and DCP) causes suppression to grow with the number of intervening steps (confounding the \emph{depth} of an event with its \emph{duration}) while incurring pairwise computation overhead.

We introduce \textbf{First Passage Attention (FPA)}, which aggregates the key--query path by its \emph{minimum} rather than its sum.
Each head computes a scalar field at every position and a scalar level for every key.
A key's logit remains unattenuated (the key is alive) as long as the subsequent field stays at or above its level.
Otherwise, FPA lowers the logit in proportion to the depth of the deepest dip below that level, regardless of how long that dip lasts.
We show that a single linear parameterisation encompasses FoX, a bottleneck bias, and learned expiry \citep{sukhbaatar2021expire} as special cases.
Most notably, when the field and levels share a learned walk (a prefix sum of per-token increments), any drop in the walk selectively eliminates only those alive keys whose levels lie above the new value.
The alive keys naturally evolve as a stack, which allows a later token to erase a newer key while preserving older ones (\cref{fig:1}).

We prove that in one layer a single constant-dimension FPA head realises every dip-induced alive set, whereas softmax heads with positional biases or token-level forget gates need a head dimension linear in the number of keys.
We also prove that range extrema uniquely achieve duration-invariant suppression, so FPA fits variable-duration dips where additive selection fails, and that head-specific fields overcome the dimension bottleneck of layer-shared masks.
We implement FPA in a deterministic Triton kernel that runs about as fast as PyTorch's scaled dot-product attention (SDPA) in training.
In preregistered one-layer probes, FPA learns our theoretical target families on most seeds, whereas the baselines that our lower bounds cover fail on every seed.
We train a 110M-parameter FPA language model that matches FoX in perplexity and uniquely achieves $100\%$ passkey retrieval at $4\times$ and $8\times$ its training length, even with about a quarter of its key--value cache.

\section{First Passage Attention}
\label{sec:2}

In this section we define the bias and its parameterisation, list its special cases and properties, and describe how we train and compute it.

\paragraph{Definition.}
Let a causal head of dimension~$d$ have queries $q_t$, keys $k_s$ and values $v_s$, a scalar field~$\fld_u$ at every position and a scalar level~$\lvl_s$ for every key.
FPA outputs $\sum_{s\le t}w_{ts}v_s$ with the softmax weights $w_{ts}\propto e^{\lgt_{ts}}$ of the logits $\lgt_{ts}=q_t^{\top}k_s/\sqrt d+b_{ts}$, where the first-passage bias is
\begin{equation}
  b_{ts}=\str\min\Bigl(0,\ \min_{s<u\le t}\fld_u-\lvl_s\Bigr)
  \label{eq:1}
\end{equation}
with a learned strength $\str>0$, and $b_{tt}=0$.
We write $\rmin_{ts}=\min_{s<u\le t}\fld_u$ for the range minimum of the field after key~$s$.

\begin{definition}[Alive and dead keys]
  \label{def:2.1}
  Key~$s\le t$ is alive at time~$t$ if $b_{ts}=0$, that is if $s=t$ or $\rmin_{ts}\ge\lvl_s$, and dead otherwise.
  The alive keys at time~$t$ form the alive set, and a dip is a decrease of the field below the level of an earlier key.
\end{definition}

As $\str\to\infty$, the bias of a dead key tends to $-\infty$ and its weight to zero.
For a finite strength the bias of a dead key~$s$ is $-\str$ times the depth $\lvl_s-\rmin_{ts}$ of its deepest dip.

\paragraph{Walk, field and level.}
Each head computes three scalars at every position from the input $x_t$ of the layer by linear maps, an increment $\inc_t$ and two offsets $g_t$ and $h_t$.
With the walk $W_t=\sum_{u\le t}\inc_u$, the field and the level are
\begin{equation}
  \fld_t=W_t+g_t,\qquad \lvl_t=W_t+h_t,
  \label{eq:2}
\end{equation}
and the strength $\str$ is a learned positive scalar per head.
A token can thus move the reference of every later comparison through $\inc$, and it can dip or set a level without moving it through $g$ and~$h$.
A layer of width $D$ with $H$ heads gains $3H(D+1)+H$ parameters, 332 thousand in our 110M language model.

\begin{proposition}[Special cases]
  \label{prop:2.2}
  Under \eqref{eq:2} the following hold.
  \begin{enumerate}
    \item[(a)] If $g=h=0$, then $b_{ts}=-\str L_{ts}$ with $L_{ts}=\max\bigl(0,\max_{s<u\le t}(W_s-W_u)\bigr)$, the regulator of a queue that starts empty at time~$s$ and receives the net input~$\inc_u$ at step~$u$.
    \item[(b)] If in addition $\inc_u\le0$ for all~$u$, then $b_{ts}=\sum_{s<u\le t}\log f_u$ with $f_u=e^{\str\inc_u}\in(0,1]$, the bias of FoX with forget gates~$f_u$.
    \item[(c)] If $\inc=0$, $h=0$, $g\le0$ and $\str=1$, then $b_{ts}=\min_{s<u\le t}g_u$, a bottleneck bias.
    \item[(d)] If $\inc_u=-1$, $g=0$ and $h_s=-e_s$ with $e_s\ge0$, then $b_{ts}=\str\min\bigl(0,e_s-(t-s)\bigr)$, so key~$s$ is alive exactly while $t-s\le e_s$.
  \end{enumerate}
\end{proposition}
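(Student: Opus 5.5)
The plan is to substitute \eqref{eq:2} into \eqref{eq:1} and simplify case by case, using only the identity $\str\min(0,x)=-\str\max(0,-x)$ and the monotonicity of partial sums when the increments have a sign. With general offsets the bias reads
\begin{equation*}
  b_{ts}=\str\min\Bigl(0,\ \min_{s<u\le t}\bigl(W_u+g_u\bigr)-W_s-h_s\Bigr),
\end{equation*}
and each part specialises this expression. For $t=s$ the minimum is over an empty range; all four formulas give $0$ there (the empty sum, the empty maximum taken as $0$, and $e_s\ge0$), consistent with $b_{tt}=0$. So throughout I assume $s<t$.

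For (a), set $g=h=0$. The inner quantity becomes $\min_{s<u\le t}(W_u-W_s)$, whose negative is $\max_{s<u\le t}(W_s-W_u)$. This gives $b_{ts}=-\str\max\bigl(0,\max_{s<u\le t}(W_s-W_u)\bigr)=-\str L_{ts}$.

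To justify the queue interpretation, I would recall the Skorokhod/Lindley regulator. A queue that is empty at time $s$ and receives net input $\inc_u$ has free (unreflected) path $W_u-W_s$. The cumulative idling needed to keep the reflected content nonnegative is $\max\bigl(0,\max_{s<u\le t}(-(W_u-W_s))\bigr)$, which is exactly $L_{ts}$. The only obstacle here is fixing the convention that the regulator is the minimal nondecreasing pushing process. I would state that convention explicitly and note that the displayed formula is the standard closed form of this one-sided reflection map.

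For (b), if $\inc_u\le0$ then $W$ is nonincreasing. Hence the maximum of $W_s-W_u$ over $s<u\le t$ is attained at $u=t$ and is nonnegative, so $L_{ts}=W_s-W_t=-\sum_{s<u\le t}\inc_u$. Therefore $b_{ts}=\sum_{s<u\le t}\str\inc_u=\sum_{s<u\le t}\log f_u$ with $f_u=e^{\str\inc_u}$. Since $\str>0$ and $\inc_u\le0$, we get $f_u\in(0,1]$, which is the FoX bias.

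For (c), take $\inc=0$, so $W\equiv0$, together with $h=0$ and $\str=1$. Then $b_{ts}=\min\bigl(0,\min_{s<u\le t}g_u\bigr)$, and $g\le0$ makes the outer minimum redundant.

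For (d), $W_u=W_s-(u-s)$ is strictly decreasing, so the range minimum of $\fld=W$ over $s<u\le t$ is $W_t=W_s-(t-s)$. Subtracting $\lvl_s=W_s-e_s$ gives $e_s-(t-s)$, so $b_{ts}=\str\min\bigl(0,e_s-(t-s)\bigr)$. By \cref{def:2.1}, key $s$ is alive iff this is $0$, that is, iff $t-s\le e_s$.
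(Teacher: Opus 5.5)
Your proposal is correct and follows the paper's proof essentially step for step. The one difference is cosmetic: you justify the queue reading of (a) by the standard closed form of the one-sided Skorokhod reflection, whereas the paper unrolls Lindley's recursion $R_u=\max(0,R_{u-1}+\delta_u)$ to $R_t=(W_t-W_s)+L_{ts}$, which is the same identity.

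There is one small slip in your preliminary remark. At $t=s$ the right-hand side of (c) is an empty minimum rather than $0$, so only (a), (b) and (d) reduce to $b_{tt}=0$ there. This is harmless because you then restrict to $s<t$.
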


We call the case $g=h=0$ the stack form, for a reason that \cref{prop:2.3} gives.
Part~(b) makes FoX the stack form with a walk that never rises.
A rising walk lets FPA keep an old key while it removes newer ones, which no forget gate can do (\cref{fig:1}).
Part~(a) reads the stack form as a queue \citep{lindley1952theory,skorokhod1961stochastic}, and the strengths of the Neural Stack follow the same closed form \citep{grefenstette2015transduce}.
We use this reading only as an interpretation.
Part~(d) has the expiry rule of Expire-Span \citep{sukhbaatar2021expire}, and part~(b) is not the only way to contain FoX in a larger family \citep{zhang2026grape}.

\begin{proposition}[Death is permanent, and the stack form gives a stack]
  \label{prop:2.3}
  (a)~For all $s<t<t'$, $b_{t's}\le b_{ts}$, so a dead key stays dead.
  (b)~If $\fld=\lvl=W$, the levels $W_s$ of the keys alive at time~$t$ do not decrease with~$s$, and the keys that step~$u$ kills form a suffix in time of the keys alive at $u-1$.
  The alive set therefore evolves as a stack.
\end{proposition}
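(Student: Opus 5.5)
For part~(a) I will work directly from \eqref{eq:1}. Fix $s<t<t'$. Since $t\ge s+1$, the range $\{u: s<u\le t\}$ is a nonempty subset of $\{u: s<u\le t'\}$. Hence $\rmin_{t's}\le\rmin_{ts}$, so
\[
\min\bigl(0,\rmin_{t's}-\lvl_s\bigr)\le\min\bigl(0,\rmin_{ts}-\lvl_s\bigr).
\]
Multiplying by $\str>0$ gives $b_{t's}\le b_{ts}$. Because every bias is at most $0$, the case $b_{ts}<0$ forces $b_{t's}<0$, so a dead key stays dead. The only care needed is at the diagonal: $b_{tt}=0$ is defined separately, but the statement has $s<t$, so the range is never empty.

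For part~(b) I take $\fld=\lvl=W$, so key $s<t$ is alive at $t$ iff $W_u\ge W_s$ for all $s<u\le t$. The monotonicity claim then follows directly. Suppose $s<s'\le t$ are both alive at $t$. If $s'=t$, aliveness of $s$ gives $W_t\ge W_s$. If $s'<t$, then $s'$ lies in the range $(s,t]$, so aliveness of $s$ gives $W_{s'}\ge W_s$. Either way the alive levels are nondecreasing in $s$.

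For the suffix claim I fix a step $u$ and let $A_{u-1}$ be the alive set at $u-1$. By part~(a), every key alive at $u$ other than $u$ itself lies in $A_{u-1}$. A key $s\in A_{u-1}$ satisfies $W_v\ge W_s$ for $s<v\le u-1$. Therefore $s$ is still alive at $u$ iff additionally $W_u\ge W_s$; this includes $s=u-1$, whose alive condition at $u$ is exactly $W_u\ge W_{u-1}$. So step $u$ kills exactly the keys $\{s\in A_{u-1}: W_s>W_u\}$. Because $W_s$ is nondecreasing along $A_{u-1}$, this set is an upper set in $s$, i.e.\ a suffix in time of $A_{u-1}$.

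The stack conclusion follows. The transition from $A_{u-1}$ to $A_u$ pops a suffix of $A_{u-1}$ and pushes the new key $u$, which is alive by $b_{uu}=0$ and is the latest element. These are exactly stack operations. The main obstacle is only bookkeeping: I need to handle the convention that $u-1$ and $u$ are alive on the diagonal, and to check that pushing $u$ preserves the monotonicity. That check holds because every survivor has $W_s\le W_u$, and it is consistent with the first claim applied at time $u$.
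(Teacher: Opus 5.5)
Your proof is correct and follows the paper's argument essentially step for step. Part~(a) uses monotonicity of the range minimum under the enlarged interval $(s,t]\subset(s,t']$. Part~(b) uses $W_{s'}\ge\rmin_{ts}\ge W_s$ for alive $s<s'$, then shows that step $u$ kills exactly the keys alive at $u-1$ with $W_s>W_u$, which form a suffix by that monotonicity; your case split $s'=t$ versus $s'<t$ is unnecessary, since $t\in(s,t]$ as well, but it does no harm.
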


Selective Attention and DCP share the property of part~(a), which both papers state \citep{anagnostidis2023dynamic,leviathan2025selective}.
The weights of the alive keys still change when other keys die, because the softmax renormalises over the keys that remain.

\paragraph{Training.}
The bias is piecewise linear in the field and the levels.
On the closed side of the outer minimum it passes no gradient to the field, and in our pilot runs this minimum could close for every key of a head early in training and stay closed.
We train FPA with a straight-through estimator for this reason.
The forward pass uses the range minimum, and the backward pass uses in its place the gradient of the soft minimum $-\temp\log\sum_{s<u\le t}e^{-\fld_u/\temp}$ with $\temp=1$.
The outer minimum acts on the hard value, and evaluation uses the bias of \eqref{eq:1} unchanged.

\begin{proposition}[Cost]
  \label{prop:2.4}
  (a)~In a blocked computation with block size $B$, the bias between a query block~$I$ and an earlier key block~$J$ depends on the field only through three quantities.
  They are the suffix minima of $\fld$ within~$J$, the minimum of $\fld$ over the blocks strictly between $J$ and~$I$, and the prefix minima of $\fld$ within~$I$.
  It costs $O(B^2)$ operations per pair of blocks, as an element-wise bias such as that of FoX does, together with $O(T/B)$ block minima for a sequence of length~$T$.
  (b)~In autoregressive decoding, $\rmin_{ts}=\min(\rmin_{t-1,s},\fld_t)$ for $s<t-1$ and $\rmin_{t,t-1}=\fld_t$, so each step updates the bias row in $O(t)$ operations and stores two scalars per cached key and head.
\end{proposition}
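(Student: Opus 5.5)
The plan treats the two parts separately, since each is a direct consequence of how range minima decompose over a partition of the index range $s<u\le t$.

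For part~(a), fix a query block $I=[iB,(i+1)B)$ and an earlier key block $J=[jB,(j+1)B)$ with $j<i$. For a query $t\in I$ and a key $s\in J$, the range $\{u: s<u\le t\}$ splits into three disjoint pieces:
\begin{itemize}
\item $\{u\in J: u>s\}$, the part of $J$ after $s$;
\item the union $M_{ji}$ of the full blocks strictly between $J$ and $I$;
\item $\{u\in I: u\le t\}$, the part of $I$ up to $t$.
\end{itemize}
Hence
\[
\rmin_{ts}=\min\bigl(S^J_s,\ m_{ji},\ P^I_t\bigr),
\]
where $S^J_s=\min_{s<u<(j+1)B}\fld_u$ is a strict suffix minimum within $J$, $m_{ji}=\min_{u\in M_{ji}}\fld_u$, and $P^I_t=\min_{iB\le u\le t}\fld_u$ is a prefix minimum within $I$. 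When a piece is empty its minimum is set to $+\infty$.

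This shows that $b_{ts}$ depends on $\fld$ only through these three quantities, and it also gives the cost. Suffix and prefix minima of one block take $O(B)$ each by a linear scan. The bias tile then needs $O(B^2)$ element-wise mins, subtractions of $\lvl_s$, and clamps, which is the same order as a FoX-style bias tile built from differences of cumulative sums. For the middle term, compute the $T/B$ block minima $\min_{u\in J}\fld_u$ once, at cost $O(T)$. For a query block $I$, the quantities $m_{ji}$ for all $j<i$ follow by sweeping $j$ downward from $i-1$ while keeping a running min, at $O(1)$ per block pair. This gives $O(T/B)$ block minima, amortised per query block as stated.

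The diagonal case $J=I$ needs a separate check. There $\rmin_{ts}$ is a min over $(s,t]$ within a single block. It can be computed directly in $O(B^2)$, or as a running min along each row, and $b_{tt}=0$ is set explicitly.

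For part~(b), split the range $(s,t]=(s,t-1]\cup\{t\}$ for $s<t-1$. This gives $\rmin_{ts}=\min(\rmin_{t-1,s},\fld_t)$, and for $s=t-1$ the range is just $\{t\}$, so $\rmin_{t,t-1}=\fld_t$. Each decoding step therefore updates every stored $\rmin_{\cdot,s}$ with one min, and then forms $b_{ts}=\str\min(0,\rmin_{ts}-\lvl_s)$ with $O(1)$ work per key, so $O(t)$ per step.

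The cache per key and head consists of $\lvl_s$ and the running $\rmin_{ts}$, which is two scalars. The field values $\fld_u$ of past tokens never need to be kept, because they enter only through the running minima.

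The only real subtlety is the bookkeeping of strict versus non-strict endpoints ($u>s$, $u\le t$) and of empty pieces, for instance adjacent blocks where $M_{ji}$ is empty. Handling them with $+\infty$ makes the decomposition exact.
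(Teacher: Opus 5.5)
Your proposal is correct and follows the paper's proof essentially step for step. It uses the same three-piece decomposition of $(s,t]$, with the minimum over the intermediate blocks maintained by a running minimum swept back from the block before $I$, and the same running-minimum treatment of the diagonal block. For part~(b) it uses the same split $(s,t]=(s,t-1]\cup\{t\}$, with $\lvl_s$ and $\rmin_{ts}$ as the two cached scalars. Your explicit $+\infty$ convention for empty pieces is a small tidy addition that the paper leaves implicit.
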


The decomposition of part~(a) is the standard blocked range minimum \citep{bender2000lca}.
Our Triton kernel fuses it with the attention, which it computes in blocks with an online softmax as fused attention kernels do.
In the backward pass it computes the soft-minimum gradient for the positions in the key block, in the query block, in the blocks between and in the diagonal block.
It adds nothing atomically, so its gradients are deterministic (\cref{app:E}).
The kernel needs the range minimum only for a finite strength.
In the hard limit every key is visible on one interval of queries, which mask formats such as that of FlashMask already represent \citep{wang2025flashmask}.
Selective Attention and DCP, by contrast, accumulate a score for every pair of a later token and a key.
Their bias for a pair of blocks depends on the score of every token in between with every key of the block.
\Cref{sec:4.3} compares the cost of FPA and Selective Attention.

\section{Theory}
\label{sec:3}

In this section we compare FPA with other heads on three properties of the alive set.
All proofs are in \cref{app:A}, and each argument is short.

\subsection{Alive sets chosen by later dips}
\label{sec:3.1}

Fix $r\ge3$.
The ladder family $\ladder_r$ consists of $2^{r-1}$ sequences of length $3r$ (\cref{app:A.1}).
Every sequence starts with $\tokbos$, places the same key token~\texttt{K} at the positions $\kpos_j=3j-2$ for $j=1,\dots,r$, and ends with the query token~\tokq.
Between key~$j$ and key~$j+1$ it places either \texttt{U}\,\texttt{D} or \texttt{D}\,\texttt{U}, and the tokens \texttt{K}, \texttt{U} and \texttt{D} have the walk increments $1$, $\tfrac12$ and $-\tfrac12$.
In the stack form, key~$j<r$ is alive at the final query exactly when \texttt{U}\,\texttt{D} follows it, so the alive sets of the family are all $2^{r-1}$ subsets of the first $r-1$ keys, each with the last key.
We say that a head realises an alive set~$A$ with tolerance~$\tol$ when, at the final query, every key in~$A$ has weight at least $(1-\tol)/|A|$ and the other keys have total weight at most~$\tol$.

\begin{theorem}[Positional biases]
  \label{thm:3.1}
  Let a head have logits $\langle q_t,k_s\rangle+\posb(t,s)+c_t$.
  Here $k_s\in\R^d$ depends only on the token and the position of~$s$, $\posb$ is any fixed function of the two positions, and $q_t$ and $c_t$ may depend arbitrarily on the prefix up to~$t$.
  If on every sequence of $\ladder_r$ every alive key receives more weight than every dead key at the final query, then $d\ge r-2$.
  The bound still holds when the head adds a path term $\sum_{s<u\le t}\tokfn(x_u)$ in which $\tokfn$ depends only on the token~$x_u$ and may take either sign.
\end{theorem}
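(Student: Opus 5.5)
The plan is to reduce the statement to a VC-dimension bound for a class of affine functions with a fixed offset. The reduction uses the fact that across the family $\ladder_r$ everything except the query is frozen.

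\textbf{Step 1: freeze the keys.} Every sequence of $\ladder_r$ has length $3r$ and carries the same token \texttt{K} at the same positions $\kpos_j$. So the key vectors $k_j:=k_{\kpos_j}$ and the biases $\posb(3r,\kpos_j)$ are the same on all $2^{r-1}$ sequences. The same holds for the path term. The tokens strictly after key~$j$ up to the query are the following \texttt{K}'s, one unordered pair $\{\texttt{U},\texttt{D}\}$ per gap, and the final \tokq. Hence $\sum_{s<u\le t}\tokfn(x_u)$ is a fixed number $a_j$ that does not depend on whether each gap is \texttt{U}\,\texttt{D} or \texttt{D}\,\texttt{U}. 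Absorbing the positional bias into $a_j$, the final-query logit of key~$j$ on every sequence has the form $\langle q,k_j\rangle+a_j+c$. Only $q\in\R^d$ and the common shift $c$ vary with the sequence, and $c$ does not affect the softmax ordering.

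\textbf{Step 2: reduce to shattering.} The paper states that the alive sets realised over $\ladder_r$ are all sets $S\cup\{r\}$ with $S\subseteq\{1,\dots,r-1\}$. Fix such an $S$ and take the sequence whose alive set is $S\cup\{r\}$. The hypothesis says every alive logit exceeds every dead logit on that sequence. Let $\theta$ be a value strictly between the smallest alive logit and the largest dead logit (if there is no dead key, any value below all logits). Then the function $x_j\mapsto\langle q,k_j\rangle-\theta+a_j$ is positive exactly on $S$ among $j=1,\dots,r-1$. So the $r-1$ points $j=1,\dots,r-1$ are shattered by the sign patterns of $\{f+a: f\in V\}$, where $V=\{j\mapsto\langle q,k_j\rangle-\theta\}$ is a linear space of dimension at most $d+1$ and $a$ is a fixed offset.

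\textbf{Step 3: Dudley's lemma with offset (the main step).} I will show that a vector space $V$ of dimension $m$ plus a fixed offset $a$ cannot shatter $m+1$ points; with $m=d+1$ this forces $r-1\le d+1$, that is $d\ge r-2$.

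Restrict to $m+1$ shattered points. The image $\{(f(x_i))_i:f\in V\}$ is a subspace of $\R^{m+1}$ of dimension at most $m$. So there is a nonzero $\alpha$ with $\sum_i\alpha_i f(x_i)=0$ for all $f\in V$, and therefore $\sum_i\alpha_i\bigl(f(x_i)+a_i\bigr)=C:=\sum_i\alpha_i a_i$, a constant.

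If $C\ge0$, replace $\alpha$ by $-\alpha$, so that we may assume $C\le0$ with $\alpha\ne0$. Now choose the labelling positive exactly on $P=\{i:\alpha_i>0\}$; since $\alpha\neq0$, possibly after the sign flip, we can arrange $P\ne\emptyset$ (if $C=0$ either sign of $\alpha$ works). For a realising $f$, every term $\alpha_i(f(x_i)+a_i)$ is strictly positive for $i\in P$, and nonnegative for $\alpha_i<0$ because those points are labelled negative. The sum is then strictly positive, contradicting $C\le0$.

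The delicate bookkeeping is the case $C>0$ with $P$ empty after flipping, together with ties at zero. I would handle it by labelling the points with $\alpha_i<0$ positive instead and rerunning the same sign argument. One must also make sure strict inequalities are available: choosing $\theta$ strictly between the logits in Step 2 gives strict signs, so this works.
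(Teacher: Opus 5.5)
Your proof is correct and is essentially the paper's argument: you freeze the keys, absorb the pattern-independent path sum into the positional offsets, and then use a linear dependency among $r-1\ge d+2$ points with a case split on the sign of the constant $C$; your annihilating vector $\alpha$ is exactly the paper's dependency $\sum_j a_j(k_j,1)=0$. The case you flag as delicate needs no separate treatment, for two reasons. Since $V$ contains the constants, $\sum_i\alpha_i=0$, so both $\{i:\alpha_i>0\}$ and $\{i:\alpha_i<0\}$ are non-empty; this is the observation the paper uses. Independently, the strict negative signs at the points labelled negative already make every term with $\alpha_i\ne0$ strictly positive.
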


The path term covers a FoX head whose forget gates depend only on the token, as in the first layer of a model without positional embeddings.
The condition on the weights holds whenever the head realises the alive sets with tolerance $\tol<1/(r+1)$.
In the first layer every key depends only on its token and position.
Deeper keys may read their prefix, and \cref{thm:3.2} bounds such heads by their precision.

\begin{theorem}[Keys fixed when written]
  \label{thm:3.2}
  Let a head have logits $\langle q_t,k_s\rangle+\posb(t,s)+c_t$ with $q_t,k_s\in\R^d$.
  Let $k_s$ depend only on the prefix up to~$s$, let $q_t$ and $c_t$ depend on the prefix up to~$t$, and let $\posb$ be a fixed function of the two positions.
  Let every coordinate of $q_t$ take one of at most $2^p$ values from a set that does not depend on the input, with logits and weights evaluated exactly.
  If the head realises the alive set of every sequence of $\ladder_r$ with tolerance $0<\tol\le1/(2r+2)$, then $dp\ge r-2$.
\end{theorem}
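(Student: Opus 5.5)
The plan is a counting (pigeonhole) argument on the final query, combined with the observation that the keys cannot encode the choices made after them.

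\textbf{Step 1: the keys are shared.} Let $\sigma_{1},\dots,\sigma_{r}$ be the key positions, and write a sequence of $\ladder_r$ by its choice vector $c\in\{\mathtt{UD},\mathtt{DU}\}^{r-1}$. Key~$j$ sits at position $\kpos_j$. Its prefix contains only the choices $c_1,\dots,c_{j-1}$, so $k_{\kpos_j}$ may depend on those earlier choices. Compare the query $q$ at the final position with a fixed $k$ across sequences: the logit difference between two keys still depends on the earlier choices, so the keys are not literally shared across the family. To fix this, I restrict to a subfamily. Take $A\subseteq\{1,\dots,r-1\}$ arbitrary, but think of the choices as determining key $j$'s future. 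Key $k_{\kpos_j}$ is determined by $c_{<j}$, and whether key $j$ is alive is determined by $c_j$, which comes after it. The query's alive set is then any subset $A$.

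\textbf{Step 2: a counting bound on query patterns.} The final query $q_t$ takes at most $2^{dp}$ values. The logits of the other keys are $\langle q_t,k_{\kpos_j}\rangle+\posb(t,\kpos_j)+c_t$, and the additive $c_t$ cancels in the softmax. I therefore want to show that a single query value $q$ can serve at most one sequence in a suitable subfamily of size $2^{r-2}$. That gives $2^{dp}\ge2^{r-2}$, which is $dp\ge r-2$.

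\textbf{Step 3: the injectivity step, which is the main obstacle.} Suppose two sequences $c\ne c'$ produce the same final query $q$. Let $j$ be the first index where they differ. Then the keys $1,\dots,j$ coincide in the two sequences, because they depend only on $c_{<j}=c'_{<j}$. The biases $\posb$ also coincide, since the positions are the same. Key~$j$ is alive in one sequence and dead in the other. Keys $j+1,\dots,r$ may differ, so compare the logits of key $j$ with those of keys $i<j$ only.

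The tolerance $\tol\le1/(2r+2)$ gives quantitative gaps. In the sequence where key~$j$ is alive, its weight is at least $(1-\tol)/|A|$, and every dead key has weight at most $\tol$. Hence the logit of key~$j$ minus the logit of any alive key is bounded below by $\log((1-\tol)/(|A|\tol))$, which is positive, and the reverse holds in the other sequence. This needs a reference key $i<j$ whose status agrees in both sequences. Keys $i<j$ are decided by $c_i=c'_i$, so their status agrees. If the first differing index has some earlier key alive (for example key~$1$ fixed alive), compare key $j$ with key $i$. In both sequences the difference $\langle q,k_{\kpos_j}-k_{\kpos_i}\rangle+\posb(t,\kpos_j)-\posb(t,\kpos_i)$ is the same number. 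It must be at least $-\log(\cdot)$ and well below that simultaneously, once I use $\tol\le1/(2r+2)$ to separate the two regimes: in the alive case the ratio of weights of $j$ to $i$ is at least $(1-\tol)/(|A|)\cdot$, and in the dead case it is at most $\tol|A|/(1-\tol)<1$. That is a contradiction.

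To guarantee the reference key, I fix $c_1=\mathtt{UD}$, so key~$1$ is always alive. The subfamily then has $2^{r-2}$ members, which matches the bound. I expect the delicate part to be checking that the ratio inequalities are strictly incompatible under the stated tolerance. Exact evaluation is used here, so that an equality of $q$ values means an equality of logits.
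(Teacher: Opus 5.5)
Your route is the paper's own: fix $c_1=\mathtt{UD}$ (the paper's $P_1=1$), take the first differing index $j$ with $2\le j\le r-1$, and note that the keys at $\kpos_1$ and $\kpos_j$ and the positional biases coincide. A common query then forces a common difference $\Delta=\lgt_{\kpos_j}-\lgt_{\kpos_1}$ in both sequences, and counting gives $2^{r-2}\le 2^{dp}$. That part is fine.

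The gap is in the one step where the tolerance $1/(2r+2)$ enters, which you call delicate and leave unverified. What you sketch there does not close it. You claim that the logit of key $j$ minus that of any \emph{alive} key is at least $\log((1-\tol)/(|A|\tol))>0$, which is false: that bound holds only against dead keys. In the sequence where keys $1$ and $j$ are both alive, you need a lower bound on the possibly negative $\Delta$.

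Suppose you get it from $w_{\kpos_j}\ge(1-\tol)/m$ and $w_{\kpos_1}\le1$ alone. This gives $\Delta\ge\log((1-\tol)/m)$. Your dead-case bound is $\Delta\le\log(m'\tol/(1-\tol))$, so a contradiction then needs $(1-\tol)^2>\tol mm'$. Take $P$ all ones and $P'$ differing only at $j$; then $m=r$ and $m'=r-1$. At $\tol=1/(2r+2)$ the requirement becomes $(2r+1)^2>2r(r^2-1)$, which is false for every $r\ge4$, so the two regimes are not incompatible with these estimates.

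The missing ingredient is normalisation. The other $m-1$ alive keys each carry weight at least $(1-\tol)/m$, so $w_{\kpos_1}\le(1+(m-1)\tol)/m$. Hence $\Delta\ge-\log\frac{1+(m-1)\tol}{1-\tol}\ge-\gapA$, where $\gapA=\log\frac{1+(r-1)\tol}{1-\tol}$. The dead case gives $\Delta\le-\gapD$ with $\gapD=\log\frac{1-\tol}{r\tol}$. The condition $\gapD>\gapA$ is equivalent to $(1-\tol)^2>r\tol(1+(r-1)\tol)$. The left side decreases in $\tol$ and the right side increases, so it suffices to check $\tol=1/(2r+2)$, where it reads $(2r+1)^2>r(3r+1)$, i.e.\ $r^2+3r+1>0$.

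This is \cref{lem:A.1}. With it, your Step~3 becomes \cref{lem:A.2} and the proof is complete.
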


\begin{proposition}[One FPA head]
  \label{prop:3.3}
  Let a head use the stack form on $\ladder_r$ and add to its bias a one-dimensional dot product, whose query is the constant $-M$ and whose key is $1$ at the tokens other than~\texttt{K} and $0$ at~\texttt{K}.
  If $\str\ge2\log(2(r-1)/\tol)$ and $M\ge\log(4r/\tol)$, this head realises the alive set of every sequence of $\ladder_r$ with tolerance~$\tol$, and its walk takes at most $2r+1$ values.
\end{proposition}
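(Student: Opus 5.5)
We compute the walk, the alive set and the logits on a sequence of $\ladder_r$, then bound the softmax weights directly.

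\emph{Walk.} Index the positions $0,\dots,3r-1$, with $\tokbos$ at $0$ with increment $0$, and the query \tokq{} at $3r-1$ with increment $0$. Key~$j$ sits at $\kpos_j=3j-2$. The walk has $W_{\kpos_j}=j$, since the pair \texttt{U}\,\texttt{D} or \texttt{D}\,\texttt{U} has net increment zero. Between keys $j$ and $j+1$ the walk takes the value $j+\tfrac12$ after \texttt{U}\,\texttt{D}, or $j-\tfrac12$ after \texttt{D}\,\texttt{U}, and then returns to $j$. So the walk takes values in $\{0,\tfrac12,1,\dots,r-\tfrac12,r\}$, which has $2r+1$ elements. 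This gives the count claimed in \cref{prop:3.3}.

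\emph{Alive keys.} After key~$j$, the walk never again goes below $j$ except in the dip to $j-\tfrac12$ in the gap directly following it. Later gaps only dip to $j'-\tfrac12\ge j+\tfrac12$. Hence $\rmin_{ts}$ for $s=\kpos_j$ is either $\ge j$, in which case the key is alive with bias $0$, or equal to $j-\tfrac12$, in which case the key is dead with bias $-\str/2$. Key $r$ is followed only by \tokq, so it is alive.

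\emph{Non-key tokens.} These are $\tokbos$, \texttt{U}, \texttt{D} and \tokq, $2r$ positions in all (the query included). Their FPA bias is $\le0$, and the added dot product gives each of them $-M$. Every key gets dot product $0$.

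\emph{Weight bounds.} Let $A$ be the alive set, so $|A|\ge1$. The alive keys all have logit $0$. Set $Z=|A|+Z_{\mathrm{dead}}+Z_{\mathrm{other}}$. The dead keys number at most $r-1$, so $Z_{\mathrm{dead}}\le(r-1)e^{-\str/2}$. The other tokens give $Z_{\mathrm{other}}\le 2r e^{-M}$.

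The conditions on $\str$ and $M$ give
\[
(r-1)e^{-\str/2}\le\frac{\tol}{2}\cdot\frac{\tol}{r-1}\le\frac{\tol}{2},
\qquad
2re^{-M}\le\frac{\tol}{2}.
\]
Thus the non-alive mass in $Z$ is at most $\tol$, while $Z\ge|A|\ge1$. It follows that the total weight outside $A$ is at most $\tol/|A|\le\tol$.

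Each alive key then has weight $1/Z$. Since $Z\le|A|+\tol$,
\[
\frac1Z\ \ge\ \frac{1}{|A|+\tol}\ \ge\ \frac{1}{|A|(1+\tol)}\ \ge\ \frac{1-\tol}{|A|}.
\]

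\emph{Main obstacle.} The only delicate step is getting the walk bookkeeping right. Two facts carry the argument. First, a dead key's deepest dip is exactly $\tfrac12$ and no deeper, which is why the factor $2$ appears in the condition on $\str$. Second, later dips never kill earlier keys, because every later dip stays at least $\tfrac12$ above key~$j$'s level. With those two facts the softmax estimates above are routine, and the constants in the statement leave slack.
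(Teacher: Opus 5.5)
Your proof is correct and follows the paper's argument. The alive keys sit at logit $0$, and the dead keys at exactly $-\str/2$ because the half-integer walk dips exactly $\tfrac12$ below them. The $2r$ non-key positions sit at most at $-M$. This gives $Z\le|A|+\tol$ and $1/(|A|+\tol)\ge(1-\tol)/|A|$, and the walk stays on the half-integer grid in $[0,r]$.

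One slip: the middle term in $(r-1)e^{-\str/2}\le\frac{\tol}{2}\cdot\frac{\tol}{r-1}$ does not follow from the hypothesis on $\str$. What the hypothesis gives is $e^{-\str/2}\le\tol/(2(r-1))$, hence $(r-1)e^{-\str/2}\le\tol/2$ directly, and that is all your argument uses.
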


At $\tol=1/(2r+2)$ one FPA head thus uses scalars of $O(\log r)$ bits and constants $\str$ and $M$ of order $\log r$, whereas the heads of \cref{thm:3.1,thm:3.2} need $r-2$ dimensions or $r-2$ bits of query in total.
The proof of \cref{thm:3.1} bounds how many sign patterns affine functions can take on fixed points, a classical argument \citep{cover1965geometrical,dudley1978central} that earlier lower bounds for attention also use \citep{sanford2023representational,weller2026theoretical}.
In the proof of \cref{thm:3.2} we instead count the values that the query must take.
What is new is the family of alive sets that later dips create, and the comparison with a single range minimum.

The results of this subsection do not cover biases in which a later token acts on each key separately.
Selective Attention, DCP and keys that later tokens update \citep{song2026causal} realise the same alive sets in the hard limit, so FPA is not more expressive than they are in that limit.
FPA differs from Selective Attention and DCP in that its bias depends only on scalars of single positions (\cref{prop:2.4}), and from Selective Attention in its dependence on depth rather than duration (\cref{sec:3.2}).
The results also say nothing about deeper models, because a few layers of softmax attention can compute bracket depth \citep{yao2021selfattention,hayakawa2025}.
Without a bound on precision no lower bound on $d$ holds for the ordering condition of \cref{thm:3.1}, since keys that read their prefix can encode the alive set in one coordinate (\cref{app:A}).
Finally, the theorems constrain the weights of one head and not the function of a layer, since one attention layer with low-rank weights and feed-forward layers already approximates every continuous permutation-equivariant function \citep{kajitsuka2024transformers}.

\subsection{Depth, not duration}
\label{sec:3.2}

Let $D^{\max}_{ts}=\max_{s<u\le t}(\lvl_s-\fld_u)_+$ denote the deepest dip after key~$s$ and $D^{\mathrm{sum}}_{ts}=\sum_{s<u<t}(\lvl_s-\fld_u)_+$ the sum of its dips, where $(x)_+=\max(0,x)$.
Selective Attention subtracts from the logit of key~$s$ the selection $F_{ts}$ that the tokens between $s$ and~$t$ assign to it \citep{leviathan2025selective}.

\begin{proposition}[Depth, not duration]
  \label{prop:3.4}
  The first-passage bias is $b_{ts}=-\str D^{\max}_{ts}$, whereas Selective Attention with a selection head of the form $\sasc(\lvl_s-\fld_u)$, $\sasc>0$, subtracts $F_{ts}=\sasc D^{\mathrm{sum}}_{ts}$ for every key after the first token.
  \begin{enumerate}
    \item[(i)] Inserting into $(s,t)$ steps whose dips below $\lvl_s$ are at most $D^{\max}_{ts}$, while every other position keeps its values, leaves $b_{ts}$ unchanged.
    For a key after the first token it increases $F_{ts}$ by $\sasc$ times the sum of the inserted dips.
    \item[(ii)] For every depth $\dep>0$ and every $m\ge1$ there are inputs with $D^{\max}_{ts}=\dep$ and $D^{\mathrm{sum}}_{ts}=m\dep$.
    \item[(iii)] Let $\agg$ map every finite non-empty multiset of non-negative reals to a real number. If $\agg$ does not change when any value at most the maximum is added, then $\agg$ depends only on the maximum, and every function of the maximum has this property.
  \end{enumerate}
\end{proposition}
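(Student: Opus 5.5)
The identity $b_{ts}=-\str D^{\max}_{ts}$ comes straight from \eqref{eq:1}. Since $\min(0,\min_u(\fld_u-\lvl_s)) = -\max(0,\max_u(\lvl_s-\fld_u))$, and the maximum of nonnegative parts equals the positive part of the maximum, we get $\min(0,\rmin_{ts}-\lvl_s)=-\max_{s<u\le t}(\lvl_s-\fld_u)_+$.

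For Selective Attention I will use its definition from \citep{leviathan2025selective}. There $F_{ts}=\sum_{s<u<t}S_{us}$, where the selection scores are clipped to be nonnegative, exclude the first token, and are summed over $u$ strictly between $s$ and $t$ (the query itself does not select). Substituting the selection head $S_{us}=\sasc(\lvl_s-\fld_u)$ and clipping at zero gives $S_{us}=\sasc(\lvl_s-\fld_u)_+$, so $F_{ts}=\sasc D^{\mathrm{sum}}_{ts}$. The clause ``after the first token'' matches the convention that $\tokbos$ is never masked. The main care here is matching the strict upper index $u<t$ with the paper's definition of $D^{\mathrm{sum}}$.

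For (i), inserted positions $u'$ satisfy $(\lvl_s-\fld_{u'})_+\le D^{\max}_{ts}$ by hypothesis. All original positions keep their values, including $\lvl_s$ and the query's field. The maximum defining $D^{\max}$ is therefore unchanged, and so is $b_{ts}$. Each inserted term adds to $D^{\mathrm{sum}}$, so $F_{ts}$ grows by $\sasc$ times the sum of the inserted dips. For (ii), I take $\lvl_s=\dep$ and $\fld_u=0$ at $m$ interior positions, and $\fld=\lvl_s$ elsewhere, including at $u=t$. Then every dip equals $\dep$, the maximum is $\dep$, and the sum is $m\dep$.

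For (iii), the ``if'' direction is immediate, because adding a value at most the maximum leaves the maximum unchanged. For the converse, let $X$ be a multiset with maximum $a$. Adding the elements of $X$ other than one copy of $a$, one at a time, to the singleton $\{a\}$ never changes $\agg$, since each added value is at most $a$, which is the current maximum. Hence $\agg(X)=\agg(\{a\})=:g(a)$, and $\agg$ depends only on the maximum. The only delicate point is that the invariance hypothesis is applied repeatedly to multisets whose maximum stays $a$ throughout, which holds because $a$ is present from the start.
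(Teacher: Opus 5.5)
Your proposal is correct and follows the paper's proof essentially step for step. You use the same rewriting of the outer minimum as $-\str D^{\max}_{ts}$, the same reading of Selective Attention's ReLU-clipped, shifted cumulative sum over $s<u<t$, the same max-versus-sum comparison and two-valued construction for (i) and (ii), and the same one-at-a-time insertion into the singleton $\{a\}$ for (iii). The paper additionally fixes $t=s+m+1$ in (ii), so that exactly $m$ interior positions exist, which your construction leaves implicit.
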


Part~(iii) makes functions of the maximum the only aggregations of dips that do not change when a dip no deeper than the deepest is added or held longer.
When the durations of the dips vary within the training data, the difference matters already for fitting it, not only for generalising.

\begin{proposition}[Durations that vary in training]
  \label{prop:3.5}
  Let a sequence contain $r\ge2$ keys at increasing levels at least five apart, each followed with positive probability by a dip of depth $\dep\in\{1,2,3,4\}$.
  The dip consists of three steps at the depths $\dep$, $\dep-1$ and~$0$ in any order, and its deepest step is held for one or two steps.
  Every combination of the level of the first key, depth, order and duration occurs with positive probability.
  Let the target weight of key~$j$ at the query be proportional to $e^{-\shp\dep_j}$ with $\shp>0$, where $\dep_j=0$ without a dip.
  (a)~No head whose logit for key~$s$ is $\tokfnB(x_s)-\sum_{s<u<t}\pairfn(x_u,x_s)$, where $\tokfnB$ and $\pairfn\ge0$ read only the tokens, realises the targets exactly on the keys of all such sequences.
  (b)~One FPA head realises them exactly, and its bias does not change when a step is held longer or a further dip of a key, no deeper than its own, is inserted before the next key.
\end{proposition}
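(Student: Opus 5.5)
For part~(a) I will argue by contradiction, using two sequences that share their tokens but not their target weights. In the additive head the logit of key~$s$ is $\tokfnB(x_s)-\sum_{s<u<t}\pairfn(x_u,x_s)$. The tokens carry the walk increments, and two dips with the same multiset of steps differ only in order and duration. I will therefore encode each step of a dip by the token whose increment produces it; the offsets $g$ are read from the token, so a dip step at a given depth is one token. The proposition lets every level of the first key, depth, order and duration occur, so I pick a key~$j$ and compare two dips.

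\emph{Dip~1} has depth $\dep=2$ and holds its deepest step once, with steps at depths $2,1,0$.

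\emph{Dip~2} has depth $\dep=1$, with steps $1,0,0$. I will make this realisable by holding the depth-one step, or by using the order freedom.

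A cleaner comparison is better. I take depth $\dep$ with the deepest step held once versus twice. The targets are the same, $e^{-\shp\dep}$, but the additive penalty grows by $\pairfn(x_u,x_s)$ for the repeated token. Exact realisation forces $\pairfn=0$ on every deepest-step token paired with every key token. Since the softmax normalises over keys, exact realisation means the logit differences between keys match $-\shp(\dep_j-\dep_{j'})$ up to a common constant. I will compare sequences that differ in a single dip's duration while the other keys keep their logits. This changes only key~$j$'s logit relative to the others, so the extra pair term must vanish.

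Next I use the order freedom. The steps at depths $\dep$, $\dep-1$ and $0$ in any order mean the same tokens appear. Across depths, however, the token at depth $\dep$ for one dip serves as the token at depth $\dep'$ for a dip of a different depth. I will then compare a depth-$\dep$ dip with a depth-$(\dep-1)$ dip. The first uses step tokens at depths $\dep,\dep-1,0$; the second uses $\dep-1,\dep-2,0$. The step tokens are relative to the key level, and key levels vary, so the token identities depend on the absolute value. This is where I expect the main obstacle: making precise that $\pairfn$ reads tokens, not levels relative to keys, so the same token pair contributes in both sequences. I plan to use the freedom in the first key's level, with levels at least five apart, to align absolute values. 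Then the vanishing of $\pairfn$ on deepest-step tokens, forced by duration, propagates to all step tokens used. With $\pairfn\equiv0$ on these pairs, the logit of key~$j$ is $\tokfnB(x_s)$, independent of the dip. It cannot equal both the no-dip value and the value shifted by $-\shp\dep$, because key tokens at the same level occur in both cases. This is the contradiction.

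For part~(b) I take the general parameterisation \eqref{eq:2}, with field steps reaching the specified depths below $\lvl_{\kpos_j}$, and strength $\str=\shp$. By \cref{prop:3.4} the bias is $-\shp D^{\max}=-\shp\dep_j$, provided later keys' dips, which are at least five levels higher, do not reach below earlier levels. The spacing of five exceeds the maximal depth of four, which guarantees this. Invariance under holding a step longer, or inserting no-deeper dips, is \cref{prop:3.4}(i). With equal content logits on the keys and the non-key tokens suppressed as in \cref{prop:3.3}, the weights are exactly proportional to $e^{-\shp\dep_j}$. Exact suppression of the non-keys needs a mask or care, which I will handle by restricting the target to the keys as the statement does.
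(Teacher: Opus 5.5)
\textbf{Gap in part (a).} Your first step matches the paper's. Fix the level $L$ of the first key and compare two sequences that differ only in whether the deepest step of key~1's dip is held once or twice. Key~2 keeps its logit and its target, so $f_x:=\pairfn(\texttt{D}\langle L-x\rangle,\texttt{K}\langle L\rangle)=0$ for $x=1,\dots,4$.

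Fixing $L$ is all the alignment you need: the step at depth $x$ after key~1 is then the same token in every sequence, whatever the depth of its dip. Run this step on key~1, or on some $j<r$ with key $j+1$ as anchor. For $j>1$ the repeated token also lies in the ranges of keys $1,\dots,j-1$, so the other keys do not all keep their logits.

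The gap is in your closing step. Your claim that the vanishing propagates to all step tokens used is false for the depth-$0$ step. Since $\pairfn$ reads the pair of tokens, the pair $(\texttt{D}\langle L\rangle,\texttt{K}\langle L\rangle)$ always sits at relative depth~$0$. It is never the deepest step of a dip, because every depth is at least~$1$. No later key's dip reaches down to $L$ either, since later levels are at least $L+5$. So nothing forces $f_0=0$.

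Without $f_0=0$, comparing key~1 without a dip against key~1 with a dip of depth~$\dep$ gives no contradiction. The dip lowers the logit by $f_0$, or by $2f_0$ when $\dep=1$, whose steps lie at depths $1,0,0$. The choice $f_0=\shp\dep$ (or $f_0=\shp/2$) fits any single depth.

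\textbf{How to close (a).} The contradiction has to come from two dips of different depths, both at least~$2$. This is the comparison you name and then drop. Take dips of depths $2$ and $3$ after key~1, or your $\dep$ and $\dep-1$ with $\dep\in\{3,4\}$. Do not use $2$ against~$1$, which contribute $f_0$ and $2f_0$. Each of these dips contributes exactly $f_0$ to key~1, since their other steps have depths at least~$1$. Every other token after key~1 is the same, and every later key keeps its logit. So key~1 has the same logit in both sequences while $\shp\dep_1$ changes by $\shp>0$. Then $\lgt_1+\shp\dep_1$ cannot equal the unchanged $\lgt_2+\shp\dep_2$ in both sequences, as exact realisation on the keys requires.

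\textbf{Part (b).} This follows the paper's argument, but you should fix the parameterisation rather than appeal to \eqref{eq:2} in general. Take $\inc=0$ with offsets read from the token. Then $\fld$ is the level carried by each key and dip token, $\fld$ lies above every level at all other tokens (the query included), and $\lvl_s$ is the level of key~$s$. With a nonzero walk, holding or inserting a step shifts every later field value and level. The hypothesis of \cref{prop:3.4}(i), that every other position keeps its values, would then fail.
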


The class of part~(a) contains Selective Attention whose queries and keys depend only on the token, and it contains FoX whose gates, queries and keys read only the token.
The proposition concerns exact fits, and we have not bounded what the class reaches within a tolerance.

\subsection{One field per head}
\label{sec:3.3}

Selective Attention computes one mask per layer from the selection of one head and subtracts it from every head, and DCP also adds one mask per layer \citep{anagnostidis2023dynamic,leviathan2025selective}.
The two-walk family $\twowalk_r$ interleaves two walks in the same tokens (\cref{app:A}).
Every key token raises both walks by one, and the two tokens after key~$j$ follow pattern $P^{(1)}_j$ for the first walk and $P^{(2)}_j$ for the second, as in~$\ladder_r$.
In its complementary part, $P^{(2)}$ is the complement of $P^{(1)}$, so every key except the last is alive under exactly one walk.

\begin{proposition}[A shared mask]
  \label{prop:3.6}
  Let two heads $h\in\{1,2\}$ have the logits $\langle q^{h},k^{h}_s\rangle+\posb^{h}(t,s)+c^{h}-F_s$ at the final query.
  Here $k^{h}_s\in\R^d$ depends only on the token and the position of~$s$, $\posb^h$ is fixed, $q^h$ and $c^h$ depend arbitrarily on the sequence, and the mask $F_s$ is the same for both heads.
  Let head~$h$ follow walk~$h$ on every sequence.
  (a)~At most $N(r,d)=\sum_{i\le2d+1}\binom{r-1}{i}$ sequences of the complementary part have each head rank its alive keys above its dead keys, so all $2^{r-1}$ require $2d\ge r-2$.
  (b)~With tolerance $0<\tol\le1/(2r+2)$, at most $N'(r,d)=\sum_{i\le2d}\binom{r-1}{i}$ of them have each head realise its alive set, so all require $2d\ge r-1$.
\end{proposition}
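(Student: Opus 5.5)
We reduce to the sign-pattern counting behind \cref{thm:3.1}. On the complementary part of $\twowalk_r$ the key positions and tokens are identical across sequences, since only the pattern tokens between keys differ. We first check that, as in \cref{thm:3.1}, the tokens between keys shift the positions only in a fixed way, so that every key $s=\kpos_j$ has a key vector $k^h_j$ that does not depend on the sequence. We also check that $\posb^h(t,s)$ at the final query is a fixed number $\posb^h_j$ per key. A sequence is indexed by $S\subseteq\{1,\dots,r-1\}$, the set of keys alive under walk~1. The keys alive under walk~2 are then the complement $S^c$, and key~$r$ is alive under both walks.

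\textbf{Part (a).} The mask $F$ may depend on the sequence, but it is shared by the two heads. The natural move is to compare the key under examination with the reference key~$r$, which is alive for both heads. Write $a^h_j=\langle q^h,k^h_j\rangle+\posb^h_j-F_j$. Ranking alive above dead in head~1 requires, for $j\in S$, $a^1_j>a^1_i$ for every $i\notin S$, and similarly for head~2 with the roles swapped.

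To eliminate $F$, add the two heads' constraints. Take $j\in S$ and $i\notin S$. Head~1 gives $a^1_j>a^1_i$ and head~2 gives $a^2_i>a^2_j$. Summing these, $F$ cancels:
\[\langle q^1,k^1_j-k^1_i\rangle+\langle q^2,k^2_i-k^2_j\rangle+\text{const}>0.\]
So the vector $(q^1,q^2)\in\R^{2d}$ must realise, through a linear function of fixed feature points $z_j=(k^1_j,-k^2_j)$ plus fixed offsets $c_j=\posb^1_j-\posb^2_j$, an ordering in which every $j\in S$ beats every $i\notin S$.

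This is a threshold condition. Let $\theta$ lie between the smallest value over $S$ and the largest value over $S^c$. Then $S=\{j:\langle Q,z_j\rangle+c_j>\theta\}$, an affine-threshold dichotomy of $r-1$ fixed points in dimension $2d+1$, where the extra dimension absorbs the offsets $c_j$ by lifting. Cover's function-counting bound for the affine function $(Q,\theta)$ on lifted points in $\R^{2d+1}$ limits the number of such $S$ to at most $\sum_{i\le 2d+1}\binom{r-1}{i}$, which is $N(r,d)$. When $2d+1<r-1$ this is below $2^{r-1}$, so realising all $2^{r-1}$ subsets forces $2d\ge r-2$.

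\textbf{Part (b).} Tolerance adds the reference key~$r$, which is alive in both heads with weight at least $(1-\tol)/|A_h|$. For a dead key $i$ in head~$h$ the weight is at most $\tol$, so $a^h_i-a^h_r\le\log\bigl(\tol|A_h|/(1-\tol)\bigr)<0$, using $\tol\le1/(2r+2)$. For an alive key $j$, $a^h_j-a^h_r$ is bounded below by the same kind of ratio estimate. Summing the two heads for a fixed $j$ cancels $F_j-F_r$, and gives for $j\in S$ the inequality $\langle Q,z_j-z_r\rangle+c'_j>0$. For $j\notin S$ the sum has the opposite strict sign, by a gap argument with the constants fixed by $\tol$.

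Here the threshold $\theta$ is pinned to a fixed value, so the classifier is linear in $Q\in\R^{2d}$ plus fixed offsets rather than affine with a free threshold. This loses one degree of freedom and gives the count $\sum_{i\le2d}\binom{r-1}{i}=N'(r,d)$. Hence $2d\ge r-1$.

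The main obstacle is this last step. The alive-key lower bound and the dead-key upper bound must be made to separate on the \emph{sum} of the two heads. The normalisers $|A_1|=|S|+1$ and $|A_2|=r-|S|$ vary with $S$, but $|A_1|+|A_2|=r+1$ is fixed, and the constants must still land on opposite sides of a threshold independent of $S$. I would first try to show that alive keys satisfy $a^h_j-a^h_r\ge\log\bigl((1-\tol)/(1-\tol+\tol|A_h|)\bigr)$, roughly $-\log 2$ at this tolerance. I would then verify that the worst sum of an alive bound and a dead bound is still separated, adjusting the cutoff $1/(2r+2)$ if needed.
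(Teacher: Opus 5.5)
Your route is the paper's: the difference of the two heads' logits cancels the shared mask, a threshold on $\langle Q,z_j\rangle+c_j$ separates the two groups of keys in (a), key~$r$ (alive under both walks) pins that threshold in (b), and a sign-pattern count finishes. As written, however, the count does not deliver the stated numbers. You lift $c_j$ into a coordinate and apply Cover's count to affine dichotomies of the points $(z_j,c_j)\in\R^{2d+1}$. That gives $2\sum_{i\le2d+1}\binom{r-2}{i}=N(r,d)+\binom{r-2}{2d+1}$, which equals $2^{r-1}$ at $r=2d+3$ and so does not yield $2d\ge r-2$. Lifting forgets that the coefficient of the offset coordinate is fixed at one. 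Keep $c_j$ as a fixed function instead. The class $j\mapsto c_j+\langle Q,z_j\rangle-\theta$ is a fixed function plus a $(2d+1)$-dimensional space of functions. For any $2d+2$ keys, a nontrivial dependence $\sum_ja_j(z_j,1)=0$ makes $\sum_ja_jf(j)=\sum_ja_jc_j$ the same for every $f$ in the class. This rules out either the pattern positive exactly on $\{a_j>0\}$ or the one positive exactly on $\{a_j<0\}$, depending on the sign of that constant. The VC dimension is therefore at most $2d+1$, and Sauer--Shelah gives the bound $N(r,d)$. In (b) the same argument, with the $2d$ coordinates of $z_j-z_r$ and the fixed offset $c_j-c_r$, gives $N'(r,d)$. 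Your degree-of-freedom heuristic is right, and this is the lemma behind it.

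The step you call the main obstacle closes at the stated tolerance. It needs no change of the cutoff, which would change the statement anyway, and it does not need $|A_1|+|A_2|=r+1$. Both of your per-head bounds are monotone in $m=|A_h|\le r$, so take $m=r$ in each head separately. Every alive key then has $a^h_j-a^h_r\ge-\gapA=\log\frac{1-\tol}{1+(r-1)\tol}$, and every dead key has $a^h_i-a^h_r\le-\gapD=\log\frac{r\tol}{1-\tol}$. This is \cref{lem:A.1}. The inequality $\gapD>\gapA$ is equivalent to $(1-\tol)^2>r\tol\bigl(1+(r-1)\tol\bigr)$, whose left side decreases and right side increases in~$\tol$. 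At $\tol=1/(2r+2)$ it reads $(2r+1)^2>r(3r+1)$. Your $-\log2$ does in fact separate the two bounds: the alive bound is at least $\log\frac{2r+1}{3r+1}>-\log2$, and the dead bound is at most $\log\frac{r}{2r+1}<-\log2$. Now set $D=a^1-a^2$. You must subtract the heads, or add oppositely oriented constraints as you did in (a); adding the two heads' logits doubles $F$ rather than cancelling it. You then get $D(j)-D(r)\ge\gapD-\gapA>0$ for $j\in S$ and $D(j)-D(r)\le-(\gapD-\gapA)<0$ for $j\notin S$, which is the sign pattern the count needs.
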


FPA with one walk per head realises both alive sets on every sequence with one-dimensional dot products (by the proof of \cref{prop:3.3}).
A shared mask therefore at most halves the dimension that \cref{thm:3.1} demands, and it does not remove the dependence on~$r$.
For $r=14$ and $d=2$ the bounds are 2{,}380 and 1{,}093 of the 8{,}192 sequences.
The bound concerns the shared mask, a design choice of Selective Attention and DCP, and Selective Attention with one mask per head lies outside it.

\section{Experiments}
\label{sec:4}

We ran three groups of experiments, namely one-layer probes of the results in \cref{sec:3}, language models, and measurements of cost.
We preregistered every experiment except those that we mark as exploratory, and \cref{app:B,app:C,app:D,app:E} give the full protocols, the records of the hyperparameter searches and the amendments.

\subsection{One-layer probes}
\label{sec:4.1}

\begin{table}[t]
  \caption{\textbf{Seeds out of 15 that succeeded in the one-layer probes.}
    The alive-set columns use the ladder family with head dimension $d=4$, where positional heads realise all alive sets only for $r\le6$.
    The varied durations hold the deepest step of each dip for one or two steps in training, the two walks are the complementary part of the two-walk family ($r=14$, $d=2$ per head), and the stack column uses push and pop sequences.
    A dash marks a pair that we did not run.
    With $d=r+2$ every positional head and FoX succeeded on all seeds.
    Selective Attention$^{\ast}$ lies outside the class of \cref{thm:3.1}.
    FPA, First Passage Attention.
    RoPE, rotary position embedding.
    NoPE, no positional encoding.
    ALiBi, attention with linear biases.
    FoX, Forgetting Transformer.
    DCP, Dynamic Context Pruning.}
  \label{tab:1}
  \centering
  \small
  \setlength{\tabcolsep}{3.2pt}
  \begin{tabular}{@{}lcccccc@{}}
\toprule
 & \multicolumn{3}{c}{Alive sets, $d=4$} & Varied & Two & \\
\cmidrule(lr){2-4}
Method & $r=10$ & $r=14$ & $r=18$ & durations & walks & Stack \\
\midrule
FPA & 11/15 & 12/15 & 13/15 & 13/15 & 11/15 & 14/15 \\
FPA, sum of dips & --- & --- & --- & 0/15 & --- & 11/15 \\
Softmax + RoPE & 0/15 & 0/15 & 0/15 & --- & --- & 0/15 \\
Softmax + NoPE & 0/15 & 0/15 & 0/15 & --- & --- & --- \\
Softmax + ALiBi & 0/15 & 0/15 & 0/15 & --- & --- & --- \\
FoX & 0/15 & 0/15 & 0/15 & --- & --- & 0/15 \\
Selective Attention$^{\ast}$ & 15/15 & 13/15 & 15/15 & 0/15 & 0/15 & 0/15 \\
DCP & --- & --- & --- & --- & 0/15 & --- \\
\bottomrule
\end{tabular}

\end{table}

Each probe was one attention layer whose weights at the final query we supervised directly (\cref{app:B}).
On the alive-set, two-walk and stack families we counted a seed as a success when the head ranked every alive key above every dead key on at least 90\% of the test sequences.
On the varied-duration family we required the weights on the keys to lie within total variation $0.05$ of the target on at least 90\% of the sequences, for both durations.
We tuned each competitor on a matched positive control, and every setting ran on 15 seeds.
In the probes we trained the selection of Selective Attention with a softplus straight-through gradient, which keeps its selection from closing early, whereas the language model used the gradient of its paper.
We compared FPA with each competitor by a one-sided exact McNemar test on the seeds that succeeded, at level $\sig=0.05/7$, and a hypothesis held only when all its comparisons rejected.

\paragraph{Alive sets.}
With $d=4$, \cref{thm:3.1} lets a positional head realise every alive set only for $r\le6$.
We tested softmax heads with rotary position embedding (RoPE), without positional encoding (NoPE) and with linear biases (ALiBi), and FoX heads whose gates read only the token.
They succeeded at $r=6$ on 14 or 15 seeds and failed on every seed for $r=10$, 14 and 18 (\cref{tab:1,tab:5}).
The same heads with $d=r+2$ succeeded on all seeds, so the failures do not come from training.
FPA heads succeeded on 11, 12 and 13 seeds, and all twelve comparisons rejected ($p=0.00049$).
Selective Attention, whose bias depends on later tokens, lies outside the theorem and succeeded as well.
Four-layer softmax and FoX models learned the task for $r=14$ and $r=18$, as \cref{sec:3.1} anticipates for deeper models.

\paragraph{Durations.}
When the deepest step of each dip lasted one or two steps in training (\cref{prop:3.5}), FPA succeeded on 13 seeds.
Selective Attention and FPA with the sum of dips in place of the minimum succeeded on none ($p=0.00012$ for both).
Both competitors fitted the same family with fixed durations on all 15 seeds, so they failed only when the durations varied.

\paragraph{Two walks.}
With two heads of dimension~2 on the complementary part of the two-walk family, FPA succeeded on 11 seeds, and Selective Attention and DCP, whose heads share one mask, on none ($p=0.00049$ each).
Both learned identical walks with one mask (15 and 13 seeds), and Selective Attention with one mask per head learned the complementary part on 13 seeds.
We thus attribute the failures to the shared mask, and no run exceeded the bounds of \cref{prop:3.6}.

\paragraph{Stack and retrieval.}
On push and pop sequences FPA succeeded on 14 seeds and FPA with the sum of dips on 11, while one-layer FoX, softmax and Selective Attention heads succeeded on none.
A further task asked for the occurrences of a name among the alive keys, so it needs both a level per key and retrieval by content (\cref{app:B.6}).
FPA succeeded on all 15 seeds, and variants without the level of the key, with the level of the query instead, or without retrieval by content succeeded on none, as did FoX, stick-breaking attention and softmax attention.
Selective Attention succeeded on 13 seeds.

\paragraph{Tasks from earlier work.}
On six published tasks with small decoders (\cref{app:B.7}), FPA succeeded on every seed in flip-flop language modelling, copying and counting, and on 9 of 15 in variable assignment, where FoX succeeded on all.
In multi-query repeated associative recall no model succeeded on more than 4 seeds.
In multi-query associative recall FPA failed on every seed, where softmax attention succeeded on all.
In our diagnosis, the bias of the first layer suppressed the attention from each value to its own key early in training, so the first step of the recall circuit never formed.
An FPA model without a bias in its first layer succeeded on all 15 seeds, and in an exploratory check three- and four-layer FPA models failed as the two-layer one did.

\subsection{Language models}
\label{sec:4.2}

We trained four decoders with 12 layers of width 768 and 12 heads of dimension 64 on 2.7B tokens of SlimPajama \citep{soboleva2023slimpajama}, following the 125M recipe of FoX \citep{lin2025forgetting} with the 32k-token vocabulary of Mistral 7B \citep{jiang2023mistral}.
The models differed only in attention, namely FPA and FoX without positional encoding, Selective Attention with RoPE, and softmax attention with RoPE.
We trained all of them with context 2048, one seed and mixed precision (\cref{app:C}).
We tested two hypotheses of FPA against FoX, each at level $0.05/7$.
The first asked whether the test loss of FPA is worse by no more than $\ln1.01$, that is 1\% in perplexity.
The second asked whether FPA has a lower loss beyond the training length on PG-19 \citep{rae2020compressive} or retrieves more passkeys placed beyond it \citep{mohtashami2023random}.

\begin{table}[t]
  \caption{\textbf{Language models with 110M parameters, one seed each.}
    Losses are in nats per token on the SlimPajama test set, on WikiText-103 \citep{merity2017pointer} and on PG-19 positions 2048 to 4095 in windows of 4096 tokens.
    Passkey accuracy averages 50 items at each of two lengths, and RULER averages 13 tasks at length 8192.
    The models were trained with context 2048.
    Bold marks the best value in each column.
    FPA, First Passage Attention.
    FoX, Forgetting Transformer.
    RoPE, rotary position embedding.}
  \label{tab:2}
  \centering
  \small
  \setlength{\tabcolsep}{2.6pt}
  \begin{tabular}{@{}lcccccc@{}}
\toprule
 & \multicolumn{3}{c}{Loss (nats per token) $\downarrow$} & \multicolumn{3}{c}{Accuracy $\uparrow$} \\
\cmidrule(lr){2-4}\cmidrule(lr){5-7}
 & SlimPajama & WikiText & PG-19 & Passkey & Passkey & RULER \\
Method & test & 103 & 2k--4k & 3k--4k & 8k--16k & 8k \\
\midrule
FPA & \textbf{2.8640} & \textbf{3.1001} & \textbf{3.282} & \textbf{1.00} & \textbf{1.00} & \textbf{0.202} \\
FoX & 2.8832 & 3.1425 & 3.316 & 0.73 & 0.20 & 0.117 \\
Selective Attention & 2.8686 & 3.1197 & 3.805 & 0.11 & 0.04 & 0.008 \\
Softmax + RoPE & 2.8908 & 3.1374 & 5.064 & 0.00 & 0.00 & 0.002 \\
\bottomrule
\end{tabular}

\end{table}

\paragraph{Perplexity and extrapolation.}
Both tests supported FPA (\cref{tab:2}).
On the SlimPajama test set the loss of FPA was 0.0192 nats below that of FoX, and the upper confidence bound of the difference, $-0.0186$, lay inside the margin of $0.00995$.
On 99 PG-19 test books its loss at positions 2048 to 4095 was 0.034 nats lower, with upper bound $-0.028$.
It retrieved all 100 passkeys at lengths 3072 and 4096, against 73 for FoX ($p=7.5\times10^{-9}$).

\paragraph{Longer contexts and other evaluations.}
In windows of 16,384 tokens FPA and FoX kept a flat loss far beyond the training length, while Selective Attention and softmax attention with RoPE did not (\cref{fig:2}).
FPA was 0.02 to 0.03 nats below FoX up to position 8192 but $0.026$ nats above it from 8192 to 16,384 (95\% interval $[0.019,0.032]$).
FPA retrieved every passkey at lengths 8192 and 16,384, where FoX retrieved 20\% (\cref{fig:3}).
On RULER \citep{hsieh2024ruler} FPA scored 0.202 at length 8192 against 0.117 for FoX, but slightly below FoX at 4096 and 16,384, and no model reached 0.09 on BABILong \citep{kuratov2024babilong}.
Zero-shot accuracies on six downstream tasks were close to those of FoX, and most 95\% intervals of the differences contained zero (\cref{app:C}).

\begin{figure}[t]
  \centering
  \includegraphics{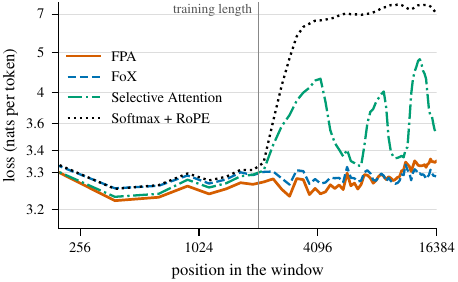}
  \caption{\textbf{Loss far beyond the training length.}
    Mean loss of the four language models on 94 PG-19 test books read in windows of 16,384 tokens, in bins of 256 positions.
    The models were trained with context 2048 (vertical line).
    The vertical axis is logarithmic in the loss minus 3.1 nats, which spreads out the lowest losses.
    FPA, First Passage Attention.
    FoX, Forgetting Transformer.
    RoPE, rotary position embedding.}
  \label{fig:2}
\end{figure}

\begin{figure}[t]
  \centering
  \includegraphics{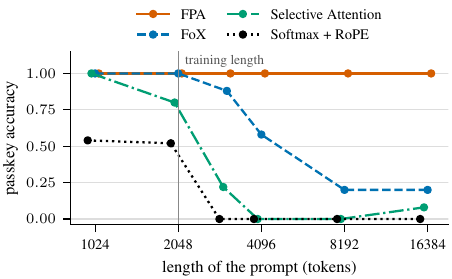}
  \caption{\textbf{Passkey accuracy by length of the prompt.}
    Each point averages 50 items, ten passkeys at each of five depths.
    Lengths up to 4096 come from the preregistered evaluation and the two longer ones from the descriptive one.
    FPA, First Passage Attention.
    FoX, Forgetting Transformer.
    RoPE, rotary position embedding.}
  \label{fig:3}
\end{figure}

\paragraph{What the trained model does.}
In an observational analysis of the FPA model on SlimPajama test windows, most heads left few keys alive (\cref{app:C.6}).
At queries beyond position 1024 the median head left 0.3\% of the keys with a bias of at least $-1$.
In 19 of the 144 heads more than 10\% of the alive sets were not suffixes, which no forget gate can produce.
The share of deaths at the document boundaries $\tokbos$ and $\langle/\mathrm{s}\rangle$ was 6.6 times their share of the tokens, in the median head.
In 56 heads the walk moved in opposite directions at opening and closing brackets and its correlation with bracket depth lay outside a familywise 99\% band of shifted controls (\cref{fig:4}).
The correlations were small (largest $0.25$), so we read them as a description, not as evidence that the model tracks brackets.

\begin{figure}[t]
  \centering
  \includegraphics{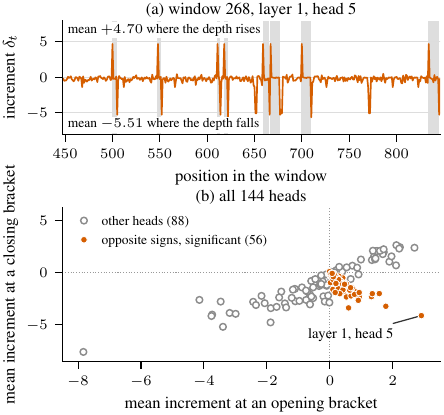}
  \caption{\textbf{Steps of the walk at brackets in the FPA language model.}
    (a)~Increments $\inc_t$ of layer 1, head 5 at positions 447 to 846 of one test window, with the bracket depth shaded and the mean increments where the depth rises and falls in the whole window.
    A fixed seed chose the window before the analysis, and the head contrasts the two kinds of bracket most in it.
    (b)~Mean increments at opening and at closing brackets of all 144 heads over 256 test windows.
    Filled markers show the 56 heads with increments of opposite signs and a familywise significant correlation with the bracket depth.
    FPA, First Passage Attention.}
  \label{fig:4}
\end{figure}

\subsection{Cost}
\label{sec:4.3}

\begin{table}[t]
  \caption{\textbf{Cost on one 12\,GB partition of a Turing GPU in half precision.}
    One attention layer has 12 heads of dimension 64 at batch 4 and length 2048 (medians of 20 rounds).
    Model throughput is in training tokens per second of the 110M models in the benchmark.
    Decoding uses batch size 1 at length 8192, and bytes per position count the keys, values and per-key state of 12 layers in single precision.
    The lower rows switch off parts of the FPA kernel.
    SDPA, the scaled dot-product attention of PyTorch, with its memory-efficient kernel on this GPU.
    FPA, First Passage Attention.
    FoX, Forgetting Transformer.}
  \label{tab:3}
  \centering
  \small
  \setlength{\tabcolsep}{2.4pt}
  \begin{tabular}{@{}lccccc@{}}
\toprule
 & \multicolumn{2}{c}{One layer, ms} & Model & \multicolumn{2}{c}{Decoding at 8k} \\
\cmidrule(lr){2-3}\cmidrule(lr){5-6}
Method & forward & fwd + bwd & tokens/s & ms/token & bytes/pos. \\
\midrule
FPA & 0.78 & 4.31 & 29{,}694 & 22.0 & 74{,}880 \\
FoX & 0.74 & 2.87 & 32{,}481 & 20.2 & 74{,}304 \\
Softmax (SDPA) & 0.87 & 4.42 & 26{,}913 & 23.9 & 73{,}728 \\
Selective Attention & 2.00 & 10.45 & 20{,}150 & 36.7 & 73{,}776 \\
\midrule
\multicolumn{6}{@{}l}{The same kernel with parts of FPA switched off} \\
\quad FoX bias & 0.66 & 3.01 & --- & --- & --- \\
\quad $+$ range minimum & 0.78 & 3.43 & --- & --- & --- \\
\quad $+$ gradient of the field & 0.78 & 4.33 & --- & --- & --- \\
\bottomrule
\end{tabular}

\end{table}

We measured the cost of the kernels and models (\cref{tab:3}, \cref{app:E}).
For the forward and backward pass, one FPA layer took 0.97 times as long as SDPA and 1.50 times as long as FoX, whose bias needs no range minimum.
On this GPU SDPA runs its memory-efficient kernel.
Over five shapes from $16\times512$ to $1\times8192$ these ratios ranged from 0.95 to 1.08 and from 1.42 to 1.56.
Selective Attention took 2.4 times as long as FPA, and 2.0 to 2.5 times over the five shapes.
The forward pass of FPA alone took 1.03 to 1.07 times as long as that of FoX.
Within one kernel, the range minimum added 14\% to the time of the bias of FoX and the gradient of the field another 26\%.
In the benchmark of \cref{tab:3}, the whole FPA model trained at 0.914 times the throughput of FoX, 1.103 times that of softmax attention and 1.47 times that of Selective Attention.
At batch size 1 and length 8192, decoding with FPA took 22.0\,ms per token, against 20.2\,ms for FoX and 23.9\,ms for softmax attention.
These times did not grow from 1024 to 16,384 tokens, so fixed overheads dominate them at this size.
The state of FPA enlarged the cache by 1.56\% over keys and values.

The bias of a key never rises again (\cref{prop:2.3}), so a decoder can drop a key once its bias falls below a threshold.
When we dropped every key whose bias fell below $-10$, the loss of FPA rose by 0.00044 nats at length 2048 and by 0.00047 at 8192, while the model kept 34\% and 26\% of the keys.
With between 24\% and 27\% of its cache the model still retrieved every passkey at 8192 and 16,384.
The same rule cost FoX 0.0195 to 0.0227 nats while it kept 15\% and 7\% of the keys, and a sliding window of the same mean size cost softmax attention 0.115 nats at 2048.

\section{Related Work}
\label{sec:5}

\paragraph{Path-dependent biases.}
FoX adds a sum of log forget gates \citep{lin2025forgetting}, and it is the stack form of FPA with a walk that never rises.
Its bias falls with the age of a key, so its visible sets are suffixes.
Stick-breaking attention \citep{tan2025stickbreaking}, the geometric attention of the Neural Data Router \citep{csordas2022neural}, contextual position encoding \citep{golovneva2024cope}, PaTH \citep{yang2025path} and GRAPE \citep{zhang2026grape} change logits through products, sums or transformations along the path.
GRAPE contains FoX and ALiBi as special cases.
Selective Attention \citep{leviathan2025selective} and DCP \citep{anagnostidis2023dynamic} let later tokens remove a key, and in the hard limit they realise the same alive sets as FPA.
Their bias accumulates a score for every pair of a later token and a key, whereas the bias of FPA depends only on scalars of single positions and costs less to compute than that of Selective Attention (\cref{sec:4.3}).
The suppression of Selective Attention also adds up over the path, so it depends on the duration of a dip (\cref{prop:3.4}), and in both methods one mask serves all heads of a layer (\cref{prop:3.6}).
The concurrent Universal Attention also accumulates a pairwise sum along the path and evicts keys whose bias passes a threshold \citep{wertheimer2026selfpruning}.

\paragraph{Range extremes and stacks.}
StructFormer compares a maximum over a range with a level through a sigmoid, the smooth form of the same comparison \citep{shen2021structformer}.
Its level belongs to the token that attends, however, so a fixed token sees a contiguous span, and the comparison feeds a parser that the whole layer shares in a masked language model.
PRPN \citep{shen2018neural} and syntax-guided local attention \citep{hou2022syntax} also gate by a range extreme against a threshold of the query, and our variant of FPA with the level of the query learned only suffixes (\cref{app:B}).
The strengths of the Neural Stack have the first-passage form as a closed form, in an external memory of a recurrent network that reads only its top \citep{grefenstette2015transduce}.
Ordered Neurons \citep{shen2019ordered_neurons}, Ordered Memory \citep{shen2019ordered}, Stack Attention \citep{dusell2024stack,li2024transformer}, Pushdown Layers \citep{murty2023pushdown}, Transformer Grammars \citep{sartran2022transformer} and DeltaStack \citep{nguyen2026deltastack} build stacks or syntax into recurrent or attention models, several with supervision of the structure.
FPA learns its walk without such supervision.
Expire-Span \citep{sukhbaatar2021expire} and adaptive attention spans \citep{sukhbaatar2019adaptive} learn when keys expire, which FPA contains as the case of a clock as field.
Preisach Attention studies extremum stacks as a sequence memory, without a learned level per key or scores between queries and keys \citep{frydrych2026preisach}.
Tropical Attention takes max-plus extremes over keys rather than over the path between key and query \citep{hashemi2025tropical}.

\paragraph{Expressivity.}
Lower bounds on head dimension, rank or precision \citep{bhojanapalli2020lowrank,sanford2023representational,amsel2025quality,kozachinskiy2025strassen,weller2026theoretical} rest on rank, Vapnik--Chervonenkis (VC) dimension or sign-rank arguments of the kind that \cref{thm:3.1} uses, and our contribution is the family of alive sets that later dips create.
One layer needs linear size for bounded Dyck languages \citep{bhattamishra2024separations}, while a few layers suffice \citep{yao2021selfattention,hayakawa2025}, which is why we state the separation for one layer.
Soft attention simulates hard attention when its temperature scales with the margin \citep{yang2026tacl}, as \cref{prop:3.3} does with its strength.
Non-negative gates limit state tracking in linear recurrences \citep{sarrof2024expressive,grazzi2025unlocking}, and the walk of FPA may rise as well as fall.

\paragraph{Eviction.}
H2O \citep{zhang2023h2o} and attention sinks \citep{xiao2024efficient} drop cached keys by heuristics, and adaptive computation pruning skips the blocks that the bias of FoX makes negligible \citep{lin2025adaptive}.
The bias of an FPA key never rises again, so a decoder can drop a key once its bias falls below a threshold, and in the hard limit it drops exactly the dead keys.

\section{Limitations and Conclusion}
\label{sec:6}

\paragraph{Limitations.}
Our language models had 110M parameters and one seed each, and they trained on 2.7B tokens on one 12\,GB partition of a Turing GPU, so the comparison may change at larger scale.

\paragraph{Conclusion.}
We have introduced First Passage Attention, which lowers the logit of a key by the deepest later dip of a learned field below the level of the key.
One parameterisation contains the Forgetting Transformer, a stack, a bottleneck bias and learned expiry.
In one layer a single FPA head realises alive sets for which positional heads and forget-gate heads with token-level gates need $r-2$ dimensions.
Without positional information, no head that subtracts a sum of non-negative scores of token pairs fits exactly the targets of dips whose durations vary, while one FPA head does.
Preregistered probes agreed with these predictions.
A 110M-parameter FPA language model was not worse than FoX in perplexity, had a lower loss on PG-19 at positions 2048 to 4095 and retrieved every passkey at lengths up to 16,384.

\paragraph{Use of large language models.}
We used large language models to assist with code, literature search, proof checking and editing, and we take full responsibility for the content of the paper.

\bibliographystyle{plainnat}
\bibliography{refs}

@inproceedings{vaswani2017attention,
  author    = {Vaswani, Ashish and Shazeer, Noam and Parmar, Niki and Uszkoreit, Jakob and Jones, Llion and Gomez, Aidan N. and Kaiser, {\L}ukasz and Polosukhin, Illia},
  title     = {Attention is All you Need},
  booktitle = {Advances in Neural Information Processing Systems},
  volume    = {30},
  year      = {2017}
}

@article{su2024roformer,
  author  = {Su, Jianlin and Ahmed, Murtadha and Lu, Yu and Pan, Shengfeng and Bo, Wen and Liu, Yunfeng},
  title   = {{RoFormer}: Enhanced Transformer with Rotary Position Embedding},
  journal = {Neurocomputing},
  volume  = {568},
  pages   = {127063},
  year    = {2024},
  doi     = {10.1016/j.neucom.2023.127063}
}

@inproceedings{press2022train,
  author    = {Press, Ofir and Smith, Noah and Lewis, Mike},
  title     = {Train Short, Test Long: Attention with Linear Biases Enables Input Length Extrapolation},
  booktitle = {International Conference on Learning Representations},
  year      = {2022}
}

@inproceedings{lin2025forgetting,
  author    = {Lin, Zhixuan and Nikishin, Evgenii and He, Xu and Courville, Aaron},
  title     = {Forgetting Transformer: Softmax Attention with a Forget Gate},
  booktitle = {International Conference on Learning Representations},
  pages     = {69704--69738},
  year      = {2025}
}

@inproceedings{tan2025stickbreaking,
  author    = {Tan, Shawn and Yang, Songlin and Courville, Aaron and Panda, Rameswar and Shen, Yikang},
  title     = {Scaling Stick-Breaking Attention: An Efficient Implementation and In-depth Study},
  booktitle = {International Conference on Learning Representations},
  pages     = {4030--4046},
  year      = {2025}
}

@inproceedings{leviathan2025selective,
  author    = {Leviathan, Yaniv and Kalman, Matan and Matias, Yossi},
  title     = {Selective Attention Improves Transformer},
  booktitle = {International Conference on Learning Representations},
  pages     = {4389--4414},
  year      = {2025}
}

@inproceedings{anagnostidis2023dynamic,
  author    = {Anagnostidis, Sotiris and Pavllo, Dario and Biggio, Luca and Noci, Lorenzo and Lucchi, Aurelien and Hofmann, Thomas},
  title     = {Dynamic Context Pruning for Efficient and Interpretable Autoregressive Transformers},
  booktitle = {Advances in Neural Information Processing Systems},
  volume    = {36},
  pages     = {65202--65223},
  year      = {2023},
  doi       = {10.52202/075280-2845}
}

@article{lindley1952theory,
  author  = {Lindley, D. V.},
  title   = {The theory of queues with a single server},
  journal = {Mathematical Proceedings of the Cambridge Philosophical Society},
  volume  = {48},
  number  = {2},
  pages   = {277--289},
  year    = {1952},
  doi     = {10.1017/S0305004100027638}
}

@article{skorokhod1961stochastic,
  author  = {Skorokhod, A. V.},
  title   = {Stochastic Equations for Diffusion Processes in a Bounded Region},
  journal = {Theory of Probability and Its Applications},
  volume  = {6},
  number  = {3},
  pages   = {264--274},
  year    = {1961},
  doi     = {10.1137/1106035}
}

@inproceedings{grefenstette2015transduce,
  author    = {Grefenstette, Edward and Hermann, Karl Moritz and Suleyman, Mustafa and Blunsom, Phil},
  title     = {Learning to Transduce with Unbounded Memory},
  booktitle = {Advances in Neural Information Processing Systems},
  volume    = {28},
  year      = {2015}
}

@inproceedings{sukhbaatar2021expire,
  author    = {Sukhbaatar, Sainbayar and Ju, Da and Poff, Spencer and Roller, Stephen and Szlam, Arthur and Weston, Jason and Fan, Angela},
  title     = {Not All Memories are Created Equal: Learning to Forget by Expiring},
  booktitle = {Proceedings of the 38th International Conference on Machine Learning},
  series    = {Proceedings of Machine Learning Research},
  volume    = {139},
  pages     = {9902--9912},
  year      = {2021}
}

@inproceedings{zhang2026grape,
  author    = {Zhang, Yifan and Chen, Zixiang and Liu, Yifeng and Qin, Zhen and Yuan, Huizhuo and Xu, Kangping and Yuan, Yang and Gu, Quanquan and Yao, Andrew Chi-Chih},
  title     = {Group Representational Position Encoding},
  booktitle = {International Conference on Learning Representations},
  pages     = {56496--56519},
  year      = {2026}
}

@inproceedings{bender2000lca,
  author    = {Bender, Michael A. and Farach-Colton, Mart{\'i}n},
  title     = {The {LCA} Problem Revisited},
  booktitle = {{LATIN} 2000: Theoretical Informatics},
  series    = {Lecture Notes in Computer Science},
  pages     = {88--94},
  year      = {2000},
  doi       = {10.1007/10719839_9}
}

@inproceedings{wang2025flashmask,
  author    = {Wang, Guoxia and Zeng, Jinle and Xiao, Xiyuan and Wu, Siming and Yang, Jiabin and Zheng, Lujing and Chen, Zeyu and Bian, Jiang and Yu, Dianhai and Wang, Haifeng},
  title     = {{FlashMask}: Efficient and Rich Mask Extension of {FlashAttention}},
  booktitle = {International Conference on Learning Representations},
  pages     = {27641--27661},
  year      = {2025}
}

@article{cover1965geometrical,
  author  = {Cover, Thomas M.},
  title   = {Geometrical and Statistical Properties of Systems of Linear Inequalities with Applications in Pattern Recognition},
  journal = {IEEE Transactions on Electronic Computers},
  volume  = {EC-14},
  number  = {3},
  pages   = {326--334},
  year    = {1965},
  doi     = {10.1109/PGEC.1965.264137}
}

@article{dudley1978central,
  author  = {Dudley, R. M.},
  title   = {Central Limit Theorems for Empirical Measures},
  journal = {The Annals of Probability},
  volume  = {6},
  number  = {6},
  pages   = {899--929},
  year    = {1978},
  doi     = {10.1214/aop/1176995384}
}

@inproceedings{bhojanapalli2020lowrank,
  author    = {Bhojanapalli, Srinadh and Yun, Chulhee and Rawat, Ankit Singh and Reddi, Sashank and Kumar, Sanjiv},
  title     = {Low-Rank Bottleneck in Multi-head Attention Models},
  booktitle = {Proceedings of the 37th International Conference on Machine Learning},
  series    = {Proceedings of Machine Learning Research},
  volume    = {119},
  pages     = {864--873},
  year      = {2020}
}

@inproceedings{sanford2023representational,
  author    = {Sanford, Clayton and Hsu, Daniel J. and Telgarsky, Matus},
  title     = {Representational Strengths and Limitations of Transformers},
  booktitle = {Advances in Neural Information Processing Systems},
  volume    = {36},
  pages     = {36677--36707},
  year      = {2023},
  doi       = {10.52202/075280-1596}
}

@inproceedings{weller2026theoretical,
  author    = {Weller, Orion and Boratko, Michael and Naim, Iftekhar and Lee, Jinhyuk},
  title     = {On the Theoretical Limitations of Embedding-Based Retrieval},
  booktitle = {International Conference on Learning Representations},
  pages     = {11202--11216},
  year      = {2026}
}

@inproceedings{song2026causal,
  author    = {Song, Zhuoqing and Sun, Peng and Yuan, Huizhuo and Gu, Quanquan},
  title     = {Causal Attention with Lookahead Keys},
  booktitle = {Proceedings of the 43rd International Conference on Machine Learning},
  series    = {Proceedings of Machine Learning Research},
  volume    = {306},
  pages     = {115068--115107},
  year      = {2026}
}

@inproceedings{yao2021selfattention,
  author    = {Yao, Shunyu and Peng, Binghui and Papadimitriou, Christos and Narasimhan, Karthik},
  title     = {Self-Attention Networks Can Process Bounded Hierarchical Languages},
  booktitle = {Proceedings of the 59th Annual Meeting of the Association for Computational Linguistics and the 11th International Joint Conference on Natural Language Processing (Volume 1: Long Papers)},
  pages     = {3770--3785},
  year      = {2021},
  doi       = {10.18653/v1/2021.acl-long.292}
}

@inproceedings{hayakawa2025,
  author    = {Hayakawa, Daichi and Sato, Issei},
  title     = {Theoretical Analysis of Hierarchical Language Recognition and Generation by Transformers without Positional Encoding},
  booktitle = {Proceedings of the 63rd Annual Meeting of the Association for Computational Linguistics (Volume 1: Long Papers)},
  pages     = {30777--30834},
  year      = {2025},
  doi       = {10.18653/v1/2025.acl-long.1488}
}

@inproceedings{kajitsuka2024transformers,
  author    = {Kajitsuka, Tokio and Sato, Issei},
  title     = {Are Transformers with One Layer Self-Attention Using Low-Rank Weight Matrices Universal Approximators?},
  booktitle = {International Conference on Learning Representations},
  pages     = {35177--35205},
  year      = {2024}
}

@inproceedings{csordas2022neural,
  author    = {Csord{\'a}s, R{\'o}bert and Irie, Kazuki and Schmidhuber, J{\"u}rgen},
  title     = {The Neural Data Router: Adaptive Control Flow in Transformers Improves Systematic Generalization},
  booktitle = {International Conference on Learning Representations},
  year      = {2022}
}

@article{golovneva2024cope,
  author  = {Golovneva, Olga and Wang, Tianlu and Weston, Jason and Sukhbaatar, Sainbayar},
  title   = {Contextual Position Encoding: Learning to Count What's Important},
  journal = {arXiv preprint arXiv:2405.18719},
  year    = {2024}
}

@inproceedings{yang2025path,
  author    = {Yang, Songlin and Shen, Yikang and Wen, Kaiyue and Tan, Shawn and Mishra, Mayank and Ren, Liliang and Panda, Rameswar and Kim, Yoon},
  title     = {{PaTH} Attention: Position Encoding via Accumulating {Householder} Transformations},
  booktitle = {Advances in Neural Information Processing Systems},
  volume    = {38},
  pages     = {62220--62247},
  year      = {2025},
  doi       = {10.52202/085713-2080}
}

@article{wertheimer2026selfpruning,
  author  = {Wertheimer, Davis and Shen, Haochen and Gupta, Ahan and Liu, Derrick and Lim, Yu Chin Fabian and Srivatsa, Mudhakar and Ganti, Raghu K. and Zhang, Minjia and Wang, Naigang},
  title   = {A Self-Pruning Transformer: Extreme {KV}-Cache Compression with Universal Attention},
  journal = {arXiv preprint arXiv:2610.09051},
  year    = {2026}
}

@inproceedings{shen2021structformer,
  author    = {Shen, Yikang and Tay, Yi and Zheng, Che and Bahri, Dara and Metzler, Donald and Courville, Aaron},
  title     = {{StructFormer}: Joint Unsupervised Induction of Dependency and Constituency Structure from Masked Language Modeling},
  booktitle = {Proceedings of the 59th Annual Meeting of the Association for Computational Linguistics and the 11th International Joint Conference on Natural Language Processing (Volume 1: Long Papers)},
  pages     = {7196--7209},
  year      = {2021},
  doi       = {10.18653/v1/2021.acl-long.559}
}

@inproceedings{shen2018neural,
  author    = {Shen, Yikang and Lin, Zhouhan and Huang, Chin-wei and Courville, Aaron},
  title     = {Neural Language Modeling by Jointly Learning Syntax and Lexicon},
  booktitle = {International Conference on Learning Representations},
  year      = {2018}
}

@inproceedings{hou2022syntax,
  author    = {Hou, Shengyuan and Kai, Jushi and Xue, Haotian and Zhu, Bingyu and Yuan, Bo and Huang, Longtao and Wang, Xinbing and Lin, Zhouhan},
  title     = {Syntax-guided Localized Self-attention by Constituency Syntactic Distance},
  booktitle = {Findings of the Association for Computational Linguistics: {EMNLP} 2022},
  pages     = {2334--2341},
  year      = {2022},
  doi       = {10.18653/v1/2022.findings-emnlp.173}
}

@inproceedings{shen2019ordered_neurons,
  author    = {Shen, Yikang and Tan, Shawn and Sordoni, Alessandro and Courville, Aaron},
  title     = {Ordered Neurons: Integrating Tree Structures into Recurrent Neural Networks},
  booktitle = {International Conference on Learning Representations},
  year      = {2019}
}

@inproceedings{shen2019ordered,
  author    = {Shen, Yikang and Tan, Shawn and Hosseini, Arian and Lin, Zhouhan and Sordoni, Alessandro and Courville, Aaron},
  title     = {Ordered Memory},
  booktitle = {Advances in Neural Information Processing Systems},
  volume    = {32},
  year      = {2019}
}

@inproceedings{dusell2024stack,
  author    = {DuSell, Brian and Chiang, David},
  title     = {Stack Attention: Improving the Ability of Transformers to Model Hierarchical Patterns},
  booktitle = {International Conference on Learning Representations},
  pages     = {7411--7430},
  year      = {2024}
}

@inproceedings{li2024transformer,
  author    = {Li, Jiaoda and White, Jennifer and Sachan, Mrinmaya and Cotterell, Ryan},
  title     = {A Transformer with Stack Attention},
  booktitle = {Findings of the Association for Computational Linguistics: {NAACL} 2024},
  pages     = {4318--4335},
  year      = {2024},
  doi       = {10.18653/v1/2024.findings-naacl.269}
}

@inproceedings{murty2023pushdown,
  author    = {Murty, Shikhar and Sharma, Pratyusha and Andreas, Jacob and Manning, Christopher D.},
  title     = {Pushdown Layers: Encoding Recursive Structure in Transformer Language Models},
  booktitle = {Proceedings of the 2023 Conference on Empirical Methods in Natural Language Processing},
  pages     = {3233--3247},
  year      = {2023},
  doi       = {10.18653/v1/2023.emnlp-main.195}
}

@article{sartran2022transformer,
  author  = {Sartran, Laurent and Barrett, Samuel and Kuncoro, Adhiguna and Stanojevi{\'c}, Milo{\v{s}} and Blunsom, Phil and Dyer, Chris},
  title   = {Transformer Grammars: Augmenting Transformer Language Models with Syntactic Inductive Biases at Scale},
  journal = {Transactions of the Association for Computational Linguistics},
  volume  = {10},
  pages   = {1423--1439},
  year    = {2022},
  doi     = {10.1162/tacl_a_00526}
}

@inproceedings{nguyen2026deltastack,
  author    = {Nguyen, Anh T and Momeni, Saleh and Chaubey, Ashutosh and Xiao, Changnan and Liu, Bing},
  title     = {Push, Pop, Parallelize: Stack-Augmented Linear Attention via the Delta Rule},
  booktitle = {Proceedings of the 43rd International Conference on Machine Learning},
  series    = {Proceedings of Machine Learning Research},
  volume    = {306},
  pages     = {92579--92592},
  year      = {2026}
}

@inproceedings{sukhbaatar2019adaptive,
  author    = {Sukhbaatar, Sainbayar and Grave, {\'E}douard and Bojanowski, Piotr and Joulin, Armand},
  title     = {Adaptive Attention Span in Transformers},
  booktitle = {Proceedings of the 57th Annual Meeting of the Association for Computational Linguistics},
  pages     = {331--335},
  year      = {2019},
  doi       = {10.18653/v1/P19-1032}
}

@article{frydrych2026preisach,
  author  = {Frydrych, Piotr},
  title   = {Preisach Attention: A Hysteretic Model of Sequential Memory},
  journal = {arXiv preprint arXiv:2605.23603},
  year    = {2026}
}

@inproceedings{hashemi2025tropical,
  author    = {Hashemi, Baran and Pasque, Kurt and Teska, Chris and Yoshida, Ruriko},
  title     = {Tropical Attention: Neural Algorithmic Reasoning for Combinatorial Algorithms},
  booktitle = {Advances in Neural Information Processing Systems},
  volume    = {38},
  pages     = {98631--98656},
  year      = {2025},
  doi       = {10.52202/085713-3295}
}

@inproceedings{amsel2025quality,
  author    = {Amsel, Noah and Yehudai, Gilad and Bruna, Joan},
  title     = {Quality over Quantity in Attention Layers: When Adding More Heads Hurts},
  booktitle = {International Conference on Learning Representations},
  pages     = {62431--62483},
  year      = {2025}
}

@inproceedings{kozachinskiy2025strassen,
  author    = {Kozachinskiy, Alexander and Urrutia, Felipe and Orellana, Hector and Steifer, Tomasz and Pizarro, Germ{\'a}n and Fuentes, Mat{\'i}as and Meza V{\'a}squez, Francisco and Buc Calderon, Cristian and Rojas, Cristobal},
  title     = {Strassen Attention, Split {VC} Dimension and Compositionality in Transformers},
  booktitle = {Advances in Neural Information Processing Systems},
  volume    = {38},
  pages     = {9665--9696},
  year      = {2025},
  doi       = {10.52202/085713-0327}
}

@inproceedings{bhattamishra2024separations,
  author    = {Bhattamishra, Satwik and Hahn, Michael and Blunsom, Phil and Kanade, Varun},
  title     = {Separations in the Representational Capabilities of Transformers and Recurrent Architectures},
  booktitle = {Advances in Neural Information Processing Systems},
  volume    = {37},
  pages     = {36002--36045},
  year      = {2024},
  doi       = {10.52202/079017-1135}
}

@article{yang2026tacl,
  author  = {Yang, Andy and Strobl, Lena and Chiang, David and Angluin, Dana},
  title   = {Simulating Hard Attention Using Soft Attention},
  journal = {Transactions of the Association for Computational Linguistics},
  volume  = {14},
  pages   = {147--166},
  year    = {2026},
  doi     = {10.1162/tacl.a.597}
}

@inproceedings{sarrof2024expressive,
  author    = {Sarrof, Yash and Veitsman, Yana and Hahn, Michael},
  title     = {The Expressive Capacity of State Space Models: A Formal Language Perspective},
  booktitle = {Advances in Neural Information Processing Systems},
  volume    = {37},
  pages     = {41202--41241},
  year      = {2024},
  doi       = {10.52202/079017-1304}
}

@inproceedings{grazzi2025unlocking,
  author    = {Grazzi, Riccardo and Siems, Julien and Zela, Arber and Franke, J{\"o}rg and Hutter, Frank and Pontil, Massimiliano},
  title     = {Unlocking State-Tracking in Linear {RNNs} Through Negative Eigenvalues},
  booktitle = {International Conference on Learning Representations},
  pages     = {36565--36597},
  year      = {2025}
}

@inproceedings{lin2025adaptive,
  author    = {Lin, Zhixuan and Obando-Ceron, Johan and He, Xu Owen and Courville, Aaron},
  title     = {Adaptive Computation Pruning for the Forgetting Transformer},
  booktitle = {Second Conference on Language Modeling},
  year      = {2025}
}

@inproceedings{liu2023exposing,
  author    = {Liu, Bingbin and Ash, Jordan and Goel, Surbhi and Krishnamurthy, Akshay and Zhang, Cyril},
  title     = {Exposing Attention Glitches with Flip-Flop Language Modeling},
  booktitle = {Advances in Neural Information Processing Systems},
  volume    = {36},
  pages     = {25549--25583},
  year      = {2023},
  doi       = {10.52202/075280-1112}
}

@inproceedings{arora2024zoology,
  author    = {Arora, Simran and Eyuboglu, Sabri and Timalsina, Aman and Johnson, Isys and Poli, Michael and Zou, James Y and Rudra, Atri and R{\'e}, Christopher},
  title     = {Zoology: Measuring and Improving Recall in Efficient Language Models},
  booktitle = {International Conference on Learning Representations},
  pages     = {15664--15730},
  year      = {2024}
}

@inproceedings{paperno2016lambada,
  author    = {Paperno, Denis and Kruszewski, Germ{\'a}n and Lazaridou, Angeliki and Pham, Ngoc Quan and Bernardi, Raffaella and Pezzelle, Sandro and Baroni, Marco and Boleda, Gemma and Fern{\'a}ndez, Raquel},
  title     = {The {LAMBADA} dataset: Word prediction requiring a broad discourse context},
  booktitle = {Proceedings of the 54th Annual Meeting of the Association for Computational Linguistics (Volume 1: Long Papers)},
  pages     = {1525--1534},
  year      = {2016},
  doi       = {10.18653/v1/P16-1144}
}

@inproceedings{zellers2019hellaswag,
  author    = {Zellers, Rowan and Holtzman, Ari and Bisk, Yonatan and Farhadi, Ali and Choi, Yejin},
  title     = {{HellaSwag}: Can a Machine Really Finish Your Sentence?},
  booktitle = {Proceedings of the 57th Annual Meeting of the Association for Computational Linguistics},
  pages     = {4791--4800},
  year      = {2019},
  doi       = {10.18653/v1/P19-1472}
}

@inproceedings{bisk2020piqa,
  author    = {Bisk, Yonatan and Zellers, Rowan and Le Bras, Ronan and Gao, Jianfeng and Choi, Yejin},
  title     = {{PIQA}: Reasoning about Physical Commonsense in Natural Language},
  booktitle = {Proceedings of the AAAI Conference on Artificial Intelligence},
  volume    = {34},
  number    = {05},
  pages     = {7432--7439},
  year      = {2020},
  doi       = {10.1609/aaai.v34i05.6239}
}

@article{clark2018arc,
  author  = {Clark, Peter and Cowhey, Isaac and Etzioni, Oren and Khot, Tushar and Sabharwal, Ashish and Schoenick, Carissa and Tafjord, Oyvind},
  title   = {Think you have Solved Question Answering? {Try} {ARC}, the {AI2} Reasoning Challenge},
  journal = {arXiv preprint arXiv:1803.05457},
  year    = {2018}
}

@inproceedings{sakaguchi2020winogrande,
  author    = {Sakaguchi, Keisuke and Le Bras, Ronan and Bhagavatula, Chandra and Choi, Yejin},
  title     = {{WinoGrande}: An Adversarial {Winograd} Schema Challenge at Scale},
  booktitle = {Proceedings of the AAAI Conference on Artificial Intelligence},
  volume    = {34},
  number    = {05},
  pages     = {8732--8740},
  year      = {2020},
  doi       = {10.1609/aaai.v34i05.6399}
}

@article{sauer1972density,
  author  = {Sauer, N.},
  title   = {On the density of families of sets},
  journal = {Journal of Combinatorial Theory, Series A},
  volume  = {13},
  number  = {1},
  pages   = {145--147},
  year    = {1972},
  doi     = {10.1016/0097-3165(72)90019-2}
}

@article{shelah1972combinatorial,
  author  = {Shelah, Saharon},
  title   = {A combinatorial problem; stability and order for models and theories in infinitary languages},
  journal = {Pacific Journal of Mathematics},
  volume  = {41},
  number  = {1},
  pages   = {247--261},
  year    = {1972},
  doi     = {10.2140/pjm.1972.41.247}
}

@inproceedings{mohtashami2023random,
  author    = {Mohtashami, Amirkeivan and Jaggi, Martin},
  title     = {Random-Access Infinite Context Length for Transformers},
  booktitle = {Advances in Neural Information Processing Systems},
  volume    = {36},
  pages     = {54567--54585},
  year      = {2023},
  doi       = {10.52202/075280-2378}
}

@inproceedings{zhang2023h2o,
  author    = {Zhang, Zhenyu and Sheng, Ying and Zhou, Tianyi and Chen, Tianlong and Zheng, Lianmin and Cai, Ruisi and Song, Zhao and Tian, Yuandong and R{\'e}, Christopher and Barrett, Clark and Wang, Zhangyang and Chen, Beidi},
  title     = {{H2O}: Heavy-Hitter Oracle for Efficient Generative Inference of Large Language Models},
  booktitle = {Advances in Neural Information Processing Systems},
  volume    = {36},
  pages     = {34661--34710},
  year      = {2023},
  doi       = {10.52202/075280-1506}
}

@inproceedings{kuratov2024babilong,
  author    = {Kuratov, Yuri and Bulatov, Aydar and Anokhin, Petr and Rodkin, Ivan and Sorokin, Dmitry and Sorokin, Artyom and Burtsev, Mikhail},
  title     = {{BABILong}: Testing the Limits of {LLMs} with Long Context Reasoning-in-a-Haystack},
  booktitle = {Advances in Neural Information Processing Systems, Datasets and Benchmarks Track},
  volume    = {37},
  pages     = {106519--106554},
  year      = {2024},
  doi       = {10.52202/079017-3381}
}

@inproceedings{hsieh2024ruler,
  author    = {Hsieh, Cheng-Ping and Sun, Simeng and Kriman, Samuel and Acharya, Shantanu and Rekesh, Dima and Jia, Fei and Ginsburg, Boris},
  title     = {{RULER}: What's the Real Context Size of Your Long-Context Language Models?},
  booktitle = {First Conference on Language Modeling},
  year      = {2024}
}

@inproceedings{xiao2024efficient,
  author    = {Xiao, Guangxuan and Tian, Yuandong and Chen, Beidi and Han, Song and Lewis, Mike},
  title     = {Efficient Streaming Language Models with Attention Sinks},
  booktitle = {International Conference on Learning Representations},
  pages     = {21875--21895},
  year      = {2024}
}

@misc{soboleva2023slimpajama,
  author       = {Soboleva, Daria and Al-Khateeb, Faisal and Myers, Robert and Steeves, Jacob R and Hestness, Joel and Dey, Nolan},
  title        = {{SlimPajama}: A 627{B} token cleaned and deduplicated version of {RedPajama}},
  howpublished = {Cerebras blog},
  month        = jun,
  year         = {2023}
}

@misc{gao2024harness,
  author       = {Gao, Leo and Tow, Jonathan and Abbasi, Baber and Biderman, Stella and Black, Sid and DiPofi, Anthony and Foster, Charles and Golding, Laurence and Hsu, Jeffrey and Le Noac'h, Alain and Li, Haonan and McDonell, Kyle and Muennighoff, Niklas and Ociepa, Chris and Phang, Jason and Reynolds, Laria and Schoelkopf, Hailey and Skowron, Aviya and Sutawika, Lintang and Tang, Eric and Thite, Anish and Wang, Ben and Wang, Kevin and Zou, Andy},
  title        = {A framework for few-shot language model evaluation},
  howpublished = {Zenodo, version v0.4.3},
  year         = {2024},
  doi          = {10.5281/zenodo.12608602}
}

@article{jiang2023mistral,
  author  = {Jiang, Albert Q. and Sablayrolles, Alexandre and Mensch, Arthur and Bamford, Chris and Chaplot, Devendra Singh and de las Casas, Diego and Bressand, Florian and Lengyel, Gianna and Lample, Guillaume and Saulnier, Lucile and Lavaud, L{\'e}lio Renard and Lachaux, Marie-Anne and Stock, Pierre and Le Scao, Teven and Lavril, Thibaut and Wang, Thomas and Lacroix, Timoth{\'e}e and El Sayed, William},
  title   = {Mistral {7B}},
  journal = {arXiv preprint arXiv:2310.06825},
  year    = {2023}
}

@inproceedings{rae2020compressive,
  author    = {Rae, Jack W. and Potapenko, Anna and Jayakumar, Siddhant M. and Hillier, Chloe and Lillicrap, Timothy P.},
  title     = {Compressive Transformers for Long-Range Sequence Modelling},
  booktitle = {International Conference on Learning Representations},
  year      = {2020}
}

@inproceedings{merity2017pointer,
  author    = {Merity, Stephen and Xiong, Caiming and Bradbury, James and Socher, Richard},
  title     = {Pointer Sentinel Mixture Models},
  booktitle = {International Conference on Learning Representations},
  year      = {2017}
}

\newpage
\appendix
\crefalias{section}{appendix}
\crefalias{subsection}{subappendix}
\section{Proofs}
\label{app:A}

Throughout, a head gives key~$s$ at query~$t$ the weight $w_{ts}=e^{\lgt_{ts}}/\sum_{u\le t}e^{\lgt_{tu}}$, so weight ratios depend only on logit differences, $w_{ts}/w_{ts'}=e^{\lgt_{ts}-\lgt_{ts'}}$.
All logits are finite unless a statement says otherwise, and all arithmetic is exact.
The field~$\fld_u$ and the level~$\lvl_s$ do not change once their positions are written.

\subsection{The ladder family}
\label{app:A.1}

Fix $r\ge3$.
The tokens are $\tokbos$, the key token~\texttt{K}, the tokens \texttt{D} and \texttt{U}, and the query token~\tokq, with walk increments $\inc(\tokbos)=\inc(\tokq)=0$, $\inc(\texttt{K})=1$, $\inc(\texttt{D})=-\tfrac12$ and $\inc(\texttt{U})=\tfrac12$.
For a pattern $P=(P_1,\dots,P_{r-1})\in\{0,1\}^{r-1}$ the sequence $x^P$ is
\begin{equation*}
  \tokbos,\ \texttt{K},\ \pi_1,\ \texttt{K},\ \pi_2,\ \dots,\ \texttt{K},\ \pi_{r-1},\ \texttt{K},\ \tokq,
\end{equation*}
where the block $\pi_i$ is \texttt{U}\,\texttt{D} if $P_i=1$ and \texttt{D}\,\texttt{U} if $P_i=0$.
With positions numbered from 0, every sequence has length $3r$, key~$j$ sits at $\kpos_j=3j-2$ and the query at $t=3r-1$.
The family $\ladder_r$ has the following four properties, which we also checked by computation for $r$ up to 10.
\begin{enumerate}
  \item \emph{Alive sets.} In the stack form, key $j<r$ is alive at the final query exactly when $P_j=1$, and key~$r$ is always alive. Indeed $W_{\kpos_j}=j$, and within block $\pi_j$ the walk visits $j+\tfrac12$ (pattern \texttt{U}\,\texttt{D}) or $j-\tfrac12$ (pattern \texttt{D}\,\texttt{U}) and returns to~$j$. Every later block starts at least at $j+1$ and dips by at most~$\tfrac12$, so the alive key set is $A^P=\{j<r:P_j=1\}\cup\{r\}$.
  \item \emph{Prefixes.} The prefix of $x^P$ up to position $\kpos_j$ depends only on $P_1,\dots,P_{j-1}$.
  \item \emph{Alive positions.} In the stack form each block with its key holds exactly one position that is dead at the final query, namely key~$j$ if $P_j=0$ and the \texttt{U} of the block if $P_j=1$. With $\tokbos$, key~$r$ and the query alive, there are $r-1$ dead positions and $2r+1$ alive ones.
  \item \emph{Equal multisets.} The blocks \texttt{U}\,\texttt{D} and \texttt{D}\,\texttt{U} are orderings of one multiset. So for any function $\tokfn$ of a token, $\sum_{s<u\le t}\tokfn(x_u)$ with $s$ a key position and $t$ the final query takes the same value for every pattern. In the stack form every dip lies at least $\tfrac12$ below the level of every position it kills.
\end{enumerate}

\subsection{Proof of \texorpdfstring{\cref{prop:2.2}}{Proposition 2.2}}

(a)~With $g=h=0$, $\min\bigl(0,\rmin_{ts}-\lvl_s\bigr)=\min\bigl(0,\min_{s<u\le t}(W_u-W_s)\bigr)=-\max\bigl(0,\max_{s<u\le t}(W_s-W_u)\bigr)=-L_{ts}$.
A queue that starts empty at time~$s$ and receives the net input $\inc_u$ at step~$u$ has the content $R_s=0$ and $R_u=\max(0,R_{u-1}+\inc_u)$ \citep{lindley1952theory}.
Unrolling the recursion gives $R_t=W_t-\min_{s\le v\le t}W_v=(W_t-W_s)+L_{ts}$, so $L_{ts}$ is the total amount by which the reflection at zero has pushed the walk $W-W_s$ up \citep{skorokhod1961stochastic}.
The proposition uses this reading only as a name for $L_{ts}$.

(b)~If $\inc\le0$, the walk does not increase, so $\rmin_{ts}=W_t\le W_s$ and $b_{ts}=\str(W_t-W_s)=\sum_{s<u\le t}\str\inc_u=\sum_{s<u\le t}\log f_u$.

(c)~Here $W=0$ and $\lvl=0$, so $b_{ts}=\min\bigl(0,\min_{s<u\le t}g_u\bigr)=\min_{s<u\le t}g_u$ because $g\le0$.

(d)~The walk is $W_u=W_0-u$, so $\fld=W$ and $\lvl_s=W_s-e_s$.
The minimum of the decreasing walk over $(s,t]$ is $W_t$, so $\rmin_{ts}-\lvl_s=W_t-W_s+e_s=e_s-(t-s)$, which gives the stated bias.
Key~$s$ is alive, for every $\str$, exactly when $e_s\ge t-s$. \hfill$\square$

\subsection{Proof of \texorpdfstring{\cref{prop:2.3}}{Proposition 2.3}}

(a)~Since $(s,t]\subset(s,t']$, $\rmin_{t's}\le\rmin_{ts}$, and $x\mapsto\str\min(0,x-\lvl_s)$ does not decrease.
So $b_{t's}\le b_{ts}$, and $b_{ts}<0$ gives $b_{t's}<0$.

(b)~Let $s<s'$ both be alive at time~$t$.
Since $s'\in(s,t]$, $W_{s'}\ge\rmin_{ts}\ge W_s$, so the levels of the alive keys do not decrease with the time of writing.
Step~$u$ kills a key $s<u-1$ that is alive at $u-1$ exactly when $W_u<W_s$, and it kills the key written at $u-1$ under the same condition.
Since the levels of the keys alive at $u-1$ do not decrease in~$s$, the keys with $W_s>W_u$ among them form a suffix in time.
Every key alive at $u-1$ has $W_s\le W_{u-1}$, so a step that does not decrease the walk kills nothing.
Every key alive at~$u$ has a level of at most~$W_u$, so the key written at~$u$ joins at the top.
The alive keys therefore evolve as a stack. \hfill$\square$

\subsection{Proof of \texorpdfstring{\cref{prop:2.4}}{Proposition 2.4}}

(a)~Let $s\in J$ and $t\in I$ with $J$ before $I$.
The range $(s,t]$ is the disjoint union of the rest of $J$ after~$s$, the full blocks strictly between $J$ and~$I$, and the part of~$I$ up to~$t$.
So $\rmin_{ts}$ is the minimum of the suffix minimum of $J$ after~$s$, the minimum over the blocks in between, and the prefix minimum of $I$ up to~$t$.
Suffix and prefix minima of a block take $O(B)$ operations, and the $O(T/B)$ block minima take $O(T)$ for the whole sequence.
The minimum over the blocks in between costs $O(1)$ per pair of blocks when it runs back along~$J$ from the block before~$I$.
The bias of a pair of blocks is then an element-wise expression in these minima, $\lvl$ and~$\str$.
On a diagonal block the range lies inside the block, and the running minimum $\rmin_{ts}=\min(\rmin_{t-1,s},\fld_t)$ gives all range minima with $B(B-1)/2$ updates.

(b)~Since $(s,t]=(s,t-1]\cup\{t\}$, $\rmin_{ts}=\min(\rmin_{t-1,s},\fld_t)$, and the bias needs only $\rmin_{ts}$ and $\lvl_s$ for each cached key.
Since $\rmin_{ts}$ does not increase in~$t$, a key with $\rmin_{ts}<\lvl_s$ stays dead. \hfill$\square$

\subsection{Proof of \texorpdfstring{\cref{thm:3.1}}{Theorem 3.1}}

At the final query $t=3r-1$ the token and position of key~$j$ are the same in every sequence, so $k_j:=k_{\kpos_j}$ and $\posb_j:=\posb(t,\kpos_j)$ do not depend on the pattern.
Fix a pattern~$P$ and let $q=q^P$.
All weights at the query share one denominator, so the hypothesis orders the logits of the keys, and $\min_{a\in A^P}\lgt_{\kpos_a}>\max_{x\notin A^P}\lgt_{\kpos_x}$, where the maximum over an empty set is $-\infty$.
Pick $\theta_0$ strictly in between and set $\theta=\theta_0-c^P$.
The function $f(j)=\langle q,k_j\rangle+\posb_j-\theta$ is then positive on the alive keys and negative on the dead keys among $j=1,\dots,r-1$.
So the class $\fcls=\{j\mapsto\langle q,k_j\rangle+\posb_j-\theta:q\in\R^d,\theta\in\R\}$ realises every sign pattern on $\{1,\dots,r-1\}$.

Suppose that $r-1\ge d+2$.
The $r-1$ vectors $(k_j,1)\in\R^{d+1}$ are then linearly dependent, $\sum_{j<r}a_j(k_j,1)=0$ with $a\ne0$, and the last coordinate gives $\sum_ja_j=0$, so both $J_+=\{j:a_j>0\}$ and $J_-=\{j:a_j<0\}$ are non-empty.
Every $f\in\fcls$ satisfies $\sum_{j<r}a_jf(j)=\sum_{j<r}a_j\posb_j=:C$, a constant.
If $C\le0$, a function positive exactly on $J_+$ would make the sum positive, and if $C>0$, a function positive exactly on $J_-$ would make it negative.
In both cases some pattern of the family is not realised, a contradiction, so $r-1\le d+1$.

For the path term, the range $(\kpos_j,t]$ consists of the blocks $\pi_j,\dots,\pi_{r-1}$, the keys $j+1,\dots,r$ and the query.
By the fourth property of $\ladder_r$, $\Lambda_j=\sum_{\kpos_j<u\le t}\tokfn(x_u)$ is the same for every pattern.
The proof uses $\posb$ only through the numbers $\posb_j$, which must not depend on the pattern, and replacing them by $\posb_j+\Lambda_j$ keeps that property. \hfill$\square$

This is the classical bound on the VC dimension of the positivity sets of a fixed function plus a $(d+1)$-dimensional space of functions \citep{cover1965geometrical,dudley1978central}.
By the Sauer--Shelah lemma \citep{sauer1972density,shelah1972combinatorial} such a head orders at most $\sum_{i\le d+1}\binom{r-1}{i}$ of the $2^{r-1}$ sequences, fewer than 90\% for $d=4$ and $r\ge9$.
With $d=r-2$ the bound is attained by the keys $k_{\kpos_j}=e_j$ for $j\le r-2$ and $k_{\kpos_{r-1}}=k_{\kpos_r}=0$.
The positional bias is zero on the keys $j<r$, large on the last key and large and negative on the positions without a key, and the query has the coordinate $L$ or $-L$ by the fate of each key.
The path term must depend on the token alone.
If it may also read the position, a head with $d=0$ already orders every sequence of $\ladder_r$.

\subsection{Proof of \texorpdfstring{\cref{thm:3.2}}{Theorem 3.2}}

Let $q^P$, $c^P$, $k^P_s$ and $\lgt^P_s$ denote the query, the offset, the keys and the logits at the final query of~$x^P$.

\begin{lemma}[Gap]
  \label{lem:A.1}
  Let $0<\tol\le1/(2r+2)$ and let the head realise $A^P$ with tolerance~$\tol$ on~$x^P$.
  Then for all alive keys $a,a'$ and every dead key~$x$,
  \begin{equation*}
    |\lgt^P_{\kpos_a}-\lgt^P_{\kpos_{a'}}|\le\gapA=\log\frac{1+(r-1)\tol}{1-\tol},\qquad
    \lgt^P_{\kpos_a}-\lgt^P_{\kpos_x}\ge\gapD=\log\frac{1-\tol}{r\tol},
  \end{equation*}
  and $\gapD>\gapA$.
\end{lemma}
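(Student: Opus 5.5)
The plan is to work directly from the tolerance definition, using only that all weights at the final query share one denominator. The ratio of any two weights is then $e$ raised to the difference of their logits. Write $w_a$ for the weight of key~$a$ and let $m=|A^P|$, so $1\le m\le r$. Tolerance~$\tol$ gives $w_a\ge(1-\tol)/m$ for every alive key. Since the weights sum to one and the other alive keys carry at least $(m-1)(1-\tol)/m$, each alive key also satisfies
\begin{equation*}
  w_a\le 1-\frac{(m-1)(1-\tol)}{m}=\frac{1+(m-1)\tol}{m}.
\end{equation*}
Each dead key has $w_x\le\tol$, because the dead keys together carry at most~$\tol$.

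For the alive gap, I would take the ratio of the upper bound to the lower bound:
\begin{equation*}
  \lgt^P_{\kpos_a}-\lgt^P_{\kpos_{a'}}=\log\frac{w_a}{w_{a'}}\le\log\frac{1+(m-1)\tol}{1-\tol}.
\end{equation*}
This is increasing in~$m$, and $m\le r$, so it is at most~$\gapA$. The bound is symmetric in $a,a'$, which gives the absolute value.

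For the dead gap, I would bound the ratio below:
\begin{equation*}
  \lgt^P_{\kpos_a}-\lgt^P_{\kpos_x}=\log\frac{w_a}{w_x}\ge\log\frac{(1-\tol)/m}{\tol}\ge\log\frac{1-\tol}{r\tol}=\gapD.
\end{equation*}
One edge case needs a remark. If $w_x=0$, the logits would not be finite, but all logits are assumed finite, so $w_x>0$ and the logarithm is defined.

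The main step is the strict inequality $\gapD>\gapA$. It is equivalent to
\begin{equation*}
  (1-\tol)^2>r\tol\bigl(1+(r-1)\tol\bigr).
\end{equation*}
I would show that the left side decreases and the right side increases in~$\tol$ on $(0,1)$. It then suffices to check the worst case $\tol=1/(2r+2)$. Multiplying by $(2r+2)^2$, the inequality becomes
\begin{equation*}
  (2r+1)^2>r(2r+2)+r(r-1)=3r^2+r.
\end{equation*}
Expanding gives $4r^2+4r+1>3r^2+r$, which holds for every $r\ge1$. This monotonicity-plus-endpoint check is the only place where the constant $1/(2r+2)$ is used. I expect it to be the main, though routine, obstacle.
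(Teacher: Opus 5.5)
Your proof is correct and follows the paper's argument step for step. It uses the lower bound $(1-\tol)/m$ from the tolerance, the upper bound $(1+(m-1)\tol)/m$ from the weights summing to one, $w_x\le\tol$ for dead keys, and the reduction of $\gapD>\gapA$ to $(1-\tol)^2>r\tol(1+(r-1)\tol)$, settled by monotonicity and the endpoint $\tol=1/(2r+2)$; you also make explicit the step $m\le r$ that the paper leaves implicit.
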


\begin{proof}
  Let $m=|A^P|$.
  Every alive key has $w_a\ge(1-\tol)/m$, and the weights of all positions sum to one, so another alive key has $w_{a'}\le1-(m-1)(1-\tol)/m$ and $w_{a'}/w_a\le(1+(m-1)\tol)/(1-\tol)$.
  A dead key has $w_x\le\tol$, so $w_a/w_x\ge(1-\tol)/(r\tol)$.
  The inequality $\gapD>\gapA$ is equivalent to $(1-\tol)^2>r\tol(1+(r-1)\tol)$, whose left side decreases and whose right side increases in~$\tol$.
  At $\tol=1/(2r+2)$ it reads $(2r+1)^2>r(3r+1)$, that is $r^2+3r+1>0$.
\end{proof}

\begin{lemma}[One pattern per query]
  \label{lem:A.2}
  Under the hypothesis of \cref{thm:3.2}, if $P\ne P'$ and $P_1=P'_1=1$, then $q^P\ne q^{P'}$.
\end{lemma}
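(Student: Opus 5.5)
The plan is a contradiction argument. Suppose $P\ne P'$, $P_1=P'_1=1$, and $q^P=q^{P'}=:q$. Let $i$ be the first index with $P_i\ne P'_i$; then $i\ge2$. Swap the names if needed so that $P_i=1$ and $P'_i=0$. By the second property of $\ladder_r$, the prefixes of $x^P$ and $x^{P'}$ agree up to position $\kpos_i$. Since keys are fixed when written, $k^P_{\kpos_j}=k^{P'}_{\kpos_j}$ for all $j\le i$. The positional bias $\posb(t,\kpos_j)$ is the same in both sequences, because the final query sits at the same position $t=3r-1$. Hence for $j\le i$ the logits differ only by the offset:
\begin{equation*}
  \lgt^P_{\kpos_j}-\lgt^{P'}_{\kpos_j}=c^P-c^{P'}.
\end{equation*}
The common query $q$ enters both logits identically, so this difference is the same constant for every $j\le i$.

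Next I compare key~$1$ with key~$i$. Key~$1$ is alive in both sequences, since $P_1=P'_1=1$. Key~$i$ is alive in $x^P$ and dead in $x^{P'}$. By \cref{lem:A.1} applied to $x^P$, which has both keys alive,
\begin{equation*}
  \lgt^P_{\kpos_1}-\lgt^P_{\kpos_i}\le\gapA.
\end{equation*}
Applied to $x^{P'}$, where key~$1$ is alive and key~$i$ is dead, it gives
\begin{equation*}
  \lgt^{P'}_{\kpos_1}-\lgt^{P'}_{\kpos_i}\ge\gapD.
\end{equation*}
The offset cancels in these differences, so the two left-hand sides are equal, namely $\langle q,k_{\kpos_1}-k_{\kpos_i}\rangle+\posb(t,\kpos_1)-\posb(t,\kpos_i)$. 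Together the bounds give $\gapD\le\gapA$, which contradicts $\gapD>\gapA$ from \cref{lem:A.1}. Hence $q^P\ne q^{P'}$.

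The hypothesis $P_1=1$ is what supplies a common alive reference key written before the first difference. This is why the lemma restricts to such patterns. The finite-precision assumption is not needed here. It enters only afterwards, when one counts that at most $2^{dp}$ distinct queries exist against the $2^{r-2}$ patterns with $P_1=1$. The step that needs the most care is checking that every ingredient of the logit of keys $1$ and $i$ is shared between the two sequences. This includes the causal dependence of $k_s$, the fixed position of the final query, and the fact that $c$ cancels in within-sequence differences. It also uses the property that $\kpos_i$ precedes every token affected by the difference at $\pi_i$.
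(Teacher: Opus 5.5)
Your proof is correct and follows the paper's argument essentially step for step. You take the first index where $P$ and $P'$ differ and use the prefix property so that the keys at $\kpos_1$ and $\kpos_i$ agree; the offset then cancels in the within-sequence logit difference, and the $\gapA$ bound from $x^P$ contradicts the $\gapD$ bound from $x^{P'}$ by \cref{lem:A.1}, and your remark that the precision assumption enters only in the later counting step also matches the paper.
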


\begin{proof}
  Suppose that $q^P=q^{P'}=q$, and let $j$ be the smallest index with $P_j\ne P'_j$, so that $2\le j\le r-1$, and assume $P_j=1$ and $P'_j=0$.
  The prefixes of $x^P$ and $x^{P'}$ up to $\kpos_j$ coincide by the second property of $\ladder_r$, so the keys at $\kpos_1$ and $\kpos_j$ coincide as well.
  With the common query and positional bias the offset cancels, and $\Delta=\lgt^P_{\kpos_j}-\lgt^P_{\kpos_1}=\lgt^{P'}_{\kpos_j}-\lgt^{P'}_{\kpos_1}$.
  In $x^P$ keys 1 and~$j$ are both alive, so $\Delta\ge-\gapA$, and in $x^{P'}$ key~1 is alive and key~$j$ dead, so $\Delta\le-\gapD<-\gapA$, a contradiction.
\end{proof}

There are $2^{r-2}$ patterns with $P_1=1$, and by \cref{lem:A.2} they have pairwise different queries.
Each of the $d$ coordinates of a query takes at most $2^p$ values, so $2^{r-2}\le2^{dp}$, that is $dp\ge r-2$. \hfill$\square$

The proof uses four facts.
All sequences share the length and the key positions, the prefix up to key~$j$ depends only on the fates of the earlier keys, key~1 is alive in $2^{r-2}$ patterns, and the tolerance gives $\gapD>\gapA$.
Without a bound on the values of the query no lower bound on $d$ alone follows.
Keys built from the prefix with a one-coordinate midpoint code separate the alive keys from the dead ones by a single threshold at every time in exact arithmetic, so even $d=1$ meets the ordering condition of \cref{thm:3.1} with such keys.
The argument does not apply to biases in which a later token acts on each key separately, as in Selective Attention, DCP and causal attention with lookahead keys \citep{anagnostidis2023dynamic,leviathan2025selective,song2026causal}.
The contribution of such a bias for key~$j$ is not a function of the prefix up to $\kpos_j$.

\subsection{Proof of \texorpdfstring{\cref{prop:3.3}}{Proposition 3.3}}

Fix a sequence of $\ladder_r$ and its final query.
In the stack form the walk lives on the half-integer grid, so a dead position has $\rmin_{ts}\le W_s-\tfrac12$ and bias at most $-\str/2$, and an alive position has bias zero.
The logit of a key position is its bias, and the logit of every other position, the query included, is its bias minus~$M$.
So the alive keys have logit~0, the dead keys at most $-\str/2$ and the other positions at most $-M$.
With $Z=|A|+R$, at most $r-1$ dead keys and exactly $2r$ other positions,
\begin{equation*}
  R=\sum_{\text{dead keys}}e^{b_{ts}}+\sum_{\text{other}}e^{b_{ts}-M}\le(r-1)e^{-\str/2}+2re^{-M}\le\tfrac\tol2+\tfrac\tol2=\tol.
\end{equation*}
Key~$r$ is always alive, so $|A|\ge1$.
All positions other than the alive keys then have total weight $R/Z\le\tol$, and each alive key has weight $1/Z\ge1/(|A|+\tol)\ge(1-\tol)/|A|$.
The increments are multiples of $\tfrac12$ and $0\le W\le r$, so the walk takes at most $2r+1$ values. \hfill$\square$

Without the dot product the query position keeps the bias zero, the largest possible, so the field alone cannot meet the tolerance once $\tol\le1/(|A|+1)$.

\subsection{Proof of \texorpdfstring{\cref{prop:3.4}}{Proposition 3.4}}

First, $b_{ts}=\str\min\bigl(0,\min_u(\fld_u-\lvl_s)\bigr)=-\str\max\bigl(0,\max_u(\lvl_s-\fld_u)\bigr)=-\str D^{\max}_{ts}$.
In Selective Attention the selection of token~$u$ for key~$s$ passes through a ReLU and is summed over the positions strictly before~$t$, and the first token and the key itself are not masked \citep{leviathan2025selective}.
With a selection head of the form $\sasc(\lvl_s-\fld_u)$ and $\sasc>0$, the mask is therefore $F_{ts}=\sum_{s<u<t}\mathrm{relu}(\sasc(\lvl_s-\fld_u))=\sasc D^{\mathrm{sum}}_{ts}$.

(i)~An inserted step adds an element of at most $D^{\max}_{ts}$ to the set whose maximum is $D^{\max}_{ts}$, so the maximum and the bias do not change, while $D^{\mathrm{sum}}_{ts}$ grows by the inserted dip.
For tokens under \eqref{eq:2} this is not automatic, since the increment of a token shifts every later level.
If $\inc$, $g$ and~$h$ depend on the token alone, a block of tokens whose increments sum to zero and whose dips stay at most $D^{\max}_{ts}$ preserves the values of all other positions.

(ii)~Set $\lvl_s=0$ and $t=s+m+1$, and let every other $\fld_v$ with $s<v\le t$ be at least zero.
One step $u\in(s,t)$ with $\fld_u=-\dep$ gives $D^{\max}_{ts}=D^{\mathrm{sum}}_{ts}=\dep$, and $m$ such steps give $D^{\max}_{ts}=\dep$ and $D^{\mathrm{sum}}_{ts}=m\dep$.

(iii)~Let a multiset have maximum $x^\ast$ and other elements $x_1,\dots,x_k$.
Starting from $\{x^\ast\}$ and inserting $x_1,\dots,x_k$ one at a time never changes $\agg$, because each inserted element is at most the maximum, which stays $x^\ast$.
So $\agg$ of the multiset equals $\agg(\{x^\ast\})$. \hfill$\square$

\subsection{Proof of \texorpdfstring{\cref{prop:3.5}}{Proposition 3.5}}

The family is the graded-depth family of \cref{app:B.3}.
Key~$j$ has level~$L_j$, with $L_j-L_{j-1}\in\{5,6,7\}$, and a dip of depth $\dep$ consists of the dip tokens at levels $L_j-x$ for $x\in\{\dep,\dep-1,0\}$ (for $\dep=1$ the depths are $\{1,0,0\}$), its deepest token repeated $m_j\in\{1,2\}$ times.
A head realises the target exactly on the keys when $\lgt_s+\shp\dep_s$ is the same for every key~$s$ of the sequence.

(a)~Fix the level $L$ of the first key and a head of the class that realises the target exactly on every sequence whose first key has level~$L$.
Let $f_x=\pairfn(\texttt{D}\langle L-x\rangle,\texttt{K}\langle L\rangle)\ge0$ denote the contribution of the dip token at depth~$x$.
Let two sequences differ only in the duration of the first key's dip of depth~$x$, which is held once in one and twice in the other.
The second has one more copy of the deepest token between key~1 and the query, so the logit of key~1 is lower by $f_x$, while every later key keeps its logit and no target changes.
Key~2 shows that the common value of $\lgt_s+\shp\dep_s$ is the same in both sequences, and key~1 then gives $f_x=0$, for $x=1,2,3,4$.
Next let two sequences differ only in the first key's dip, of depth~2 in one and depth~3 in the other.
By the first step the dip tokens contribute $f_0$ to key~1 in both, every other token after key~1 is the same, and every later key keeps its logit.
Exact realisation then requires $\lgt_1+2\shp=\lgt_1+3\shp$, which contradicts $\shp>0$.

(b)~Let $\fld_u$ be the level of the token for dip and key tokens and a value above every level for all other tokens.
Let $\lvl_s$ be the level of key~$s$ and $\str=\shp$, and let the score between query and key be the same for every key.
The tokens after key~$s$ are its own dip, whose minimum level is $L_s-\dep_s$, other tokens above every level, and later blocks at levels of at least $L_s+1$.
So the bias of key~$s$ is $-\shp\dep_s$ and the target is met exactly.
Holding a step repeats a value already in the range, and an inserted dip of key~$s$ that is no deeper adds values of at least $L_s-\dep_s$ after it and above every earlier level, so no range minimum changes.
In \eqref{eq:2} this is $\inc=0$ with token-dependent offsets $g$ and~$h$. \hfill$\square$

Part~(a) uses only sequences in which key~1 has a dip and a later key follows it.
With fixed durations the class fits the targets exactly, with contributions $(0,1,1,2,2)\,\shp$ of a dip token at depths 0 to~4, so only durations that vary for dips of the same depth separate the two.

\subsection{Proof of \texorpdfstring{\cref{prop:3.6}}{Proposition 3.6}}

In the two-walk family $\twowalk_r$ every token carries one step for each of two walks.
The key token has the steps $(1,1)$, the four block tokens carry $(\pm\tfrac12,\pm\tfrac12)$, and $\tokbos$ and \tokq{} carry $(0,0)$.
The sequence $x^{P^{(1)},P^{(2)}}$ has the shape of a ladder sequence, and block~$i$ consists of two tokens whose first steps follow $P^{(1)}_i$ and whose second steps follow $P^{(2)}_i$ as in~$\ladder_r$.
The alive key sets at the final query are therefore $A^{(h)}=\{j<r:P^{(h)}_j=1\}\cup\{r\}$ for $h=1,2$, and the complementary part has $P^{(2)}=\lnot P^{(1)}$.

Let $k^h_j$ and $\posb^h_j$ denote the key and the positional bias of head~$h$ at key~$j$, which do not depend on the sequence.
Let $z_j=(k^2_j,-k^1_j)\in\R^{2d}$ and $\nu_j=\posb^2_j-\posb^1_j$.
Both heads subtract the same $F_{\kpos_j}$, so with $Q=(q^2,q^1)$ the difference of the heads is
\begin{equation*}
  \lgt^2_{\kpos_j}-\lgt^1_{\kpos_j}=\langle Q,z_j\rangle+\nu_j+(c^2-c^1),\qquad j=1,\dots,r.
\end{equation*}

(a)~On an ordered sequence of the complementary part, let $x<r$ be alive under walk~2 and dead under walk~1, and $y<r$ the reverse.
Head~2 gives $\lgt^2_{\kpos_x}>\lgt^2_{\kpos_y}$ and head~1 gives $\lgt^1_{\kpos_y}>\lgt^1_{\kpos_x}$, so the difference is larger at $x$ than at~$y$.
A threshold between the two groups then makes a function of $\fcls=\{j\mapsto\langle Q,z_j\rangle+\nu_j-\theta\}$ positive exactly on $\{j<r:P^{(1)}_j=0\}$.

(b)~With tolerance $\tol\le1/(2r+2)$, \cref{lem:A.1} applies to each head with its own alive set.
Key~$r$ is alive under both walks.
For $x<r$ alive under walk~2 and dead under walk~1 the difference at $x$ exceeds that at $r$ by at least $\gapD-\gapA>0$.
For $y<r$ alive under walk~1 the difference at $r$ exceeds that at $y$ by the same amount.
So $j\mapsto\langle Q,z_j-z_r\rangle+\nu_j-\nu_r$, in which the offsets cancel, is positive exactly on $\{j<r:P^{(1)}_j=0\}$.

In both parts distinct sequences give distinct sign patterns on $\{1,\dots,r-1\}$.
The class of part~(a) shatters no $2d+2$ points, because the vectors $(z_j,1)\in\R^{2d+1}$ of any $2d+2$ points are linearly dependent and the argument of \cref{thm:3.1} applies.
The class of part~(b) shatters no $2d+1$ points for the same reason with the vectors $z_j-z_r\in\R^{2d}$.
By the Sauer--Shelah lemma \citep{sauer1972density,shelah1972combinatorial} the classes have at most $N(r,d)$ and $N'(r,d)$ sign patterns on $r-1$ points. \hfill$\square$

The proof uses only that both heads subtract the same mask, which may depend arbitrarily on the sequence, and that the pair of heads is fixed in advance.
It covers Selective Attention whose keys depend only on the token and the position, including RoPE at a common final query, and in part~(a) a fixed positive scale of the mask in one head.
It also covers the masks of DCP, which may remove a key completely.
Every key that a counted sequence needs has a finite mask, because it is alive under one of the walks.
In part~(a) the two sequences in which one walk keeps every key are the exception, and the class~$\fcls$ already contains their patterns.
For keys and positional biases in general position and a mask chosen freely the counts $N(r,d)$ and $N'(r,d)$ are attained, the second once the biases are large enough for the tolerance.

\section{One-layer probes}
\label{app:B}

\subsection{Common setup}

Every probe was a single attention layer on token embeddings of width 64, supervised at the final query, and \cref{app:D} lists the settings of every probe.
The values were fixed one-hot vectors of the key positions and the readout was the identity, so the output of the head was its weight on each key.
The loss was the cross-entropy of these weights against the target.
The deeper models of \cref{tab:5} add $L-1$ pre-norm blocks of width 64 with 4 heads and a feed-forward width of 256 in front of the supervised head.
We trained with AdamW (0.9, 0.98), weight decay 0.01, batch 64, a warm-up of 200 steps followed by cosine decay to zero, gradient clipping at 1.0 and single precision, and we drew fresh training data at every step.
Training lasted 4000 steps, and 16,000 steps on the graded-depth families.
The seeds of the main runs were 1801 to 1815, and the sampled test sets used seeds that no tuning run had used.

We declared the same search of 24 tuning runs for every method on the alive-set, graded-depth and two-walk families.
FPA, FoX, Selective Attention and DCP crossed three learning rates in $\{10^{-3},3\times10^{-3},10^{-2}\}$ and two tuning seeds with four values of one more setting.
This setting was the definition of the minimum for FPA, the initial gate bias for FoX, the selection scale for Selective Attention and the initial bias of the interaction scores for DCP.
The methods without such a setting searched twelve learning rates from $10^{-4}$ to $10^{-1}$.
A grid whose end was selected was extended once by a decade.
Each competitor was tuned on its positive control and FPA on the setting where it can succeed, and we froze the chosen configurations with a hash before the main runs.
After the pilots we fixed the definition of FPA as the straight-through minimum of \cref{sec:2} with a learned strength per head, and we selected its learning rate again.

The search departed from the declaration in four ways.
Grid extensions, remedies for positive controls and the second selection for FPA raised the number of tuning runs above 24 (\cref{tab:4}).
The configurations on the graded-depth families and that of Selective Attention with one mask per head came from exploratory runs instead of the declared search.
These runs compared only the selection scales 1 and $1/3$ for Selective Attention on the graded depths.
The learning rates of Selective Attention on the graded depths and of the variant with one mask per head lay at the upper end of their grids, which we did not extend.
The sum of dips on the graded depths took the learning rate of FPA without a search of its own.
Selective Attention with a shared mask used the scale $1/3$, which tuning chose on the positive control of the two walks, on the two-walk and retrieval families.
Unlike on the alive sets, no main run on these families used the scale 1 of its paper.
After its one extension the selected learning rate still lay at an end of the grid in four cases.
These were FPA and FPA with the sum of dips on the stack family, stick-breaking attention in retrieval and softmax attention with RoPE in variable assignment.

\begin{table}[htbp]
  \caption{\textbf{Tuning runs and GPU minutes of the search in the pilots} on the two families whose comparisons rest on a declared search.
    A dagger marks a search that took more than 1.25 times the GPU time of the cheapest compared search of its family.
    We estimated the GPU minutes of runs that shared the GPU from a calibration of throughput.
    The configuration of Selective Attention with one mask per head came from later exploratory runs.
    FPA, First Passage Attention.
    FoX, Forgetting Transformer.
    ALiBi, attention with linear biases.
    NoPE, no positional encoding.
    RoPE, rotary position embedding.
    DCP, Dynamic Context Pruning.}
  \label{tab:4}
  \centering
  \small
  \setlength{\tabcolsep}{4pt}
  \begin{tabular}{@{}llrr@{}}
\toprule
Family & Method & Runs & GPU min \\
\midrule
Alive sets & FPA$^{\dagger}$ & 48 & 46.3 \\
 & FoX$^{\dagger}$ & 40 & 28.9 \\
 & Selective Attention$^{\dagger}$ & 40 & 30.2 \\
 & Softmax + ALiBi & 28 & 16.6 \\
 & Softmax + NoPE & 28 & 16.3 \\
 & Softmax + RoPE & 24 & 15.8 \\
 & Stick-breaking & 24 & 16.5 \\
\midrule
Two walks & FPA & 36 & 32.5 \\
 & Selective Attention, shared mask & 44 & 32.6 \\
 & Selective Attention, one mask per head & 40 & 30.0 \\
 & DCP$^{\dagger}$ & 44 & 66.7 \\
\bottomrule
\end{tabular}

\end{table}

For every component we ran a one-sided exact McNemar test of FPA against the competitor on the per-seed successes, and a hypothesis was supported only when all its components rejected at $\sig=0.05/7$.
The seven tests are the three tests of \cref{sec:4.1} and two tests of each of the two language-model hypotheses, of which we ran one each.
At the tested settings of the alive-set and two-walk families, the counts of \cref{app:A} rule out every success of the compared competitors.
Each test on these families then depends only on the successes of FPA, with $p=2^{-k}$ for $k$ of them.
The preregistered refutation of each probe hypothesis was a one-sided 99\% Clopper--Pearson upper bound below $0.5$ on the success rate of FPA.
On the alive-set and two-walk families a competitor run beyond these counts would also have refuted the hypothesis, unless an implementation error or, for two walks, a model outside the assumptions of \cref{prop:3.6} explained it.
No run went beyond them, and no hypothesis was refuted.
In these probes we trained the selection of Selective Attention with the gradient of a softplus in place of that of its ReLU, while the forward pass kept the ReLU of the paper.
With the ReLU gradient the selections of several pilot heads closed at the start of training and never reopened.
We checked the forward pass of our implementation element by element against a line-by-line transcription of the sketch in Figure~2 of the paper of Selective Attention.

\subsection{Alive sets}

The ladder families of \cref{app:A.1} with $r\in\{6,10,14,18\}$ used the full set of $2^{r-1}$ sequences as their test set.
The query at the final position came from a network that read the one-hot vectors of all tokens through a hidden layer of width 256 with the Gaussian error linear unit and an output of size $Hd$.
The query could thus depend arbitrarily on the prefix, as \cref{thm:3.1} allows.
Every key had a learned offset per position and head, and FPA, FoX, Selective Attention and softmax attention also used RoPE.
ALiBi added the fixed slope $2^{-8}$ of a single head, and NoPE and stick-breaking attention, which followed its paper, used the offsets alone.
The deeper models of \cref{tab:5} had neither this query network nor these offsets.
The field of FPA and the gates of FoX read only the token embedding.
A test of the rank of the key logits confirmed that each competitor stayed in the class of the theorem.
The positive controls with $d=r+2$ admit a construction in the class, keys equal to one-hot vectors of the key positions and queries equal to the indicator of the alive set.

\begin{table}[htbp]
  \caption{\textbf{Every cell of the alive-set family}, as seeds out of 15 that ordered the alive keys above the dead keys on at least 90\% of the sequences.
    Selective Attention appears with the selection scale 10 chosen in tuning and with the scale 1 of its paper.
    The last rows compare the level of the key with the level of the query, whose alive sets are always suffixes.
    A dash marks a cell that the design did not include.
    FPA, First Passage Attention.
    RoPE, rotary position embedding.
    NoPE, no positional encoding.
    ALiBi, attention with linear biases.
    FoX, Forgetting Transformer.}
  \label{tab:5}
  \centering
  \small
  \begin{tabular}{@{}lcccc@{}}
\toprule
Method ($d=4$, one layer) & $r=6$ & $r=10$ & $r=14$ & $r=18$ \\
\midrule
FPA & 15/15 & 11/15 & 12/15 & 13/15 \\
Softmax + RoPE & 15/15 & 0/15 & 0/15 & 0/15 \\
Softmax + NoPE & 14/15 & 0/15 & 0/15 & 0/15 \\
Softmax + ALiBi & 14/15 & 0/15 & 0/15 & 0/15 \\
FoX & 15/15 & 0/15 & 0/15 & 0/15 \\
Selective Attention, scale 10 & 15/15 & 15/15 & 13/15 & 15/15 \\
Selective Attention, scale 1 & 15/15 & 15/15 & 15/15 & 15/15 \\
Stick-breaking attention & 15/15 & 0/15 & 0/15 & 0/15 \\
\midrule
\multicolumn{5}{@{}l}{Positive controls, $d=r+2$} \\
Softmax + RoPE & --- & 15/15 & 15/15 & 15/15 \\
Softmax + NoPE & --- & 15/15 & 15/15 & 15/15 \\
Softmax + ALiBi & --- & 15/15 & 15/15 & 15/15 \\
FoX & --- & 15/15 & 15/15 & 15/15 \\
\midrule
\multicolumn{5}{@{}l}{Deeper models, $d=4$ in the last layer} \\
Softmax + RoPE, 2 layers & --- & --- & 15/15 & 0/15 \\
Softmax + RoPE, 4 layers & --- & --- & 15/15 & 15/15 \\
FoX, 2 layers & --- & --- & 15/15 & 8/15 \\
FoX, 4 layers & --- & --- & 15/15 & 15/15 \\
\midrule
\multicolumn{5}{@{}l}{Level of the key against the level of the query ($d=1$)} \\
FPA, field only & --- & 12/15 & 10/15 & --- \\
FPA, threshold of the query & --- & 0/15 & 0/15 & --- \\
\quad on suffix alive sets only & --- & --- & 12/15 & --- \\
\bottomrule
\end{tabular}

\end{table}

The four competitors in the class succeeded at $r=6$, where $d=r-2$ allows every pattern, and failed on every seed at larger~$r$, while their positive controls succeeded on every seed.
Two-layer softmax and FoX models learned the family at $r=14$ on every seed, at $r=18$ the two-layer softmax models failed on every seed and the two-layer FoX models succeeded on 8, and four-layer models learned both.
A field-only FPA head with a dot product of dimension~1 learned the family on 12 seeds at $r=10$ and on 10 at $r=14$. With the level of the query in place of the level of the key, as in StructFormer and PRPN, the same head succeeded on no seed, although it learned a family restricted to suffixes on 12 seeds.

\subsection{Graded depths and varied durations}
\label{app:B.3}

Each sequence of the graded-depth family starts with $\tokbos$ and holds six keys whose tokens carry their levels, with consecutive levels 5, 6 or 7 apart.
After each key come zero to three filler tokens, with probability 0.75 a dip, and zero to three fillers again, and the sequence ends with the query~\tokq.
A dip of depth $\dep\in\{1,2,3,4\}$ consists of three dip tokens whose levels lie $\dep$, $\dep-1$ and $0$ below the key, in random order, and the target weight of key~$j$ is proportional to $e^{-\dep_j}$.
The model had no positional information, its query came from the token~\tokq{} alone, its keys came from their tokens alone, and its head dimension was~8.
A seed fitted a test set when the weights on the keys, renormalised over the keys, were within total variation 0.05 of the target on at least 90\% of the 2000 sequences.
With fixed durations every deepest step lasted one step, and with varied durations each lasted one or two steps, drawn per dip.
We drew the spacings of the levels, the fillers, the dips and their depths, orders and durations independently.

\begin{table}[htbp]
  \caption{\textbf{Graded depths with fixed and with varied durations}, as seeds out of 15 that fitted the training distribution.
    The sum of dips replaces the minimum of FPA and keeps the same field.
    FPA, First Passage Attention.}
  \label{tab:6}
  \centering
  \small
  \begin{tabular}{@{}lcc@{}}
\toprule
Model & Fixed durations & Varied durations \\
\midrule
FPA & 9/15 & 13/15 \\
FPA, sum of dips & 15/15 & 0/15 \\
Selective Attention & 15/15 & 0/15 \\
\bottomrule
\end{tabular}

\end{table}

The secondary test measured, on the seeds that fitted fixed durations, the change of the total variation when a deepest step was held two or three steps or a dip no deeper was inserted.
The rule required both methods to fit fixed durations on 80\% of the seeds, which FPA reached on only 9.
After the pilots and before the main runs we decided that the deviation counts only the keys after the first one.
The dip of the first key lies in the range of no other key, and the learned walks read its depth by summing increments.
A separate family in which the levels follow the walk, built to check invariance on the side of FPA, was fitted by FPA and by its sum of dips on all 15 seeds.
With every dip held 0 or 16 steps at its bottom instead of 4 to 8, the share of ordered sequences fell by at most 0.1 on 14 seeds of FPA and 10 of the sum of dips.
Only FPA reached the preregistered line of 12 such seeds.

\subsection{Two walks}

The two-walk family of \cref{app:A} with $r=14$ has 8192 sequences in its complementary part, all of which formed the test set.
The queries and keys had learned offsets per position and head, the final query came from the network of the alive-set family, and the fields of FPA read only the token embedding.
The training data mixed uniform pairs of patterns (one half), complementary pairs (one quarter) and pairs in which one walk keeps every key (one eighth each).
DCP used an interaction dimension of 2 and an $\alpha$-sigmoid whose exponent rose from 1 to 2 during training.
Its step function at inference can remove an alive key completely, so in the training loss only, we replaced a log-probability of exactly $-\infty$ by $-50$.

\begin{table}[htbp]
  \caption{\textbf{Two walks with two heads of dimension 2}, as seeds out of 15 that ordered the keys of both walks on at least 90\% of the sequences.
    One mask per head is our variant of Selective Attention, in which every head computes its own selection.
    FPA, First Passage Attention.
    DCP, Dynamic Context Pruning.}
  \label{tab:7}
  \centering
  \small
  \begin{tabular}{@{}lcc@{}}
\toprule
Model & Complementary walks & Identical walks \\
\midrule
FPA & 11/15 & --- \\
Selective Attention, shared mask & 0/15 & 15/15 \\
Selective Attention, one mask per head & 13/15 & --- \\
DCP & 0/15 & 13/15 \\
\bottomrule
\end{tabular}

\end{table}

No run of a shared mask on the complementary part ordered more than 2{,}380 sequences or met the tolerance $1/30$ on more than 1{,}093, the bounds of \cref{prop:3.6}.

\subsection{Stack}

The stack family consists of random legal sequences of 40 operations, pushes of eight symbols and pops, with a stack depth of at most six.
The target at the final query is uniform over the pushed items still on the stack, and the head dimension was~8.
FPA succeeded on 14 seeds and FPA with the sum of dips on 11, and one-layer FoX, softmax attention with RoPE and Selective Attention succeeded on none.
These competitors used the learning rates of an earlier check, and FoX used the default initialisation of its gates.
We have no proof that they cannot learn the family in one layer, so we draw no comparison from them.

\subsection{Retrieval among alive keys}
\label{app:B.6}

This task needs both parts of FPA at once.
A sequence holds 16 slots and ends with a query for one name.
Each slot is a key with a name in $\{0,\dots,5\}$ and a level in $\{1,\dots,8\}$ with probability 0.6 and a dip at a level otherwise, and zero or one filler follows it.
A key is alive when no later dip falls below its level, and the queried name occurs four or five times with at least two alive occurrences whose margins differ.
At least one dead occurrence precedes the first alive one and at least one lies between two alive ones, so neither a cut by position nor a choice by level alone solves the task.
The target is uniform over the alive occurrences of the queried name.
A seed succeeded when the head realised it with tolerance $0.1$ on at least 90\% of the sequences, a rule that we chose after the pilots and before the main runs.
Before any run we fitted strategies of each ablation in the hard limit on a sample of 8000 sequences.
On a second sample of 2000 sequences from the same generator, separate from the test set of the main runs, the best that we found for any ablation reached $0.4905$, below the preregistered line of $0.6$.
By that rule the results read as evidence that both the level of the key and retrieval by content are needed among these variants.

\begin{table}[htbp]
  \caption{\textbf{Retrieval among alive keys}, as seeds out of 15 that succeeded.
    FPA, First Passage Attention.
    FoX, Forgetting Transformer.}
  \label{tab:8}
  \centering
  \small
  \begin{tabular}{@{}lc@{}}
\toprule
Model & Seeds that succeed \\
\midrule
FPA & 15/15 \\
FPA without the level of the key ($h\equiv0$) & 0/15 \\
FPA with the threshold of the query & 0/15 \\
FPA without content retrieval & 0/15 \\
FoX & 0/15 \\
Stick-breaking attention & 0/15 \\
Selective Attention & 13/15 \\
Softmax attention & 0/15 \\
\bottomrule
\end{tabular}

\end{table}

\subsection{Tasks from earlier work}
\label{app:B.7}

We ran six published tasks at a smaller scale than their papers, so our numbers do not compare with theirs.
Variable assignment follows Selective Attention (three variables, 1000 values, 32 assignments), and multi-query repeated associative recall (MQRAR) follows stick-breaking attention (64 keys, 256 key and value symbols).
Flip-flop language modelling uses length 256 and a probability of 0.8 for the ignore instruction \citep{liu2023exposing}, and copying follows Selective Attention.
Counting asks at every position how often its token occurred so far, and multi-query associative recall (MQAR) uses 16 pairs in sequences of 128 \citep{arora2024zoology}.
The models were two-layer decoders of width 128 with two heads and a feed-forward width of 512, and a one-layer decoder of width 64 for flip-flop.
FPA, FoX and Selective Attention used RoPE in addition to their mechanisms, although FoX uses none by default in its paper, and stick-breaking attention used no positional encoding, as in its paper.
In variable assignment the other attention models used learned absolute positions in place of RoPE, as the paper of Selective Attention does.
A long short-term memory network (LSTM) of the same depth and width served as a recurrent reference.
Training lasted 4000 steps, and 8000 for MQAR, with a warm-up over the first 10\% of the steps and weight decay on all parameters, although the papers of FoX and stick-breaking attention exempt the biases.
A seed succeeded when its main accuracy reached $0.99$, or for flip-flop when its error stayed at most $0.01$.

\begin{table}[htbp]
  \caption{\textbf{Tasks from earlier work}, as seeds out of 15 that succeeded.
    In variable assignment the softmax column used learned absolute positions in place of RoPE.
    MQRAR, multi-query repeated associative recall.
    MQAR, multi-query associative recall.
    FPA, First Passage Attention.
    RoPE, rotary position embedding.
    FoX, Forgetting Transformer.
    Sel.\ Att., Selective Attention.
    Stick-br., stick-breaking attention.
    LSTM, long short-term memory network.}
  \label{tab:9}
  \centering
  \small
  \setlength{\tabcolsep}{3pt}
  \begin{tabular}{@{}lcccccc@{}}
\toprule
Task & FPA & Softmax + RoPE & FoX & Sel.\ Att. & Stick-br. & LSTM \\
\midrule
Variable assignment & 9/15 & 0/15 & 15/15 & 2/15 & 12/15 & 15/15 \\
MQRAR & 0/15 & 0/15 & 0/15 & 4/15 & 0/15 & 0/15 \\
Flip-flop & 15/15 & 0/15 & 15/15 & 15/15 & 0/15 & 15/15 \\
Copying & 15/15 & 15/15 & 15/15 & 15/15 & 14/15 & 15/15 \\
Counting & 15/15 & 15/15 & 15/15 & 15/15 & 15/15 & 15/15 \\
MQAR & 0/15 & 15/15 & 14/15 & 12/15 & 15/15 & 0/15 \\
MQAR, FPA without the bias in layer 1 & 15/15 & --- & --- & --- & --- & --- \\
\bottomrule
\end{tabular}

\end{table}

In pilot runs of two-layer FPA models on MQAR, the first layer learned to suppress the attention from each value to its own key.
The previous-token head that starts the recall circuit therefore never formed.
An exploratory inspection of the FPA language model found its previous-token heads in the second and third of its 12 layers.
In these heads a large strength turned the bias into a narrow window, and the model copied as well as FoX did (\cref{app:C}).
Two exploratory checks, outside the preregistration, support this reading.
Three- and four-layer FPA models failed MQAR on each of 4 seeds, and the same models without a bias in their first layer succeeded on each of 2 seeds.
We also ran Selective Attention with the ReLU gradient of its paper in exploratory runs.
It succeeded on 1, 3, 15, 15, 15 and 15 seeds of the six tasks in the order of \cref{tab:9}, close to its results with the softplus gradient.

\section{Language models}
\label{app:C}

\subsection{Models, data and training}

All four models are pre-norm decoders with RMSNorm, a SwiGLU feed-forward network of width 2048, no biases and tied embeddings, with 12 layers of width 768 and 12 heads of dimension 64, following the 125M configuration of FoX \citep{lin2025forgetting}.
\Cref{fig:22,tab:15,tab:16} give the architecture and every setting.
The vocabulary of 32,000 tokens gives 109.5M parameters for softmax attention and Selective Attention and 109.9M for FPA.
FPA computes its field projection in single precision, uses no positional encoding and learns one strength per head.
FoX uses the kernel of its authors without positional encoding and initialises its gate biases at zero, as its paper does for data-dependent gates.
Selective Attention uses RoPE, and its selection has the ReLU gradient of its paper.
Softmax attention uses RoPE with base 10,000.

The training data was a chunk of SlimPajama \citep{soboleva2023slimpajama} with every document wrapped in $\tokbos$ and $\langle/\mathrm{s}\rangle$, and the validation and test sets were the first 20M tokens of its validation and test splits.
The sources of SlimPajama come with their own terms of use and may include copyrighted text.
We cut the training stream into non-overlapping windows of 2049 tokens, visited them in one fixed order for all four models and used each window at most once.
Training ran 5{,}150 steps of 524{,}288 tokens, 2.70B tokens in total, with AdamW (0.9, 0.95), learning rate $2\times10^{-3}$ after 512 warm-up steps and cosine decay to zero, weight decay 0.1 on the matrices, gradient clipping at 1.0 and half-precision autocast with loss scaling.
All runs used one 12\,GB partition of an NVIDIA Turing GPU (compute capability 7.5, without native bfloat16).
Training took 23.8 hours for FPA, 21.5 for FoX, 38.9 for Selective Attention and 26.3 for softmax attention (\cref{tab:15}).
The evaluated model trained with an earlier version of the kernel of \cref{sec:4.3}, which computes the same function (\cref{app:E}).
To time that kernel, we trained a second FPA model in an exploratory run of the same configuration with seed 2202, outside the preregistration.

\Cref{fig:5} shows how the four models trained.
FPA started more slowly than FoX and Selective Attention and had a higher validation loss at step 250.
From step 500 on it had the lowest validation loss of the four at every evaluation.
At the end its validation loss lay 0.020 nats below that of FoX, 0.006 below that of Selective Attention and 0.027 below that of softmax attention.

\begin{figure}[htbp]
  \centering
  \includegraphics{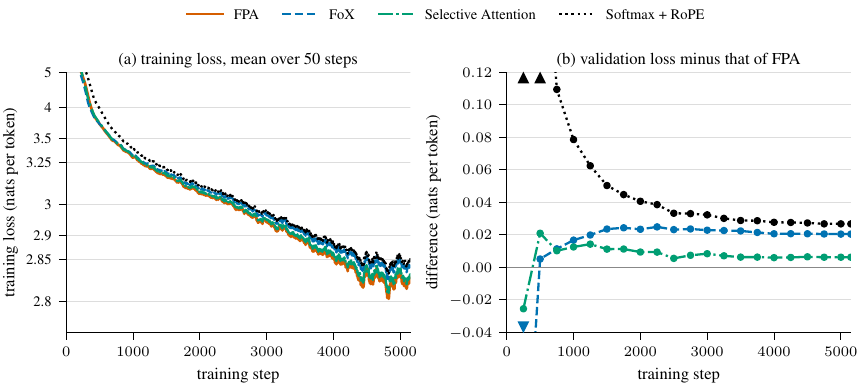}
  \caption{\textbf{Training of the four language models.}
    (a)~Training loss, averaged over 50 steps of 524{,}288 tokens, on a vertical axis that is logarithmic in the loss minus 2.75 nats.
    (b)~Validation loss of each baseline minus that of FPA, on 1024 validation windows every 250 steps, so that positive values favour FPA.
    The triangles mark values beyond the axis, which were $-0.16$ for FoX at step 250 and $+0.46$ and $+0.25$ for softmax attention at steps 250 and 500.
    FPA, First Passage Attention.
    FoX, Forgetting Transformer.
    RoPE, rotary position embedding.}
  \label{fig:5}
\end{figure}

\subsection{Evaluation and tests}

The test loss averaged the cross-entropy over the 20M test tokens in windows of 2048.
WikiText-103 \citep{merity2017pointer} used its test set, and PG-19 used up to four windows of 4096 tokens per test book and averaged the loss at positions 2048 to 4095 over 99 books.
The passkey items hide a five-digit number in repeated filler text \citep{mohtashami2023random}, ten numbers at each of five depths, so 50 items at each of 1024, 2048, 3072 and 4096 tokens.
The non-inferiority test resampled documents 10,000 times and compared the upper $1-\sig$ quantile of the difference in test loss with $\ln1.01$.
The extrapolation test resampled books and asked for an upper $1-\sig/2$ quantile below zero, and the passkey test was an exact one-sided McNemar test at $\sig/2$ on the items at 3072 and 4096.
Evaluating in single precision changed the difference of the first test by less than $10^{-6}$.

\subsection{Amendments}

Three amendments changed the preregistration during the runs, and each was frozen before the result it affects existed.
The first round of training, with a reference implementation of FPA at context 512, stopped before its first checkpoint, and the round reported here replaced it.
The PG-19 evaluation first kept the carriage returns of the books, a token that the training data never contained, so we amended the extrapolation test to Unix newlines. We had seen the PG-19 losses of FPA and softmax attention at that point, but FoX, the comparator of the test, had not yet been evaluated.
With carriage returns the test would also have supported FPA, with a difference of $-0.250$ nats.
The carriage returns raised the loss of every model, by 1.34 to 1.97 nats on average in windows of 2048 tokens and most for softmax attention.
Under both conventions FPA had the lowest loss beyond the training length (\cref{fig:6}).
Selective Attention first trained with the softplus gradient of the probes.
A preregistered check after 1000 steps favoured the ReLU gradient of its paper, and the reported model continued the run of that check, so it trained with the ReLU gradient from the start.
Its mask and attention ran through a fused Triton kernel that we wrote for it, with a micro-batch of 2 instead of 4 and the same tokens per step.

\begin{figure}[htbp]
  \centering
  \includegraphics{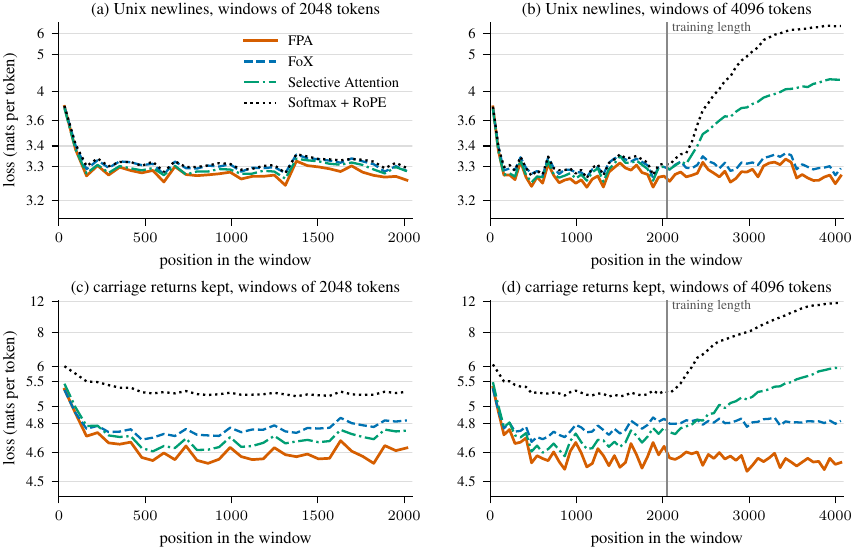}
  \caption{\textbf{Loss by position on the PG-19 test books.}
    Mean loss per bin of 64 positions in windows of 2048 and of 4096 tokens, up to four windows per book from 100 and 99 books.
    The top panels use Unix newlines, as the amended extrapolation test does, and the bottom panels keep the carriage returns of the books.
    The extrapolation test compared the positions 2048 to 4095 of panel~(b).
    The vertical axes are logarithmic in the loss minus 3.1 nats in the top panels and 4.4 nats in the bottom panels, which spreads out the lowest losses.
    The models were trained with context 2048.
    FPA, First Passage Attention.
    FoX, Forgetting Transformer.
    RoPE, rotary position embedding.}
  \label{fig:6}
\end{figure}

\subsection{Descriptive evaluations}

\begin{table*}[t]
  \caption{\textbf{FPA minus each baseline} on the descriptive evaluations, with 95\% bootstrap intervals in brackets.
    PG-19 losses come from windows of 16,384 tokens, passkey and NIAH (needle in a haystack) are accuracies on 50 items, and the downstream rows are zero-shot accuracies or log-likelihoods from the evaluation harness of \citet{gao2024harness}.
    Negative losses and positive accuracies favour FPA.
    FPA, First Passage Attention.
    FoX, Forgetting Transformer.
    RoPE, rotary position embedding.}
  \label{tab:10}
  \centering
  \small
  \begin{tabular}{@{}lccc@{}}
\toprule
FPA minus & FoX & Selective Attention & Softmax + RoPE \\
\midrule
PG-19 loss, 0--2k & $-$0.029 [$-$0.033, $-$0.026] & $-$0.014 [$-$0.024, $-$0.007] & $-$0.038 [$-$0.050, $-$0.028] \\
PG-19 loss, 2k--4k & $-$0.031 [$-$0.037, $-$0.026] & $-$0.516 [$-$0.540, $-$0.495] & $-$1.847 [$-$1.939, $-$1.769] \\
PG-19 loss, 4k--8k & $-$0.020 [$-$0.027, $-$0.015] & $-$0.300 [$-$0.317, $-$0.285] & $-$3.666 [$-$3.806, $-$3.545] \\
PG-19 loss, 8k--16k & $+$0.026 [$+$0.019, $+$0.032] & $-$0.528 [$-$0.557, $-$0.501] & $-$4.258 [$-$4.428, $-$4.114] \\
Passkey 8k & $+$0.80 [$+$0.68, $+$0.90] & $+$1.00 [$+$1.00, $+$1.00] & $+$1.00 [$+$1.00, $+$1.00] \\
Passkey 16k & $+$0.80 [$+$0.68, $+$0.90] & $+$0.92 [$+$0.84, $+$0.98] & $+$1.00 [$+$1.00, $+$1.00] \\
NIAH 4k, 1 needle & $-$0.06 [$-$0.20, $+$0.08] & $+$0.12 [$+$0.04, $+$0.22] & $+$0.12 [$+$0.04, $+$0.22] \\
NIAH 4k, 4 needles & $+$0.08 [$-$0.06, $+$0.22] & $+$0.20 [$+$0.10, $+$0.32] & $+$0.20 [$+$0.10, $+$0.32] \\
NIAH 8k, 1 needle & $-$0.06 [$-$0.14, $+$0.00] & $+$0.04 [$+$0.00, $+$0.10] & $+$0.04 [$+$0.00, $+$0.10] \\
NIAH 8k, 4 needles & $+$0.04 [$-$0.06, $+$0.16] & $+$0.10 [$+$0.02, $+$0.18] & $+$0.10 [$+$0.02, $+$0.18] \\
LAMBADA log-likelihood & $+$0.15 [$+$0.11, $+$0.19] & $+$0.28 [$+$0.23, $+$0.32] & $+$0.42 [$+$0.37, $+$0.46] \\
LAMBADA accuracy & $-$0.001 [$-$0.010, $+$0.008] & $+$0.014 [$+$0.004, $+$0.023] & $+$0.028 [$+$0.019, $+$0.038] \\
HellaSwag accuracy & $+$0.005 [$-$0.001, $+$0.010] & $+$0.005 [$-$0.000, $+$0.011] & $-$0.000 [$-$0.006, $+$0.005] \\
PIQA accuracy & $+$0.018 [$+$0.003, $+$0.033] & $+$0.007 [$-$0.008, $+$0.022] & $+$0.007 [$-$0.008, $+$0.022] \\
ARC-Easy accuracy & $+$0.003 [$-$0.014, $+$0.019] & $+$0.010 [$-$0.007, $+$0.027] & $+$0.013 [$-$0.003, $+$0.029] \\
ARC-Challenge accuracy & $-$0.002 [$-$0.020, $+$0.016] & $-$0.004 [$-$0.021, $+$0.014] & $-$0.009 [$-$0.026, $+$0.009] \\
WinoGrande accuracy & $-$0.028 [$-$0.060, $+$0.006] & $-$0.026 [$-$0.060, $+$0.009] & $-$0.022 [$-$0.056, $+$0.013] \\
\bottomrule
\end{tabular}

\end{table*}

Each needle-in-a-haystack item hides one or four sentences of the form ``The pass key of \emph{name} is \emph{number}.'' at random sentence boundaries in a passage of a PG-19 test book.
The item then asks for the key of one name, and there were 50 items for each length and number of needles.
The downstream tasks were LAMBADA \citep{paperno2016lambada}, HellaSwag \citep{zellers2019hellaswag}, PIQA \citep{bisk2020piqa}, ARC \citep{clark2018arc} and WinoGrande \citep{sakaguchi2020winogrande}.
At this scale all four models stayed close to chance on most of them, and most differences had intervals that contained zero.
On the SlimPajama test set FPA had a lower loss than FoX at every position bucket of the window, by 0.008 to 0.022 nats.
FPA retrieved every passkey at all six lengths, from 1024 to 16,384 tokens (\cref{fig:3}).
FoX fell from 1.00 at 2048 tokens to 0.58 at 4096 and 0.20 beyond, and Selective Attention retrieved 0.80 at 2048 tokens and almost none beyond 3072.
Softmax attention retrieved about half of the passkeys even within its training length.
A probe during training measured the greedy accuracy on the second copy of a span every 1000 steps (\cref{fig:7}).
At the end the four models copied 0.91 to 0.93 of the tokens at a gap of 0 tokens, 0.83 to 0.85 at 128 and 0.68 to 0.71 at 320, with softmax attention highest at every gap.

\begin{figure}[htbp]
  \centering
  \includegraphics{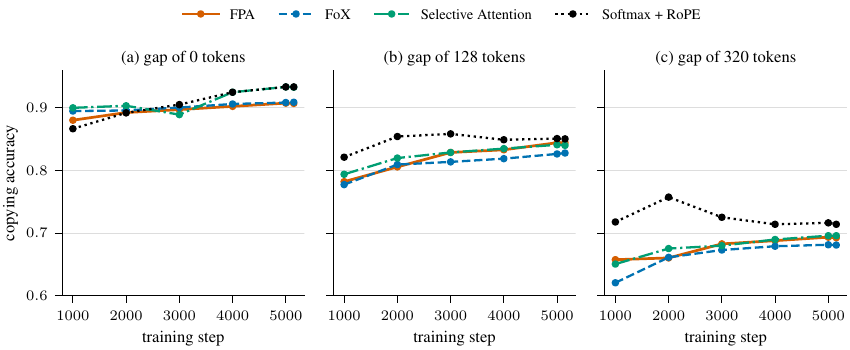}
  \caption{\textbf{Copying during training.}
    Greedy accuracy on the second copy of a span of text at gaps of 0, 128 and 320 tokens between the two copies, every 1000 steps and at the last step.
    FPA, First Passage Attention.
    FoX, Forgetting Transformer.
    RoPE, rotary position embedding.}
  \label{fig:7}
\end{figure}

\begin{table}[htbp]
  \caption{\textbf{RULER and BABILong}, mean accuracy over the 13 tasks of RULER and the five tasks qa1 to qa5 of BABILong, at the context lengths of the columns.
    All models were trained with context 2048.
    FPA, First Passage Attention.
    FoX, Forgetting Transformer.
    RoPE, rotary position embedding.}
  \label{tab:11}
  \centering
  \small
  \setlength{\tabcolsep}{3pt}
  \begin{tabular}{@{}lccccc@{}}
\toprule
 & \multicolumn{3}{c}{RULER, 13 tasks} & \multicolumn{2}{c}{BABILong, qa1--qa5} \\
\cmidrule(lr){2-4}\cmidrule(lr){5-6}
Method & 4k & 8k & 16k & 0k & 4k \\
\midrule
FPA & 0.208 & 0.202 & 0.051 & 0.082 & 0.006 \\
FoX & 0.218 & 0.117 & 0.072 & 0.078 & 0.000 \\
Selective Attention & 0.005 & 0.008 & 0.043 & 0.080 & 0.000 \\
Softmax + RoPE & 0.002 & 0.002 & 0.000 & 0.088 & 0.000 \\
\bottomrule
\end{tabular}

\end{table}

\subsection{A demonstration without a usable task}
\label{app:C.5}

We preregistered a zero-shot demonstration in generated text with five worlds, on scopes in Python code, on who saw which hidden object, on branches in a version history, on stop-loss orders and on a parenthetical remark about a novel.
Each world had 200 items and two kinds of control items that any model with the needed reading skill should answer.
One kind asks for the most recent mention of a candidate, and the other for an older mention that its content identifies.
The preregistered rule made a world usable when at least two models answered both kinds above chance by an exact lower bound.
No model did so in any world (\cref{tab:12}), so no world was usable and we ran no test.
Several models answered one kind of control item well and the other below chance.

\begin{table}[htbp]
  \caption{\textbf{Control items of the zero-shot demonstration}, as accuracy over 200 items without inserted filler sentences.
    The recent items ask for the most recent mention and the older items for an older mention that its content identifies.
    Chance is 0.5, and bold marks an exact lower 95\% bound above chance.
    A world needed two models with both kinds in bold.
    FPA, First Passage Attention.
    FoX, Forgetting Transformer.
    Sel.\ Att., Selective Attention.}
  \label{tab:12}
  \centering
  \small
  \setlength{\tabcolsep}{4pt}
  \begin{tabular}{@{}lcccccccc@{}}
\toprule
 & \multicolumn{2}{c}{FPA} & \multicolumn{2}{c}{FoX} & \multicolumn{2}{c}{Sel.\ Att.} & \multicolumn{2}{c}{Softmax} \\
\cmidrule(lr){2-3}\cmidrule(lr){4-5}\cmidrule(lr){6-7}\cmidrule(lr){8-9}
World & recent & older & recent & older & recent & older & recent & older \\
\midrule
Scopes in Python code & 0.24 & \textbf{0.66} & 0.01 & \textbf{0.86} & 0.39 & \textbf{0.77} & 0.45 & 0.52 \\
Who saw a hidden object & 0.52 & \textbf{0.62} & 0.47 & \textbf{0.69} & 0.41 & \textbf{0.67} & \textbf{0.69} & 0.49 \\
Branches in a version history & 0.41 & \textbf{0.60} & 0.40 & \textbf{0.60} & 0.27 & \textbf{0.73} & 0.45 & 0.56 \\
Stop-loss orders & 0.51 & \textbf{0.59} & \textbf{0.79} & 0.33 & 0.36 & 0.41 & 0.41 & 0.53 \\
A parenthetical remark & 0.55 & 0.56 & \textbf{0.69} & 0.30 & \textbf{0.77} & 0.38 & \textbf{0.77} & 0.33 \\
\bottomrule
\end{tabular}

\end{table}
We report the design for completeness and draw no conclusion from it.

\subsection{Observational analysis}
\label{app:C.6}

A preregistered analysis read the final FPA checkpoint on the SlimPajama test stream, on the CPU and without changing the model.
For the alive sets it used the first 256 windows of 2048 tokens and the queries at positions from 1024 on.
A key counted as dead when its bias was below $-10$ and as alive when its bias was at least $-1$.
For the brackets it used the first 256 windows inside one document with at least 32 bracket tokens, and it compared the walk with the bracket depth by a Spearman correlation after removing a linear trend in position from both.
A null distribution came from 1000 cyclic shifts of the depth within each window, and the familywise test took the maximum over the 144 heads.

The median head marked 96.5\% of the keys as dead and 0.3\% as alive.
In 19 heads more than 10\% of the alive sets were not suffixes, and in two heads of the first layer almost all were, with most of the attention on alive keys older than the newest dead key.
In 60 heads the mean increment at opening brackets and at closing brackets had opposite signs.
In 103 heads the median correlation lay outside the band of their own null distribution, against 1.44 expected by chance, and 82 heads passed the familywise test at $0.01$, 81 of them with a positive correlation.
The correlations were small, with a median absolute value of 0.032 over heads and a maximum of 0.248, and their significance comes from their consistency over 256 windows.
The detrended correlation mixes the level of the depth with the order of positions within a level, so we read these numbers as descriptions, not as evidence that the model tracks brackets.

\begin{figure}[htbp]
  \centering
  \includegraphics{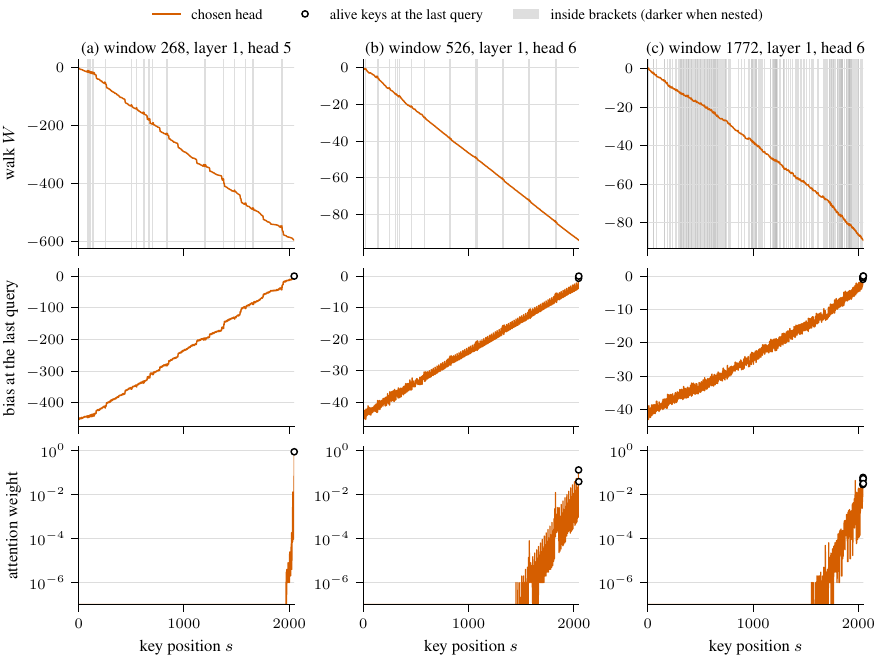}
  \caption{\textbf{The three preregistered windows of the observational analysis.}
    Each column shows one window with the head that the preregistered rule chose for it, with layers and heads counted from 1.
    The top panels show the walk~$W$ of that head, with the bracket depth shaded.
    The middle and bottom panels show the bias and the attention weight of every key at the last query of the window, with the alive keys marked, among them the query itself.
    Attention weights below $10^{-7}$ are drawn at that value.
    The walks of all three heads fall almost linearly, so their biases resemble that of FoX.
    FoX, Forgetting Transformer.}
  \label{fig:8}
\end{figure}

\Cref{fig:8} shows the three windows that a fixed seed chose before the analysis, each with the head of the largest standardised contrast between opening and closing brackets in that window.
The preregistration also placed the first window in the main text, and we moved it here for space.
In window 268 the only alive key at the last query is the query itself, which receives 0.90 of the attention.

\Cref{fig:4} looks at the steps of the walk instead of the walk itself.
In window 268 the chosen head stepped up by 4.70 on average where the depth rose and down by 5.51 where it fell, against $-0.29$ at the other tokens.
Over all 256 windows, 56 heads had increments of opposite signs at opening and closing brackets together with a familywise significant correlation.
\Cref{fig:9} shows the same alignment in the two other windows, with steps about ten times smaller.

\begin{figure}[htbp]
  \centering
  \includegraphics{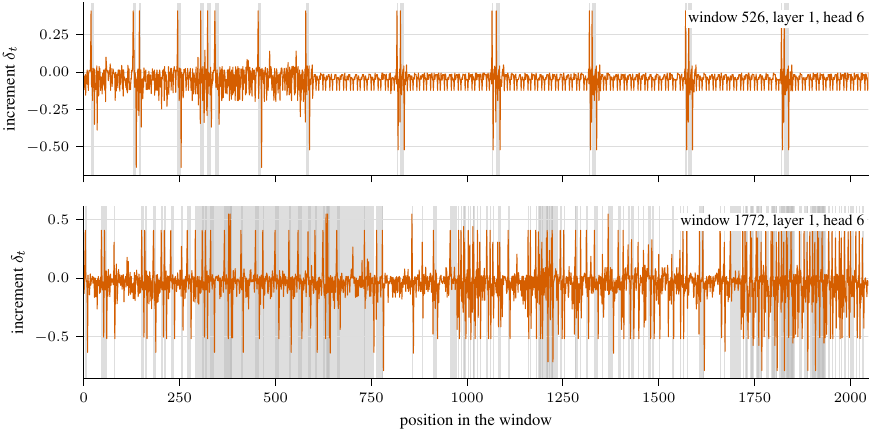}
  \caption{\textbf{Steps of the walk at brackets in the two other preregistered windows.}
    Increments $\inc_t$ of layer 1, head 6, the head that the rule chose for both windows, along the whole window, with the bracket depth shaded and darker where brackets nest.
    The steps are about ten times smaller than in \cref{fig:4} but keep their signs, with means of $+0.41$ and $+0.38$ where the depth rises and $-0.50$ and $-0.51$ where it falls.}
  \label{fig:9}
\end{figure}

\Cref{fig:10} shows the share of alive keys for every head of the final model.
Most heads keep a few keys alive, and a few heads keep nearly all of them.
The 19 heads whose alive sets are often not suffixes lie in nine of the twelve layers, and their shares of alive keys range from 0.3\% to 92\%.
\Cref{fig:11} counts where keys die.
In the median head the share of deaths at $\tokbos$ and $\langle/\mathrm{s}\rangle$ was 6.6 times their share of the tokens.
We took as local the 95 heads with a share of alive keys below 0.03.
On the 12 windows of the exploratory checks of \cref{app:C.7}, this threshold agrees with a median alive span below 64 tokens for 138 of the 144 heads of the final model.
Averaged over these heads, a token killed 0.98 keys per occurrence, about one for each new token.
The end-of-document token killed 15 keys, and capitalised words without a space before them killed 4.5 to 8.2.

\begin{figure}[htbp]
  \centering
  \includegraphics{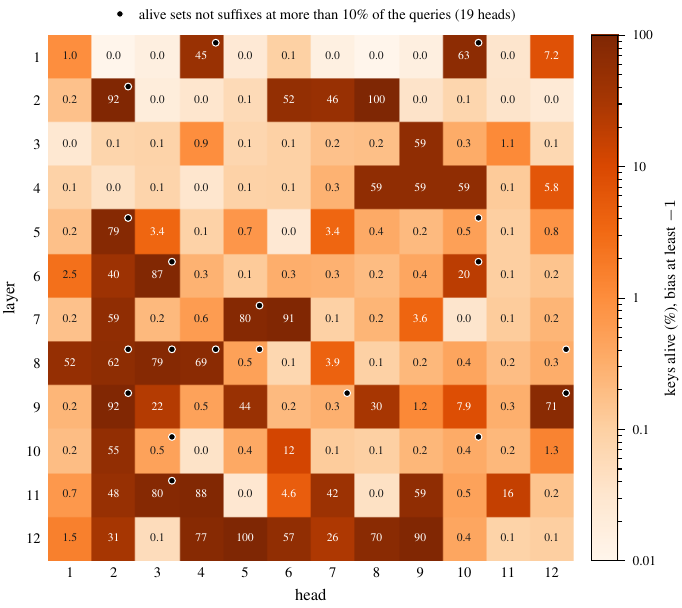}
  \caption{\textbf{Alive keys of every head in the final FPA model.}
    Each cell gives the percentage of keys with a bias of at least $-1$ over the queries from position 1024 on in 256 test windows, with layers and heads counted from 1.
    Dots mark the 19 heads whose alive sets are not suffixes at more than 10\% of the queries.
    FPA, First Passage Attention.}
  \label{fig:10}
\end{figure}

\begin{figure}[htbp]
  \centering
  \includegraphics{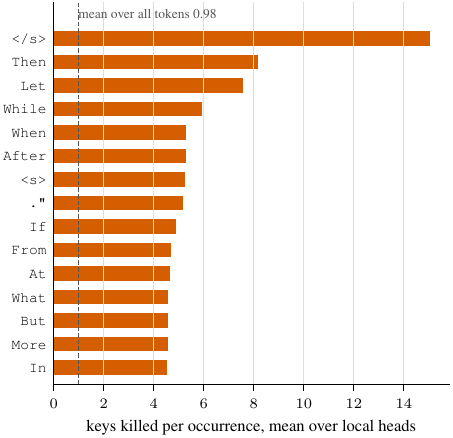}
  \caption{\textbf{Where keys die in the final FPA model.}
    Keys killed per occurrence of a token, averaged over the 95 local heads, for the 15 tokens with the highest rate among those occurring at least 30 times in 256 test windows.
    A visible space marks a token that starts with a space, and the dashed line is the mean over all tokens.
    FPA, First Passage Attention.}
  \label{fig:11}
\end{figure}

\subsection{Exploratory checks at three checkpoints}
\label{app:C.7}

We inspected the FPA model at steps 1500 and 3250 of 5150 during training and at its final checkpoint, always on the same 12 validation windows of 2048 tokens.
These checks were exploratory and outside every preregistered test.
They used the reference implementation in single precision on the CPU and the thresholds of \cref{app:C.6}, and the figures redraw the numbers recorded at the time.
The figures show the final checkpoint and add the two earlier steps where training changed the result.
\Cref{tab:13} summarises the three checks.

\begin{table}[htbp]
  \caption{\textbf{Three checks of the FPA language model}, at steps 1500 and 3250 of 5150 and at the final step, on the same 12 validation windows of 2048 tokens.
    Local, middle and global heads have median alive spans below 64 tokens, from 64 to 1023 tokens and from 1024 on.
    Alive sets count as often not suffixes when at least 20\% of the queries have an alive key older than a dead one.
    Document heads keep at least 80\% of the keys within a document and at most 5\% across documents.
    The copying items differ from those of \cref{fig:7}.
    FPA, First Passage Attention.}
  \label{tab:13}
  \centering
  \small
  \begin{tabular}{@{}lccc@{}}
\toprule
 & Step 1500 & Step 3250 & Step 5150 \\
\midrule
Mean loss over the 12 windows & 3.162 & 2.917 & 2.821 \\
Mean loss with the bias of every layer removed & 6.591 & 7.335 & 7.828 \\
Loss increase with alive sets forced into suffixes & $+$0.032 & $+$0.031 & $+$0.032 \\
\quad windows with a positive increase & 12 of 12 & 12 of 12 & 12 of 12 \\
Local, middle and global heads & 90, 22, 32 & 90, 20, 34 & 90, 20, 34 \\
Heads whose alive sets are often not suffixes & 21 & 15 & 14 \\
Heads whose walk is not monotone & 106 & 109 & 112 \\
Document heads & 7 & 9 & 9 \\
Copying accuracy, gap of 0 tokens & 0.89 & 0.93 & 0.93 \\
Copying accuracy, gap of 320 tokens & 0.78 & 0.82 & 0.84 \\
Copying accuracy, gap of 1000 tokens & 0.61 & 0.63 & 0.65 \\
\bottomrule
\end{tabular}

\end{table}

Removing the bias of every layer raised the loss on the 12 windows by 3.43, 4.42 and 5.01 nats at the three steps.
On the first four windows the first layer carried most of this increase, and its part grew from 1.49 to 2.67 and 3.48 nats (\cref{fig:12}).
At the final step the loss without the bias rose with position from 6.1 to 8.7 nats, while with the bias it fell from 3.9 to 2.9 (\cref{fig:13}).
Forcing the alive sets into suffixes, the sets that a forget gate can express, raised the loss on all 12 windows at every step, by 0.032, 0.031 and 0.032 nats on average (\cref{fig:14}).

\begin{figure}[htbp]
  \centering
  \includegraphics{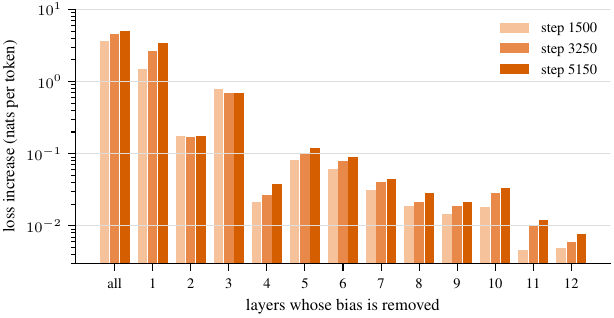}
  \caption{\textbf{The bias removed at three checkpoints of the FPA model.}
    Loss increase on the first four validation windows when the bias of all layers or of one layer is set to zero, at steps 1500, 3250 and 5150, the last step of training, on a logarithmic axis.
    FPA, First Passage Attention.}
  \label{fig:12}
\end{figure}

\begin{figure}[htbp]
  \centering
  \includegraphics{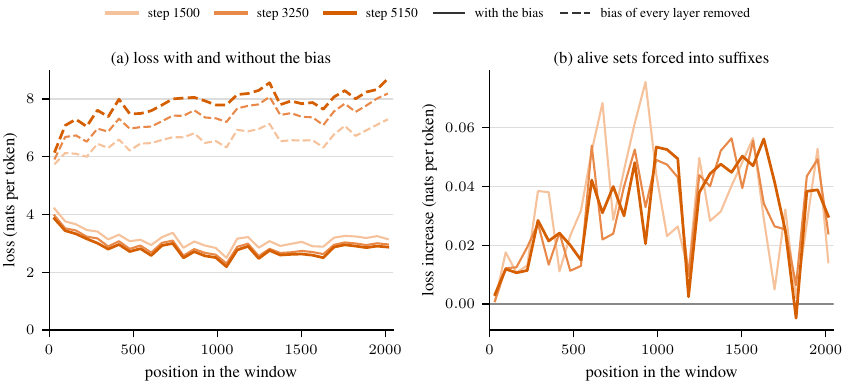}
  \caption{\textbf{Loss by position at three checkpoints.}
    (a)~Loss per bin of 64 positions over the 12 validation windows, with the bias and with the bias of every layer removed.
    (b)~Loss increase when the alive sets of every head are forced into suffixes, the sets that a forget gate can express.}
  \label{fig:13}
\end{figure}

\begin{figure}[htbp]
  \centering
  \includegraphics{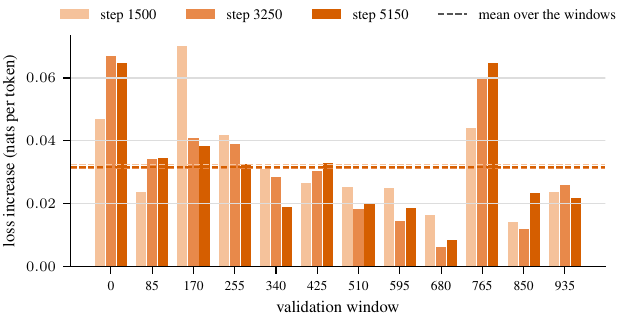}
  \caption{\textbf{Alive sets forced into suffixes, per validation window.}
    Loss increase on each of the 12 windows at steps 1500, 3250 and 5150, with the means of 0.032, 0.031 and 0.032 nats as dashed lines.}
  \label{fig:14}
\end{figure}

\Cref{fig:15} shows the median alive span of every head of the final model.
Of the 144 heads, 90 kept keys alive for fewer than 64 positions, 20 for 64 to 1023 and 34 for longer, the same counts as at step 3250.
Between steps 1500 and 3250 the spans of 17 heads changed by a factor of two or more, and between step 3250 and the end only those of two heads did.
Averaged over the local heads, a token killed about one key per occurrence at every step, 0.99, 0.98 and 0.98 (\cref{fig:16}).
The six tokens that killed most were the same at all three steps, and their rates grew during training, from 1.90 to 2.55 for a newline and from 3.17 to 3.84 for the token \texttt{The} without a space before it.

\begin{figure}[htbp]
  \centering
  \includegraphics{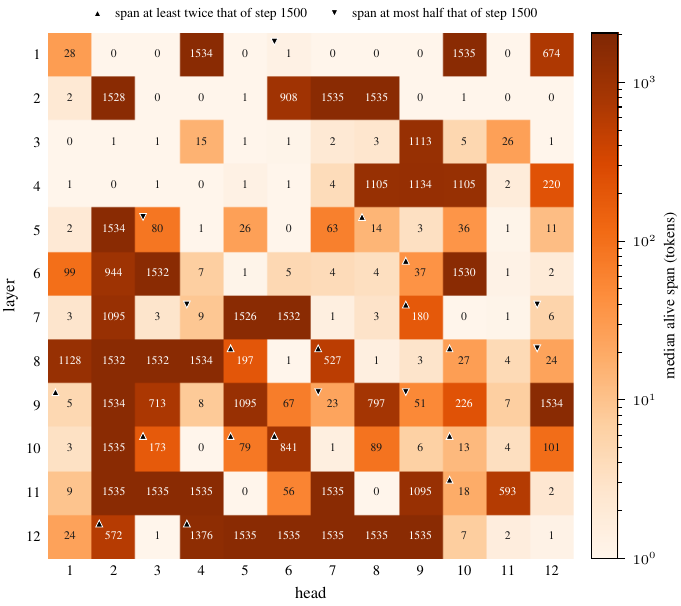}
  \caption{\textbf{Median alive span of every head in the final FPA model.}
    Spans over the queries 1024 to 2047 of the 12 validation windows, with layers and heads counted from 1.
    Triangles mark the 21 heads whose span grew or shrank by a factor of two or more since step 1500.
    FPA, First Passage Attention.}
  \label{fig:15}
\end{figure}

\begin{figure}[htbp]
  \centering
  \includegraphics{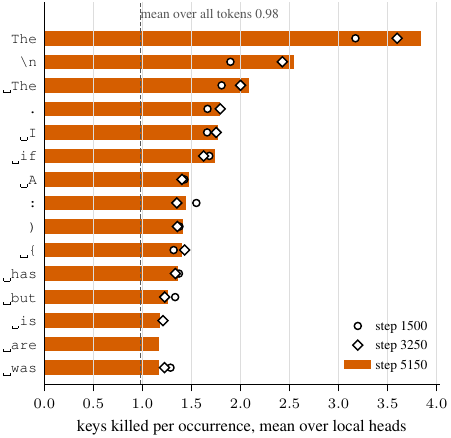}
  \caption{\textbf{Where keys die at three checkpoints.}
    Keys killed per occurrence of a token, averaged over the 90 local heads of each step, those with a median alive span below 64 tokens.
    The bars give the final step for its 15 tokens with the highest rate among those occurring at least 30 times in the 12 validation windows.
    Markers give steps 1500 and 3250 where the token was among the 15 highest at that step.
    A visible space marks a token that starts with a space, \texttt{\textbackslash n} is the newline, and the dashed line is the mean over all tokens at the final step.}
  \label{fig:16}
\end{figure}

In every layer of the final model the attention that queries paid to earlier documents stayed between 0.002 and 0.10 with the bias (\cref{fig:17}).
Without the bias it rose to 0.10 to 0.21, against 0.42 for weights spread evenly over the visible keys.
\Cref{fig:18} shows, for heads that we selected by inspection, the share of keys of each token class that stayed alive 256 positions later.
Layer 1, head 10 kept capitalised words (0.93) and word continuations (0.83) but no digits.
Its alive keys therefore do not form a suffix, which no forget gate can produce.
The document heads kept nearly every key within a document and none across a document boundary.

\begin{figure}[htbp]
  \centering
  \includegraphics{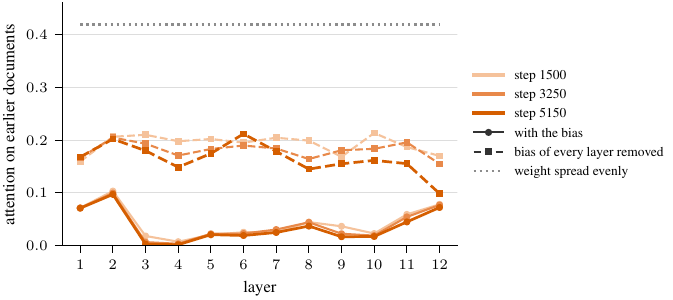}
  \caption{\textbf{Attention on earlier documents at three checkpoints.}
    Attention that queries pay to the keys of earlier documents, averaged over the heads of each layer, with the bias and with the bias of every layer removed.
    The dotted line is the share that weights spread evenly over the visible keys would give.}
  \label{fig:17}
\end{figure}

\begin{figure}[!htbp]
  \centering
  \includegraphics{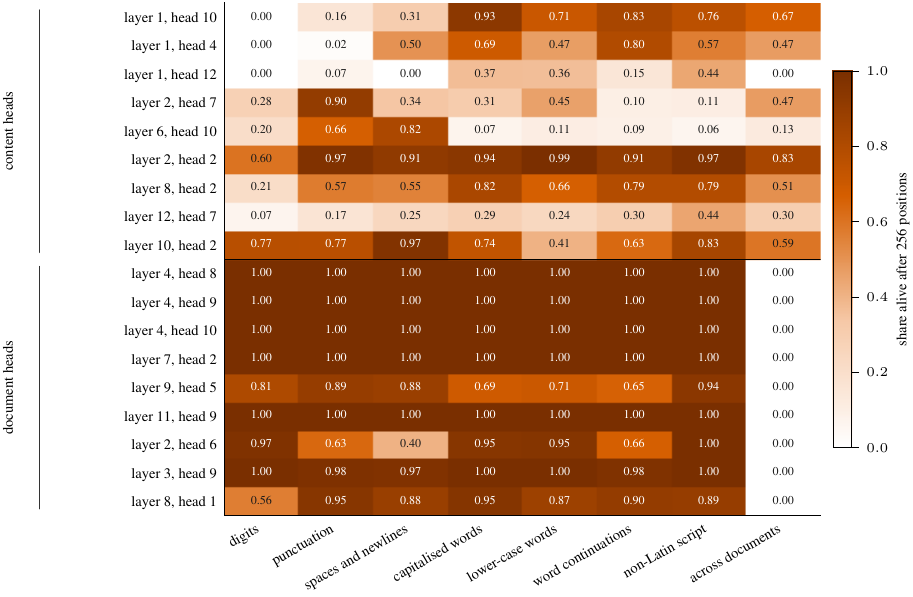}
  \caption{\textbf{Retention by token class in the final FPA model.}
    Share of the keys of each class that are still alive 256 positions later, for pairs of key and query in one document, and in the last column for pairs across a document boundary.
    Content heads lie above the line and the document heads of step 3250 below it, with layers and heads counted from 1.
    By the final step layer 9, head 5 kept only 76\% of the keys within a document and no longer counted as a document head.
    FPA, First Passage Attention.}
  \label{fig:18}
\end{figure}

\Cref{fig:19,fig:20,fig:21} show the bias of three heads of the final model on stretches of 64 tokens.
Layer 1, head 1 let groups of keys stop being alive at the end of a statement and at newlines, dots and underscores in C++ code.
Layer 1, head 10 let articles, prepositions and punctuation lose their weight at the next token in a news report and kept content words.
Layer 4, head 9 removed the previous document at its end-of-document token.
Copying improved during training (\cref{tab:13}).
The ablations show that the trained FPA model relies on these structures, not that a model trained with FoX could not reach the same function in another way.

\begin{figure}[htbp]
  \centering
  \includegraphics{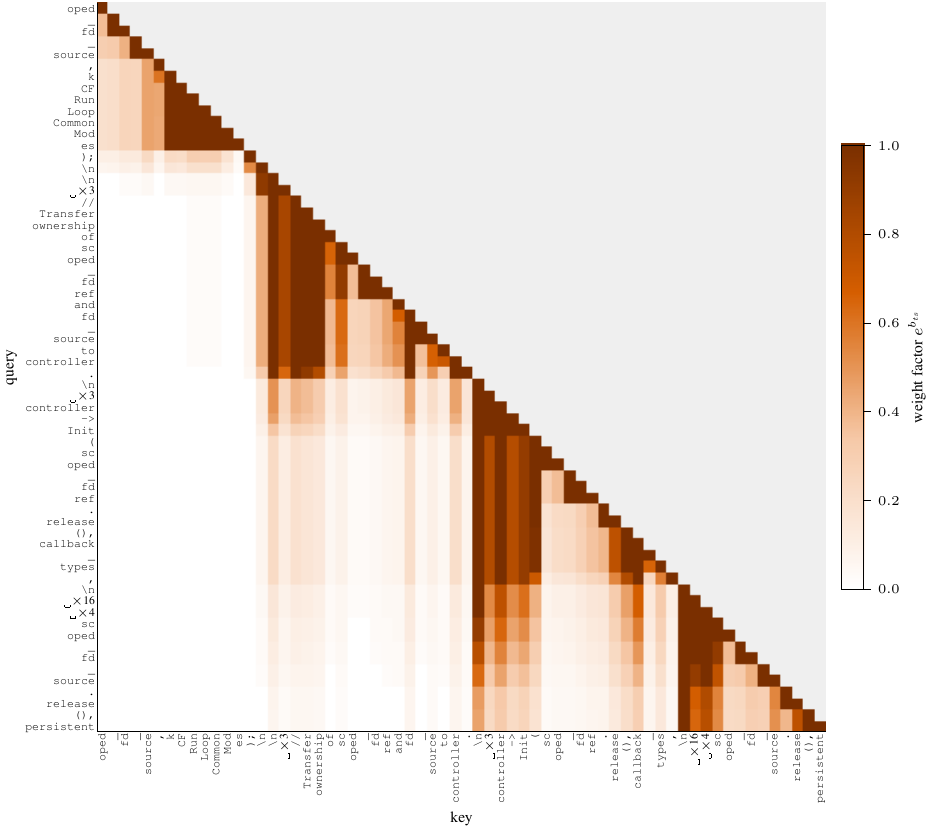}
  \caption{\textbf{The bias of layer 1, head 1 on C++ code in the final FPA model.}
    Weight factor $e^{b_{ts}}$ of every key~$s$ for every query~$t$ at positions 1100 to 1163 of validation window 0, with factors below 0.02 left white.
    A key is alive while its factor is at least $e^{-1}$.
    Groups of keys stop being alive at once, nine at the end of a statement and three to six at newlines, dots and underscores.
    FPA, First Passage Attention.}
  \label{fig:19}
\end{figure}

\begin{figure}[htbp]
  \centering
  \includegraphics{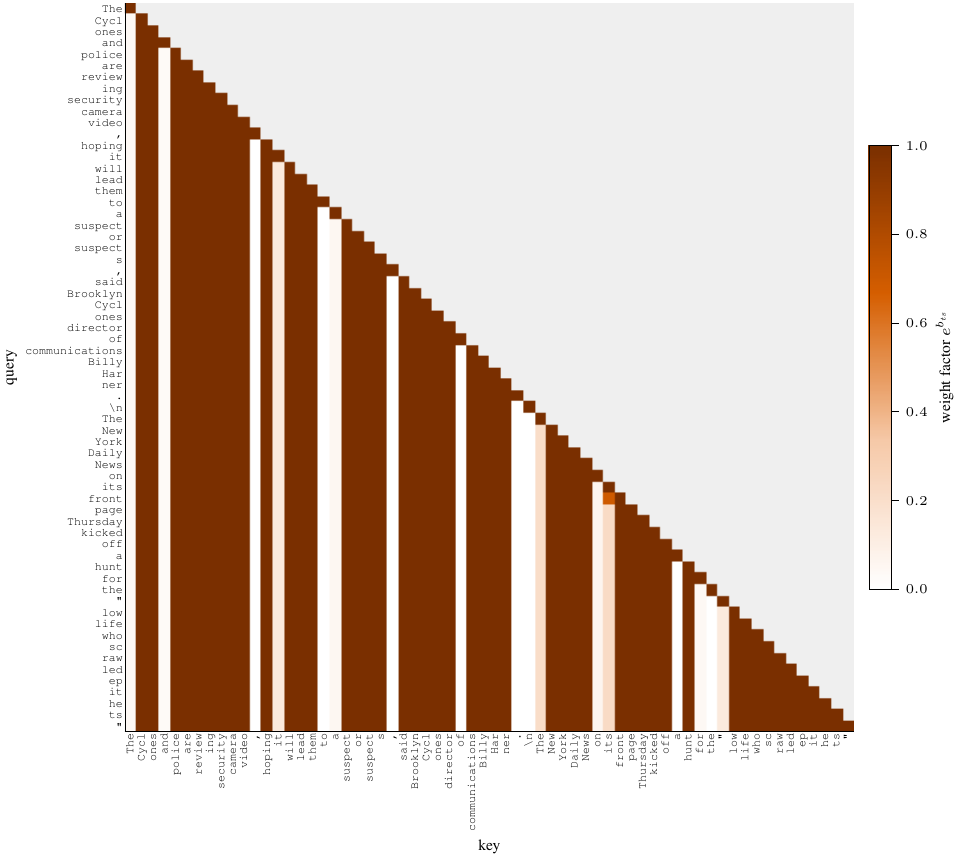}
  \caption{\textbf{The bias of layer 1, head 10 on a news report in the final FPA model.}
    Weight factor $e^{b_{ts}}$ at positions 1100 to 1163 of validation window 680, with factors below 0.02 left white.
    Articles, prepositions, punctuation and newlines stop being alive at the next token, while content words such as \emph{said} and \emph{page} stay alive to the end of the stretch.
    FPA, First Passage Attention.}
  \label{fig:20}
\end{figure}

\begin{figure}[htbp]
  \centering
  \includegraphics{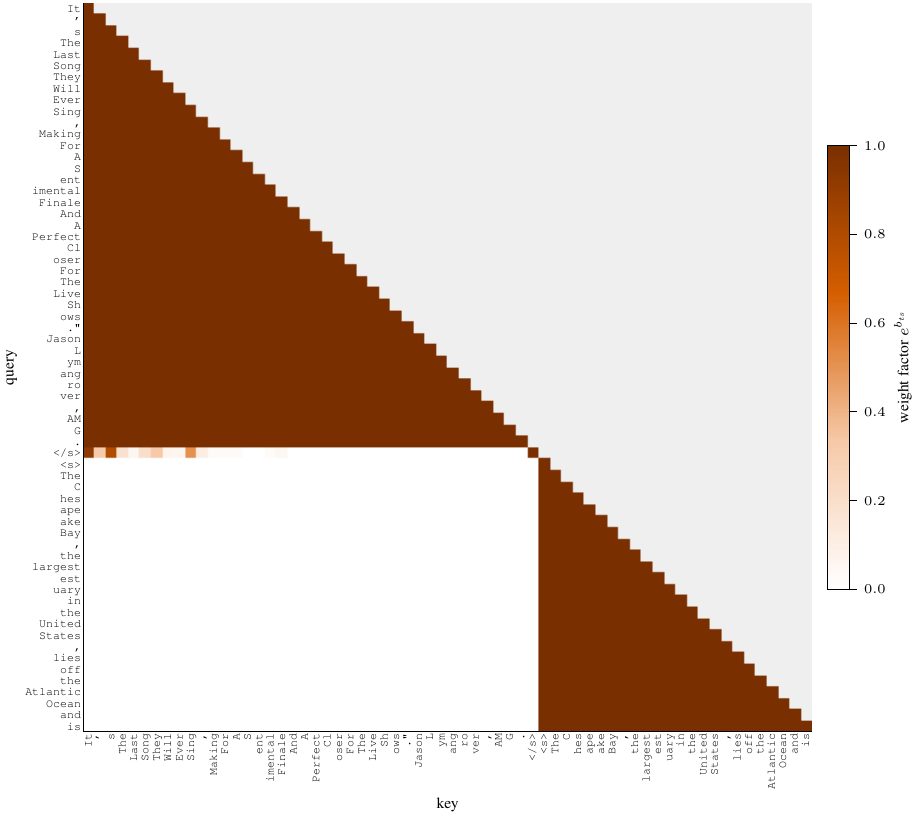}
  \caption{\textbf{The bias of layer 4, head 9 at a document boundary in the final FPA model.}
    Weight factor $e^{b_{ts}}$ at positions 233 to 296 of validation window 680, with factors below 0.02 left white.
    At the end-of-document token 36 of the 39 earlier keys of the stretch stop being alive, and at the start-of-document token the others do.
    No key of the stretch before the boundary stays alive in the next document.
    FPA, First Passage Attention.}
  \label{fig:21}
\end{figure}

\clearpage
\section{Architectures and training configurations}
\label{app:D}

This appendix collects the architectures, the hyperparameters and the searches of all experiments in one place, with every value read from the records of the runs.

\subsection{Architecture}
\label{app:D.1}

All our models are causal decoders with pre-norm blocks, and the methods differ only in the bias that each attention layer adds to its logits (\cref{tab:14}).
\Cref{fig:22} shows the FPA model down to the operations of one head.
FPA computes its three scalars per head with one linear map from the same normalised input as the queries, keys and values.
The walk sums the increments over positions, the range minimum runs over the positions after each key, and the strength turns the depth of the deepest dip into the bias.
The bias enters every head of every layer before the softmax, and FPA uses no other positional information.
In training the forward pass takes the hard minimum and the backward pass the gradient of a soft minimum with $\temp=1$ (\cref{sec:2}).
Our kernel computes the range minimum in blocks of 64 positions without a matrix of size $T\times T$ (\cref{app:E}).

\begin{figure}[htbp]
  \centering
  \resizebox{\textwidth}{!}{\begin{tikzpicture}[x=1cm, y=1cm, >={Stealth[length=3.2pt,width=2.8pt]},
    blk/.style={draw=black!75, rounded corners=2.5pt, line width=0.45pt, align=center, font=\scriptsize,
                minimum height=0.52cm, inner sep=2pt},
    emb/.style={blk, fill=archpink},
    nrm/.style={blk, fill=archyellow},
    ffn/.style={blk, fill=archblue},
    lin/.style={blk, fill=archpurple},
    smx/.style={blk, fill=archgreen},
    fpa/.style={blk, fill=fpacol!16, draw=fpacol!90!black},
    op/.style={blk, minimum width=1.25cm, minimum height=0.46cm},
    fop/.style={op, fill=fpacol!12, draw=fpacol!90!black},
    sum/.style={circle, draw=black!75, fill=white, inner sep=0pt, minimum size=0.3cm, line width=0.45pt},
    fsum/.style={sum, draw=fpacol!90!black},
    sym/.style={font=\scriptsize},
    arr/.style={->, line width=0.45pt, draw=black!75},
    farr/.style={->, line width=0.6pt, draw=fpacol!90!black},
    ttl/.style={font=\footnotesize\bfseries, anchor=west}]

  \node[ttl] at (0.0, 9.55) {(a)};
  \node[sym] (in) at (2.2, 0.25) {Inputs};
  \node[emb, minimum width=2.3cm, minimum height=0.7cm] (te) at (2.2, 1.05) {Token\\Embedding};
  \draw[draw=black!45, fill=black!3, rounded corners=4pt] (0.85, 1.65) rectangle (3.85, 6.45);
  \node[sym, anchor=east] at (0.8, 4.05) {$L\times$};
  \node[nrm, minimum width=2.3cm] (n1) at (2.2, 2.3) {RMSNorm};
  \node[fpa, minimum width=2.3cm, minimum height=0.8cm] (mha) at (2.2, 3.2) {Multi-Head\\Attention};
  \node[sum] (r1) at (2.2, 4.0) {$+$};
  \node[nrm, minimum width=2.3cm] (n2) at (2.2, 4.65) {RMSNorm};
  \node[ffn, minimum width=2.3cm, minimum height=0.8cm] (ff) at (2.2, 5.5) {SwiGLU\\Feed Forward};
  \node[sum] (r2) at (2.2, 6.15) {$+$};
  \node[nrm, minimum width=2.3cm] (n3) at (2.2, 7.0) {RMSNorm};
  \node[lin, minimum width=2.3cm] (lo) at (2.2, 7.75) {Linear};
  \node[smx, minimum width=2.3cm] (so) at (2.2, 8.5) {Softmax};
  \node[sym, align=center] (out) at (2.2, 9.25) {Output Probabilities};
  \foreach \a/\b in {in/te, te/n1, n1/mha, mha/r1, r1/n2, n2/ff, ff/r2, r2/n3, n3/lo, lo/so, so/out}
    \draw[arr] (\a) -- (\b);
  \draw[arr] (2.2, 1.85) -- (3.6, 1.85) -- (3.6, 4.0) -- (r1);
  \draw[arr] (2.2, 4.3) -- (3.6, 4.3) -- (3.6, 6.15) -- (r2);

  \node[ttl] at (4.45, 9.55) {(b)};
  \node[sym] (x) at (7.05, 0.25) {$x$};
  \node[lin, minimum width=0.95cm] (lv) at (5.3, 1.85) {Linear};
  \node[lin, minimum width=0.95cm] (lk) at (6.4, 1.85) {Linear};
  \node[lin, minimum width=0.95cm] (lq) at (7.5, 1.85) {Linear};
  \node[fpa, minimum width=0.95cm] (lf) at (8.75, 1.85) {Linear};
  \draw[line width=0.45pt, draw=black!75] (x) -- (7.05, 1.0);
  \draw[line width=0.45pt, draw=black!75] (5.3, 1.0) -- (7.5, 1.0);
  \draw[line width=0.6pt, draw=fpacol!90!black] (7.5, 1.0) -- (8.75, 1.0);
  \foreach \c in {lv, lk, lq} \draw[arr] (\c |- 0, 1.0) -- (\c);
  \draw[farr] (8.75, 1.0) -- (lf);
  \foreach \k in {2, 1} \draw[draw=black!55, fill=fpacol!8, rounded corners=2.5pt]
    ($(5.0, 3.05) + (0.12*\k, 0.12*\k)$) rectangle ($(9.1, 4.15) + (0.12*\k, 0.12*\k)$);
  \node[fpa, minimum width=4.1cm, minimum height=1.1cm] (sdpa) at (7.05, 3.6) {First Passage\\Attention};
  \node[sym] at (9.62, 4.2) {$H$};
  \draw[arr] (lv) -- node[sym, left, pos=0.45] {$V$} (5.3, 3.05);
  \draw[arr] (lk) -- node[sym, left, pos=0.45] {$K$} (6.4, 3.05);
  \draw[arr] (lq) -- node[sym, left, pos=0.45] {$Q$} (7.5, 3.05);
  \draw[farr] (lf) -- node[sym, right, pos=0.45, text=fpacol!90!black] {$\inc, g, h$} (8.75, 3.05);
  \node[nrm, minimum width=1.8cm] (cat) at (7.05, 5.2) {Concat};
  \node[lin, minimum width=1.8cm] (lo2) at (7.05, 6.0) {Linear};
  \draw[arr] (sdpa) -- (cat);
  \draw[arr] (cat) -- (lo2);
  \draw[arr] (lo2) -- (7.05, 6.75);

  \node[ttl] at (10.15, 9.55) {(c)};
  \node[sym] (V) at (10.95, 0.25) {$V$};
  \node[sym] (Q) at (11.85, 0.25) {$Q$};
  \node[sym] (K) at (12.65, 0.25) {$K$};
  \node[sym, text=fpacol!90!black] (g) at (13.55, 0.25) {$g$};
  \node[sym, text=fpacol!90!black] (d) at (14.3, 0.25) {$\inc$};
  \node[sym, text=fpacol!90!black] (h) at (16.35, 0.25) {$h$};
  \node[op, fill=archpurple] (mm1) at (12.25, 1.2) {MatMul};
  \node[op, fill=archyellow] (sc) at (12.25, 2.05) {Scale};
  \node[fsum] (add) at (12.25, 5.6) {$+$};
  \node[op, fill=archpink] (mk) at (12.25, 6.45) {Mask};
  \node[op, fill=archgreen] (sm) at (12.25, 7.3) {SoftMax};
  \node[op, fill=archpurple] (mm2) at (12.25, 8.15) {MatMul};
  \draw[arr] (Q) -- (Q |- mm1.south);
  \draw[arr] (K) -- (K |- mm1.south);
  \draw[arr] (mm1) -- (sc);
  \draw[arr] (sc) -- (add);
  \draw[arr] (add) -- (mk);
  \draw[arr] (mk) -- (sm);
  \draw[arr] (sm) -- (mm2);
  \draw[arr] (V) -- (10.95, 8.15) -- (mm2.west);
  \draw[arr] (mm2) -- (12.25, 8.85);
  \node[fop] (cs) at (14.3, 1.2) {CumSum};
  \node[fsum] (ap) at (14.3, 2.25) {$+$};
  \node[fop] (rm) at (14.3, 3.2) {RangeMin};
  \node[fsum] (aq) at (16.35, 3.2) {$+$};
  \node[fop] (sb) at (15.3, 4.15) {Sub};
  \node[fop] (mn) at (15.3, 4.9) {$\min(0,\cdot)$};
  \node[fsum] (mb) at (15.3, 5.6) {$\times$};
  \node[sym, text=fpacol!90!black] (bt) at (16.35, 5.6) {$\str$};
  \draw[farr] (d) -- (cs);
  \draw[farr] (cs) -- node[sym, left, pos=0.5, text=fpacol!90!black] {$W$} (ap);
  \draw[farr] (g) -- (13.55, 2.25) -- (ap);
  \draw[farr] (14.3, 1.72) -- (15.75, 1.72) -- (15.75, 3.2) -- (aq);
  \draw[farr] (h) -- (aq);
  \draw[farr] (ap) -- node[sym, left, pos=0.5, text=fpacol!90!black] {$\fld$} (rm);
  \draw[farr] (rm) |- node[sym, left, pos=0.25, text=fpacol!90!black] {$\rmin$} (sb.west);
  \draw[farr] (aq) |- node[sym, right, pos=0.25, text=fpacol!90!black] {$\lvl$} (sb.east);
  \draw[farr] (sb) -- (mn);
  \draw[farr] (mn) -- (mb);
  \draw[farr] (bt) -- (mb);
  \draw[farr] (mb) -- node[sym, above, pos=0.5, text=fpacol!90!black] {$B$} (add);
\end{tikzpicture}}
  \caption{\textbf{Architecture of the FPA language model.}
    (a)~Pre-norm decoder with $L=12$ blocks of width 768.
    (b)~Multi-head attention with $H=12$ heads of dimension $d=64$, in which a fourth linear map gives every head its increments $\inc$ and offsets $g$ and $h$.
    (c)~One head, in which the bias $B$ with the entries $b_{ts}$ joins the scaled scores before the causal mask.
    RangeMin returns $\rmin_{ts}=\min_{s<u\le t}\fld_u$ for every query~$t$ and key~$s$, and Sub returns $\rmin_{ts}-\lvl_s$.
    The parts of FPA are orange, and the one-layer probes use the attention of panel~(b) alone on token embeddings.
    FPA, First Passage Attention.
    RMSNorm, root mean square normalisation.
    SwiGLU, gated linear unit with the Swish activation.}
  \label{fig:22}
\end{figure}

\begin{table}[htbp]
  \caption{\textbf{Bias that each method adds to the logit} $q_t^{\top}k_s/\sqrt d$ of key~$s$ at query~$t$, with the parameters that it adds to a layer of width $D=768$ with $H=12$ heads and its positional encoding in the language models.
    FoX has one gate $f_u$ per head with a weight vector $w$, a bias $c$ and the logistic function $\sigma$.
    Selective Attention takes $q^{(1)}$ and $k^{(1)}$ from its first head, shares the bias among all heads and never masks the start token.
    FPA, First Passage Attention.
    FoX, Forgetting Transformer.
    Sel.\ Att., Selective Attention.
    RoPE, rotary position embedding.}
  \label{tab:14}
  \centering
  \small
  \begin{tabular}{@{}llll@{}}
\toprule
Method & Bias $b_{ts}$ added to the logit & Parameters per layer & Positions \\
\midrule
FPA & $\str\min\bigl(0,\min_{s<u\le t}\fld_u-\lvl_s\bigr)$ with $\fld=W+g$, $\lvl=W+h$ & $3H(D+1)+H=27{,}696$ & none \\
FoX & $\sum_{s<u\le t}\log f_u$ with $f_u=\sigma(w^{\top}x_u+c)$ & $H(D+1)=9{,}228$ & none \\
Sel.\ Att. & $-\sum_{s<u<t}\max\bigl(0,\,q_u^{(1)\top}k_s^{(1)}/\sqrt d\bigr)$ & 0 & RoPE \\
Softmax & 0 & 0 & RoPE \\
\bottomrule
\end{tabular}

\end{table}

\subsection{Language models}
\label{app:D.2}

\Cref{tab:15} describes the four language models and \cref{tab:16} their common training.
All four used the same data in the same order, the same schedule and the hyperparameters of the 125M recipe of FoX, without a search.
They differ in the attention, its positional encoding, its kernel and the split of each batch into micro-batches, which changes the gradient only through rounding.

\begin{table}[htbp]
  \caption{\textbf{The four language models.}
    Parameters count the tied embeddings once.
    Throughput and time cover the whole run of 5150 steps with its validations and checkpoints on the GPU of \cref{app:C}.
    For FPA, throughput, time and memory come from an exploratory run of the same configuration with seed 2202 and the kernel of \cref{sec:4.3}, outside the preregistration.
    \Cref{tab:3} instead gives the throughput that we measured in the benchmark over 12 training steps after 8 steps of warm-up.
    FPA, First Passage Attention.
    FoX, Forgetting Transformer.
    Sel.\ Att., Selective Attention.
    MLP, multilayer perceptron.
    SwiGLU, gated linear unit with the Swish activation.
    RMSNorm, root mean square normalisation.
    RoPE, rotary position embedding.
    SDPA, the scaled dot-product attention of PyTorch.
    GPU, graphics processing unit.
    GiB, gibibyte.}
  \label{tab:15}
  \centering
  \small
  \begin{tabular}{@{}lcccc@{}}
\toprule
 & FPA & FoX & Sel.\ Att. & Softmax \\
\midrule
Parameters & 109,862,208 & 109,640,592 & 109,529,856 & 109,529,856 \\
\quad of the attention mechanism & 332,352 & 110,736 & 0 & 0 \\
Layers and width & \multicolumn{4}{c}{12 and 768} \\
Heads $\times$ head dimension & \multicolumn{4}{c}{12 $\times$ 64} \\
MLP & \multicolumn{4}{c}{SwiGLU with hidden width 2048} \\
Normalisation & \multicolumn{4}{c}{RMSNorm before each sublayer and before the output layer} \\
Linear layers & \multicolumn{4}{c}{without biases, except the field projection of FPA and the gates of FoX} \\
Vocabulary & \multicolumn{4}{c}{32,000, input and output embeddings tied} \\
Positional encoding & none & none & RoPE & RoPE \\
Attention kernel & ours, Triton & of FoX, Triton & ours, Triton & SDPA \\
Micro-batch $\times$ accumulation & 4 $\times$ 64 & 4 $\times$ 64 & 2 $\times$ 128 & 4 $\times$ 64 \\
Tokens per second & 31,473 & 34,870 & 19,298 & 28,517 \\
Training time (hours) & 23.8 & 21.5 & 38.9 & 26.3 \\
Peak GPU memory (GiB) & 9.3 & 9.0 & 5.8 & 9.2 \\
\bottomrule
\end{tabular}

\end{table}

\begin{table}[htbp]
  \caption{\textbf{Training of the language models}, the same for all four.
    FPA, First Passage Attention.
    Sel.\ Att., Selective Attention.
    FoX, Forgetting Transformer.
    ReLU, rectified linear unit.}
  \label{tab:16}
  \centering
  \small
  \begin{tabular}{@{}ll@{}}
\toprule
Setting & Value \\
\midrule
Data & SlimPajama, windows of 2049 tokens in one fixed order, each used at most once \\
Tokenizer & that of Mistral 7B, 32,000 tokens \\
Context & 2048 tokens \\
Batch & 256 windows, 524,288 tokens \\
Steps & 5150, 2.70B tokens \\
Optimiser & AdamW with $(0.9, 0.95)$ \\
Learning rate & $2\times10^{-3}$, linear warm-up over 512 steps, cosine decay to 0 \\
Weight decay & 0.1 on weight matrices and embeddings, none on gains, biases and strengths \\
Gradient clipping & 1.0 \\
Precision & half-precision autocast with loss scaling, the field of FPA and the selection of Sel.\ Att.\ in single precision \\
Initialisation & embeddings $\mathcal N(0,0.02^2)$, linear layers as in PyTorch, $\str=1$ in FPA, gate bias 0 in FoX \\
Minimum of FPA & hard in the forward pass, gradient of the soft minimum with $\temp=1$ \\
Selection of Sel.\ Att. & gradient of the ReLU, as in its paper \\
Seed & 2201 for the initialisation and the order of the data, in all models \\
Validation & every 250 steps on 1024 windows \\
Hyperparameter search & none, the 125M recipe of FoX in all models \\
\bottomrule
\end{tabular}

\end{table}

\subsection{Software and compute}
\label{app:D.3}

All runs used PyTorch 2.5.1 with CUDA 12.1 and Triton 3.1.0 under Python 3.12 on the GPU of \cref{app:C}.
The preregistered batch of probes and tasks had 1,815 runs, 15 seeds in each of 121 settings, which took 73.5 hours summed over the runs.
Training the four language models with the kernels of \cref{tab:15} took 110.5 hours.

\subsection{One-layer probes and tasks from earlier work}
\label{app:D.4}

\Cref{tab:17} gives the training of the probes of \cref{app:B} and of the small decoders of \cref{app:B.7}, and \cref{tab:18,tab:19} give their models.
\Cref{tab:20} gives the declared search, and \cref{tab:21} the configurations of the reported runs.
The pilots selected these configurations by the rules of \cref{app:B}, and a file with a hash froze them before the reported runs.

\begin{table}[htbp]
  \caption{\textbf{Training of the one-layer probes and of the tasks from earlier work.}
    FPA, First Passage Attention.
    MQAR, multi-query associative recall.}
  \label{tab:17}
  \centering
  \small
  \begin{tabular}{@{}lll@{}}
\toprule
 & One-layer probes & Tasks from earlier work \\
\midrule
Optimiser & \multicolumn{2}{l}{AdamW with $(0.9, 0.98)$, weight decay 0.01 on every parameter} \\
Batch & \multicolumn{2}{l}{64 sequences, drawn afresh at every step} \\
Steps & 4000, 16,000 on the graded depths & 4000, 8000 for MQAR \\
Warm-up & 200 steps & 10\% of the steps \\
Schedule & \multicolumn{2}{l}{cosine decay to 0, gradient clipping at 1.0} \\
Precision & \multicolumn{2}{l}{single} \\
Loss & cross-entropy of the weights at the final query & cross-entropy at the target positions \\
Width & 64, the token embedding & 128, and 64 for flip-flop \\
Seeds & \multicolumn{2}{l}{1801 to 1815 for the reported runs, 1616 to 1619 for tuning} \\
FPA & \multicolumn{2}{l}{straight-through minimum with $\temp=1$, initial strength $\str=1$ per head} \\
\bottomrule
\end{tabular}

\end{table}

\begin{table}[htbp]
  \caption{\textbf{Models of the one-layer probes}, with one supervised attention layer on token embeddings of width 64 and the head dimension~$d$.
    On the alive sets every key has a learned offset per position and head, FPA, FoX, Selective Attention and softmax attention add RoPE, ALiBi adds its fixed slope, and NoPE and stick-breaking attention use the offsets alone.
    The positive controls of the alive sets use $d=r+2$, and the field-only FPA of \cref{tab:5} uses $d=1$ without positional information.
    The two- and four-layer models put one or three pre-norm blocks with 4 heads and an MLP of width 256 before the supervised layer.
    On the alive sets and the two walks an MLP of the one-hot sequence with a hidden layer of width 256 gives the final query.
    FPA, First Passage Attention.
    FoX, Forgetting Transformer.
    RoPE, rotary position embedding.
    ALiBi, attention with linear biases.
    NoPE, no positional encoding.
    MLP, multilayer perceptron.}
  \label{tab:18}
  \centering
  \small
  \begin{tabular}{@{}llllll@{}}
\toprule
Family & Heads $\times$ $d$ & Positions & Final query & Steps & Test set \\
\midrule
\multicolumn{6}{@{}l}{\emph{One-layer probes, width 64}} \\
Alive sets & 1 $\times$ 4 & key offsets, RoPE or ALiBi & MLP of the sequence & 4000 & all $2^{r-1}$ \\
\quad two and four layers & 1 $\times$ 4 & RoPE & last token & 4000 & all $2^{r-1}$ \\
Graded depths & 1 $\times$ 8 & none & token \tokq & 16,000 & 2,000 per set \\
Two walks & 2 $\times$ 2 & query and key offsets & MLP of the sequence & 4000 & 8,192 \\
Stack & 1 $\times$ 8 & RoPE & last token & 4000 & 1,000 \\
Retrieval & 1 $\times$ 8 & none & query token & 4000 & 2,000 \\
\bottomrule
\end{tabular}

\end{table}

\begin{table}[htbp]
  \caption{\textbf{Models of the tasks from earlier work}, decoders with LayerNorm before each sublayer, an MLP with the Gaussian error linear unit, tied embeddings initialised as $\mathcal N(0,0.02^2)$ and the head dimension~$d$.
    The last three columns give the parameters of FPA, of softmax attention with RoPE and of an LSTM of the same depth and width.
    In variable assignment the attention models use learned absolute positions, elsewhere RoPE, and stick-breaking attention uses none.
    Variable assignment also has 2000 test sequences with only two values, and flip-flop 10,000 sparse and 2000 dense ones.
    MLP, multilayer perceptron.
    MQRAR, multi-query repeated associative recall.
    MQAR, multi-query associative recall.
    FPA, First Passage Attention.
    RoPE, rotary position embedding.
    LSTM, long short-term memory network.}
  \label{tab:19}
  \centering
  \small
  \setlength{\tabcolsep}{3.5pt}
  \begin{tabular}{@{}llllllllll@{}}
\toprule
Task & Layers $\times$ width & Heads $\times$ $d$ & MLP & Positions & Steps & Test sequences & FPA & Softmax & LSTM \\
\midrule
Variable assignment & 2 $\times$ 128 & 2 $\times$ 64 & 512 & learned & 4000 & 2,000 & 534,672 & 533,120 & 393,088 \\
MQRAR & 2 $\times$ 128 & 2 $\times$ 64 & 512 & RoPE & 4000 & 2,000 & 462,992 & 461,440 & 329,856 \\
Flip-flop & 1 $\times$ 64 & 2 $\times$ 32 & 256 & RoPE & 4000 & 2,000 & 50,632 & 50,240 & 33,664 \\
Copying & 2 $\times$ 128 & 2 $\times$ 64 & 512 & RoPE & 4000 & 2,000 & 399,888 & 398,336 & 266,752 \\
Counting & 2 $\times$ 128 & 2 $\times$ 64 & 512 & RoPE & 4000 & 2,000 & 407,696 & 406,144 & 274,560 \\
MQAR & 2 $\times$ 128 & 2 $\times$ 64 & 512 & RoPE & 8000 & 2,000 & 1,445,904 & 1,444,352 & 1,312,768 \\
\bottomrule
\end{tabular}

\end{table}

\begin{table}[htbp]
  \caption{\textbf{Declared search of the probes and of the tasks from earlier work.}
    Each configuration ran with two tuning seeds, later four, and a grid whose end was selected was extended once by a decade (\cref{tab:4} counts the runs).
    The 12 learning rates are equally spaced on a logarithmic scale and served softmax attention with RoPE, NoPE or ALiBi and stick-breaking attention on the alive sets.
    After the pilots the straight-through minimum replaced the four definitions of FPA, and its learning rate was selected again.
    On retrieval FoX and Selective Attention kept the settings selected on the alive sets and the two walks, on the stack they used the default initialisation of PyTorch and the scale 1, and on the tasks from earlier work the gate bias 0 and the scale 1.
    FPA, First Passage Attention.
    FoX, Forgetting Transformer.
    Sel.\ Att., Selective Attention.
    DCP, Dynamic Context Pruning.
    RoPE, rotary position embedding.
    NoPE, no positional encoding.
    ALiBi, attention with linear biases.}
  \label{tab:20}
  \centering
  \small
  \setlength{\tabcolsep}{4pt}
  \begin{tabular}{@{}lllll@{}}
\toprule
Method & Second setting & Values & Learning rates & Runs \\
\midrule
\multicolumn{5}{@{}l}{\emph{Alive sets, graded depths and two walks, per method and family}} \\
FPA & definition of the minimum & hard, soft with $\temp=1$, annealed, soft then hard & $\{1,3,10\}\times10^{-3}$ & 24 \\
FoX & initial bias of the gates & 0, 2, 4, 6 & $\{1,3,10\}\times10^{-3}$ & 24 \\
Sel.\ Att. & selection scale $\sasc$ & 1, 1/3, 3, 10 & $\{1,3,10\}\times10^{-3}$ & 24 \\
DCP & initial interaction bias & 2, 0, 4, 8 & $\{1,3,10\}\times10^{-3}$ & 24 \\
Softmax, stick-breaking & none & & 12 from $10^{-4}$ to $10^{-1}$ & 24 \\
\multicolumn{5}{@{}l}{\emph{Stack, retrieval, ablations and deeper models, per method and family}} \\
Every method & fixed & & $\{1,3,10\}\times10^{-3}$ & 6 \\
\multicolumn{5}{@{}l}{\emph{Tasks from earlier work, per method and task}} \\
Every method & fixed & & $\{1,3,10\}\times10^{-3}$ & 6 \\
\bottomrule
\end{tabular}

\end{table}

\begin{table}[htbp]
  \caption{\textbf{Learning rates of the reported runs}, with the second setting in brackets, namely the initial gate bias of FoX, the selection scale of Selective Attention and the initial bias of the interaction scores of DCP.
    The alive sets used one configuration for every~$r$ and, for the competitors, for their positive controls.
    Selective Attention with the scale 1 of its paper used $10^{-3}$ on the alive sets, and with one mask per head the scale 3 and $3\times10^{-2}$ on the two walks.
    The two- and four-layer models of FoX and of softmax attention used $3\times10^{-3}$ and $10^{-2}$.
    FPA used $3\times10^{-3}$ with the field only and $10^{-2}$ with the level of the query on the alive sets.
    On retrieval it used $3\times10^{-3}$ without the level of the key, $3\times10^{-2}$ with the level of the query and $10^{-3}$ without content, and on MQAR $10^{-3}$ without a bias in its first layer.
    FPA, First Passage Attention.
    FoX, Forgetting Transformer.
    Sel.\ Att., Selective Attention.
    DCP, Dynamic Context Pruning.
    RoPE, rotary position embedding.
    NoPE, no positional encoding.
    ALiBi, attention with linear biases.
    MQRAR, multi-query repeated associative recall.
    MQAR, multi-query associative recall.
    LSTM, long short-term memory network.}
  \label{tab:21}
  \centering
  \small
  \setlength{\tabcolsep}{4pt}
  \begin{tabular}{@{}lllllll@{}}
\toprule
 & Alive sets & Graded depths & Two walks & Stack & Retrieval & \\
\midrule
FPA & 0.03 & 0.003 & 0.001 & 0.1 & 0.01 & \\
FPA, sum of dips &  & 0.003 &  & 0.1 &  & \\
FoX & 0.03 (6) &  &  & 0.003 & 0.03 (6) & \\
Sel.\ Att. & 0.03 (10) & 0.03 (1) & 0.03 (1/3) & 0.001 (1) & 0.03 (1/3) & \\
DCP &  &  & 0.01 (0) &  &  & \\
Softmax with RoPE & 0.0534 &  &  & 0.001 &  & \\
Softmax with NoPE & 0.0152 &  &  &  & 0.003 & \\
ALiBi & 0.00433 &  &  &  &  & \\
Stick-breaking & 0.0152 &  &  &  & 0.1 & \\
\midrule
 & Variable assignment & MQRAR & Flip-flop & Copying & Counting & MQAR \\
\midrule
FPA & 0.001 & 0.001 & 0.01 & 0.003 & 0.01 & 0.0003 \\
FoX & 0.001 (0) & 0.003 (0) & 0.01 (0) & 0.003 (0) & 0.01 (0) & 0.001 (0) \\
Sel.\ Att. & 0.003 (1) & 0.001 (1) & 0.001 (1) & 0.003 (1) & 0.01 (1) & 0.003 (1) \\
Softmax with RoPE & 0.0001 & 0.001 & 0.001 & 0.01 & 0.01 & 0.001 \\
Stick-breaking & 0.001 & 0.001 & 0.01 & 0.01 & 0.001 & 0.003 \\
LSTM & 0.01 & 0.003 & 0.01 & 0.01 & 0.01 & 0.003 \\
\bottomrule
\end{tabular}

\end{table}

\clearpage
\section{Kernel and cost}
\label{app:E}

Our kernel computes FPA with its straight-through minimum in fused Triton kernels on a grid of blocks of 64 positions shared by queries and keys.
For a query block and an earlier key block the forward pass forms the range minimum from the suffix minima of the key block, the minimum over the blocks in between and the prefix minima of the query block.
It runs the longest query blocks first.
The backward pass needs the gradient of the soft minimum, whose normaliser has the same three parts.
It computes the gradient of the field in four parts, for the positions in the key block, in the query block, in the blocks in between and in the diagonal block.
All exponentials are powers of two of quantities in the log-two domain, the key side works on transposed tiles, and no value is added atomically, so the gradients are deterministic.
The evaluated FPA language model trained with an earlier version of this kernel.
Against a reference implementation in double precision, that version had relative errors of at most $5.1\times10^{-4}$ on the GPU, except $3.3\times10^{-3}$ in the gradient of the strength for tied values of the field.
In exploratory checks the current version had the same largest errors, and at most $6.7\times10^{-4}$ in backward passes over 4096 and 8192 positions.
In 200 exploratory training steps the two versions differed less than one version did under a different order of summing its micro-batches.

The benchmark ran on the GPU of \cref{app:C} in half precision, with every method timed in the same process and a probe for contention before and after each part.
\Cref{tab:22} lists one attention layer at five shapes, and \cref{tab:23} the eviction of keys whose bias fell below a threshold.
The bias of FoX inside our kernel took 1.05 times as long as the kernel of FoX, so the extra time of FPA in \cref{tab:3} comes from the operations that FPA adds.
The decoding times of \cref{tab:3} hardly changed with the length of the prompt, from 1024 to 16,384 tokens, because fixed costs dominate at batch size 1.
Our decoding code for Selective Attention also keeps a single-precision copy of the selection key, which raises its measured cache to 76{,}848 bytes per position, whereas \cref{tab:3} counts the 73{,}776 bytes that its definition needs.

\begin{table}[htbp]
  \caption{\textbf{One attention layer} with 12 heads of dimension 64 at batch $\times$ length, in milliseconds (medians of 20 rounds).
    The ratios in \cref{sec:4.3} are medians over the rounds of the ratio within each round.
    SDPA, the scaled dot-product attention of PyTorch, with its memory-efficient kernel on this GPU.
    FPA, First Passage Attention.
    FoX, Forgetting Transformer.
    Sel.\ Att., Selective Attention.}
  \label{tab:22}
  \centering
  \small
  \begin{tabular}{@{}lcccccccc@{}}
\toprule
 & \multicolumn{4}{c}{Forward, ms} & \multicolumn{4}{c}{Forward and backward, ms} \\
\cmidrule(lr){2-5}\cmidrule(lr){6-9}
Shape & FPA & FoX & SDPA & Sel.\ Att. & FPA & FoX & SDPA & Sel.\ Att. \\
\midrule
16$\times$512 & 0.37 & 0.34 & 0.27 & 0.67 & 1.76 & 1.13 & 1.64 & 3.54 \\
8$\times$1024 & 0.51 & 0.47 & 0.46 & 1.12 & 2.61 & 1.68 & 2.50 & 5.79 \\
4$\times$2048 & 0.78 & 0.74 & 0.87 & 2.00 & 4.31 & 2.87 & 4.42 & 10.45 \\
2$\times$4096 & 1.33 & 1.27 & 1.68 & 3.75 & 7.76 & 5.47 & 8.14 & 19.50 \\
1$\times$8192 & 2.39 & 2.33 & 3.30 & 7.29 & 15.19 & 10.45 & 15.79 & 37.37 \\
\bottomrule
\end{tabular}

\end{table}

\begin{table}[htbp]
  \caption{\textbf{Eviction} of the keys whose bias fell below $-\theta$ (FPA and FoX) or whose mask rose above $\theta$ (Selective Attention), with the loss change in nats per token and the share of keys kept on SlimPajama test windows.
    Softmax attention used a sliding window of the mean size that FPA keeps.
    Block skipping is the pruning of \citet{lin2025adaptive}.
    FPA, First Passage Attention.
    FoX, Forgetting Transformer.
    Sel.\ Att., Selective Attention.}
  \label{tab:23}
  \centering
  \small
  \setlength{\tabcolsep}{3pt}
  \begin{tabular}{@{}llcccc@{}}
\toprule
 & & \multicolumn{2}{c}{Length 2048} & \multicolumn{2}{c}{Length 8192} \\
\cmidrule(lr){3-4}\cmidrule(lr){5-6}
Method & Rule & loss change & kept & loss change & kept \\
\midrule
FPA & bias $\le-5$ & $+$0.0063 & 29\% & $+$0.0054 & 23\% \\
 & bias $\le-10$ & $+$0.0004 & 34\% & $+$0.0005 & 26\% \\
 & bias $\le-20$ & $+$0.0001 & 41\% & 0.0000 & 30\% \\
FoX & bias $\le-5$ & $+$0.1836 & 13\% & $+$0.2675 & 6\% \\
 & bias $\le-10$ & $+$0.0195 & 15\% & $+$0.0227 & 7\% \\
 & bias $\le-20$ & $+$0.0059 & 19\% & $+$0.0055 & 9\% \\
 & block skipping, 10 & $+$0.0007 & 24\% & $+$0.0007 & 10\% \\
Sel.\ Att. & mask $\ge5$ & $+$0.0001 & 33\% & $-$0.0006 & 15\% \\
 & mask $\ge10$ & 0.0000 & 44\% & 0.0000 & 22\% \\
Softmax & window, mean size of FPA & $+$0.1151 & 43\% & $-$2.8178 & 36\% \\
 & the same $+$ first key & $+$0.0559 & 43\% & $-$2.8077 & 36\% \\
\bottomrule
\end{tabular}

\end{table}

Beyond the training length, softmax attention without eviction diverged, so its sliding windows lowered its loss at length 8192.
FPA retrieved every passkey at 8192 and 16,384 tokens under every threshold, with 24\% to 27\% of its cache, whereas the windows of softmax attention retrieved at most 20\%.

\end{document}